%% file: main.tex
\PassOptionsToPackage{numbers, compress}{natbib}
\documentclass{article}

\usepackage[preprint]{neurips_2026}

\usepackage[utf8]{inputenc}
\usepackage[T1]{fontenc}
\usepackage{hyperref}
\usepackage{url}
\usepackage{booktabs}
\usepackage{amsfonts}
\usepackage{amsmath}
\usepackage{amssymb}
\usepackage{nicefrac}
\usepackage{microtype}
\usepackage{xcolor}
\usepackage{graphicx}
\usepackage{multirow}
\usepackage{algorithm}
\usepackage{algorithmic}
\usepackage{tikz}
\usetikzlibrary{arrows.meta}
\usepackage{docmute}

\definecolor{p1fill}{HTML}{FFF3E0}
\definecolor{p1stroke}{HTML}{FF9800}
\definecolor{p2fill}{HTML}{E0F2F1}
\definecolor{p2stroke}{HTML}{009688}
\definecolor{latentfill}{HTML}{BBDEFB}
\definecolor{latentstroke}{HTML}{1976D2}
\definecolor{actorfill}{HTML}{E1BEE7}
\definecolor{actorstroke}{HTML}{8E24AA}
\definecolor{groupfill}{HTML}{ECEFF1}
\definecolor{groupstroke}{HTML}{90A4AE}
\definecolor{startfill}{HTML}{E8F5E9}
\definecolor{startstroke}{HTML}{2E7D32}
\definecolor{decodedfill}{HTML}{F5F5F5}

\newcommand{\R}{\mathbb{R}}

\DeclareMathOperator*{\argmin}{arg\,min}
\DeclareMathOperator{\KL}{KL}
\DeclareMathOperator{\sg}{sg}
\newcommand{\given}{\,|\,}
\newcommand{\reach}{\nu}          
\newcommand{\lreach}{\hat{\nu}}   

\newtheorem{definition}{Definition}
\newtheorem{proof}{Proof}
\newtheorem{theorem}{Theorem}
\newtheorem{corollary}{Corollary}
\newtheorem{lemma}{Lemma}
\newtheorem{remark}{Remark}

\title{NashDreamer: Model-Based Reinforcement Learning for Zero-Sum Imperfect-Information Games}

\author{%
  \hfill Tom\'{a}\v{s} Hole\v{c}ek \hfill Viliam Lis\'{y} \hfill ~\\
  Artificial Intelligence Center, Department of Computer Science\\
  Faculty of Electrical Engineering, Czech Technical University in Prague\\
  \texttt{holecto6@fel.cvut.cz, viliam.lisy@agents.fel.cvut.cz}\\
}

\begin{document}

\maketitle

\begin{abstract}
Model-based reinforcement learning (MBRL) has achieved remarkable results in single-agent domains, yet its extension to competitive imperfect information games (IIGs) remains underexplored. In multi-agent settings, opponent-induced non-stationarity complicates the learning process, and decentralized model learning faces severe identifiability barriers, which we argue make centralized model learning a mathematical necessity. Building on this analysis, we propose NashDreamer, a principled MBRL framework for two-player zero-sum IIGs. It introduces a centralized Multi-Agent Recurrent State-Space Model (MARSSM) that decouples environment dynamics from the effect of players' strategies on their individual observations. 
NashDreamer is designed to use arbitrary policy gradient algorithms and inherits their convergence guarantees towards Nash equilibria under an idealized model.
Empirical evaluations across four benchmark games demonstrate that NashDreamer substantially improves sample efficiency over model-free baselines early in the training.
Finally, we theoretically analyze the architecture's optimization landscape, identifying the vulnerability of the Dreamer family of algorithms to posterior collapse in stochastic environments. We leave it as an open challenge.
\end{abstract}

\section{Introduction}
\label{sec:intro}

Model-based reinforcement learning (MBRL) has emerged as a powerful paradigm for sample-efficient learning. MuZero~\citep{muzero} and its extensions demonstrate that learned latent models can serve as effective planning abstractions, while the Dreamer series~\citep{dreamerv1,dreamerv2,dreamerv3, dreamerv4} shows that world models can generate imagination trajectories that reduce sample requirements. These approaches have matched or surpassed the state of the art across diverse single-agent domains, including robotics~\citep{daydreamer}, autonomous driving~\citep{driving1,driving2}, and nuclear fusion control~\citep{tokamak}.

Complex imperfect information adversarial domains currently force a trade between fidelity and tractability. Automated cyber defense is studied on emulators made fast by simplifying the attacker, the defender or the network itself~\citep{nasimemu}, and dogfight simulators, such as the one in ~\citet{dreamerzerosum} hand each aircraft a complete frame rather than only what lies within its own sensor range. Sample-efficient generative MBRL offers the opposite trade, recovering speed from a small budget of interactions with the unsimplified environment. To lay the foundation, we investigate the Dreamer-style paradigm for multi-player imperfect information games (IIGs), leveraging generative world models purely to synthesize large artificial datasets for scalable, sample-based policy optimization.

Adapting MBRL to IIGs, however, exposes fundamental identifiability barriers. In a standard decentralized setting, a learning agent cannot distinguish between the objective stochastic dynamics of the environment and the shifting policy of its opponent~\citep{amirchang,oliehoek}. We argue that this ``policy-physics entanglement'' dictates that Centralized Training with Decentralized Execution (CTDE) is not merely an architectural convenience, but a mathematical necessity to learn a sound, stationary world model.

We further provide the following contributions: (1) We propose \emph{NashDreamer}, a principled CTDE framework for two-player zero-sum IIGs. It introduces a centralized Multi-Agent Recurrent State-Space Model (MARSSM), which can be combined with game-theoretically sound policy-gradient methods, to facilitate convergence towards the Nash Equilibrium. (2) We show that DreamerV3 ``KL-balancing'' can cause posterior collapse, highlighting the challenge of mitigating latent space non-stationarity while maintaining a theoretically sound framework. (3) We provide thorough empirical evaluations showing that NashDreamer improves sample efficiency compared to model-free baselines across various domains.

NashDreamer is not meant to replace well-tuned model-free policy-gradient methods, which reach excellent policies given abundant simulator access~\citep{pgconvergence}. It is a \emph{sample-efficiency layer} agnostic to the optimizer running inside imagination. The appropriate comparison is thus a policy-gradient method $X$ against NashDreamer[$X$] at an equal budget of \emph{real} environment interactions. This is the regime of our motivating applications, such as automated cyber defense or expensive simulations, where a real interaction, rather than a gradient step, is the dominant cost.

\input{main_body}

\begin{ack}
We thank the anonymous reviewers, whose thorough and constructive feedback prompted several additional experiments and analyses that substantially improved both the empirical evaluation and the presentation of this work.

This research is supported by Czech Science Foundation (GA25-18353S). Computational resources were supplied by (e-INFRA CZ LM2018140) supported by the Ministry of Education, Youth and Sports of the Czech Republic.
\end{ack}

\bibliographystyle{plainnat}
\bibliography{references}


\input{appendices}

\input


\newpage

\end{document}

%% file: main_body.tex
\section{Background}
\label{sec:background}

\label{sec:posg}

We formalize our target domains as two-player zero-sum (2p0s) simultaneous-move games.
We use a custom variant of Partially Observable Stochastic Games (POSGs) inspired by~\citep{fosg}.
We denote $\mathcal{N} = \{1, 2\}$ the player set with $c$ for the chance player; $\mathcal{S}$ the state space with initial state $s_0$; $\mathcal{A} = \mathcal{A}_1 \times \mathcal{A}_2$ the joint action space, whose elements we write as $a = (a_1, a_2)$; $\mathbb{O} = \mathbb{O}_1 \times \mathbb{O}_2$ is the joint observation space with per-player observations. $T: \mathcal{S} \times \mathcal{A} \to \Delta\mathcal{S}$ is the (partial) transition function ; $O: \mathcal{S} \to \mathbb{O}$ is the (deterministic) observation function\footnote{A stochastic observation function is no restriction either: \citet[fn.~3]{fosg} note that the variant with $O: \mathcal{S} \to \Delta\mathbb{O}$ is equivalent to the deterministic one, since stochastic observations can be emulated by adding a hidden variable to the world state. Recording the drawn observation in that variable additionally makes $O$ a function of the state alone, which is the form we use.}; $R: \mathcal{S} \times \mathcal{A} \to \R$ is the zero-sum reward function ($R_1 = -R_2$); $P: \mathcal{S} \to 2^{\mathcal{N}}$ identifies the acting players at each state; and $L_{i}: \mathcal{S} \to 2^{\mathcal{A_{{\it i}}}}$ specifies legal actions for player ${\it i}$. States where $P(s) = \emptyset$ and the transition function is defined are \emph{chance nodes} \footnote{Chance is the only source of stochasticity in our environments, so the transition function is deterministic everywhere except at chance nodes. The explicit chance player is a formal convenience, and the chance node can equivalently be absorbed by turning the previous transition stochastic. This is how the world model treats chance during learning.}; states with undefined transitions are \emph{terminal} and other nodes are called decision nodes, where we assume $P(s) = \{1,2\}$.
Turn-based games, such as Leduc Poker, are embedded by assigning a \texttt{noop} action to the inactive player (Appendix~\ref{app:environments}).

A \emph{history} $h = (s_0, a^0, \dots, s_t)$ is the full sequence of states and joint actions from the initial state, the ground-truth object no player observes directly; $s(h)$ is its last state, through which we lift the state-indexed functions, writing $T(h,a) := T(s(h),a)$ and likewise for $O$ and $L_i$. Player $i$ observes only their \emph{action-observation history} (AOH) $\tau_i(h) = (o_i^0, a_i^0, \dots, o_i^t)$, and we assume perfect recall. The \emph{information set} (infoset) of player $i$ at $h$ is $I_i(h) = \{h' : \tau_i(h') = \tau_i(h)\}$, with $\mathcal{I}_i$ the set of all of them. A policy $\pi_i: \mathcal{I}_i \to \Delta(\mathcal{A}_i)$ conditions on $\tau_i$ alone and never on $h$, legal-action masks entering only as an implementation-level restriction of its support. We use $\pi$ for optimized policies and reserve $\mu$ for the joint \emph{sampling} policy that collects data. 
We assume \emph{collective observability}: the joint action-observation history identifies the true underlying history, i.e., $I_1(h) \cap I_2(h) = \{h\}$. \footnote{We note that the players identify $h$ \emph{jointly}, not individually, so this is not a perfect-information assumption}. It is a technical convenience rather than a restriction, since chance outcome eventually observed by some player can be deferred to the point where it is first observed ~\citep{thompson}. It is also the assumption that makes public belief states well defined~\citep{rebel}, and all our benchmarks satisfy it.

A \emph{Nash equilibrium} (NE)~\citep{nasheq} is a joint policy $\pi^*$ where no player can improve their utility by unilateral deviation. We measure distance to NE using \emph{NashConv}~\citep{openspiel}: $\text{NashConv}(\pi) = u(\text{BR}(\pi_2), \pi_2) - u(\pi_1, \text{BR}(\pi_1))$, where $\text{BR}(\pi_i)$ is the best response of the player $i$'s opponent.

\subsection{DreamerV3}
\label{sec:dreamer}

DreamerV3~\citep{dreamerv3} learns a latent world model, the Recurrent State-Space Model (RSSM), for acting in POMDPs. The model state $m_t = [\hat{h}_t, z_t]$ concatenates a deterministic recurrent state $\hat{h}_t$ produced by a sequence model $\hat{h}_t = \text{seq}(m_{t-1}, a_{t-1})$ and a stochastic categorical state $z_t$ sampled from either the posterior encoder $q^{\theta}_t = \text{enc}(\hat{h}_t, o_t)$ (during training with real observations) or the prior dynamics predictor $p^{\theta}_t = \text{dyn}(\hat{h}_t)$ (during imagination). The RSSM is trained end-to-end by optimizing:
\vspace{-1mm}
\begin{equation}
    \mathcal{L}^{\text{model}} = \sum_t^T \beta^{\text{enc}} \mathcal{L}^{\text{enc}}_t + \beta^{\text{dyn}} \mathcal{L}^{\text{dyn}}_t + \beta^{\text{pred}} \mathcal{L}^{\text{pred}}_t,
    \label{eq:dreamer_loss}
\vspace{-2mm}
\end{equation}
The prediction loss $\mathcal{L}^{\text{pred}}_t$ trains decoders to reconstruct rewards, continuation flags, and observations from the model state. The encoder and dynamics losses $\mathcal{L}^{\text{enc}}_t = \text{KL}(q^{\theta}_t \parallel \text{sg}(p^{\theta}_t))$ and $\mathcal{L}^{\text{dyn}}_t = \text{KL}(\text{sg}(q^{\theta}_t) \parallel p^{\theta}_t)$ are both KL divergences between the posterior and prior ($\text{sg}$ denotes stop-gradient). 

These asymmetric losses perform ``KL balancing'': $\mathcal{L}^{\text{dyn}}_t$ trains the dynamics to match the encoder, while $\mathcal{L}^{\text{enc}}_t$ serves as an encoder regularizer to remain predictable by the dynamics.

From trained model states, Dreamer generates \emph{imagination trajectories} using the dynamics predictor, effectively amplifying the training data. For policy optimization, the actor uses REINFORCE with an entropy bonus, and the critic uses TD($\lambda$).

\subsection{Game-Theoretic Policy Optimization}
\label{sec:rnad}

Policy-gradient methods developed for single-agent problems are often not sound in imperfect-information games: the online-learning dynamics they descend from are known to cycle rather than converge to an NE, in matrix games~\citep{forelcycle} and in sequential IIGs~\citep{rnadtheory} alike. Regularizing toward a reference policy has proven the successful remedy, recovering last-iterate convergence.

Regularized Nash Dynamics (RNaD)~\citep{rnadstratego} regularizes in \emph{reward} space, subtracting a KL penalty against a reference policy $\pi^{\text{reg}}$ from the payoff, which preserves the zero-sum structure. Periodically resetting $\pi^{\text{reg}} \leftarrow \pi$ then solves a sequence of regularized games whose solutions converge to an NE of the original one. Magnetic Mirror Descent (MMD)~\citep{mmd} instead regularizes in \emph{policy} space, pairing a KL term toward a fixed \emph{magnet} policy with a trust-region term on consecutive iterates. The difference in guarantees matters for the comparisons that follow: RNaD converges to an NE, whereas MMD under the fixed uniform magnet~\citep{pgconvergence} that we adopt has convergence guarantees only to a quantal response equilibrium, so its asymptotic exploitability is bounded away from zero.  Appendix~\ref{app:optimizers} details both algorithms, including the off-policy simultaneous-move variant of RNaD~\citep{lamir} that we use.

\section{World Models in Games: The Necessity of Centralized Training}
\label{sec:wm_games}
\label{sec:ctde}

Learning a world model without observing the opponent's actions faces the following fundamental barriers preventing the use of contemporary decentralized approaches in adversarial environments (see Appendix~\ref{app:contemporary}).

\textbf{Action identifiability.} For a given state $s$ and ego action $a_1$, different opponent actions $a_2 \neq a_2'$ must produce distinct next state transition distributions: $T(s, (a_1, a_2)) \neq T(s, (a_1, a_2'))$. When violated, \emph{action aliasing} makes the data insufficient to disambiguate the causes of state changes~\citep{amirchang,ljung}. While this aliasing is benign for a purely forward-predictive model (as the resulting state is identical), it poses a fundamental identifiability barrier for any decentralized model attempting to reverse-engineer the objective environment dynamics independent of the opponent's policy.

\textbf{Policy-physics entanglement.} Without observing $a_2$, a learner cannot distinguish true environment dynamics $T$ from a modified dynamic $T'$ where the opponent's strategy artificially mimics the effects of $T$~\citep{amirchang}. A decentralized model is therefore a \emph{joint-system model} rather than an \emph{environment model}. It captures how the world works \emph{against that specific opponent}, failing to capture the objective environment dynamics~\citep{oliehoek}.

\textbf{Adversarial exploitation.} In strictly competitive environments, a strong opponent will actively exploit this identifiability gap, deliberately selecting actions that maximize the mismatch between true physical reality and the ego-agent's corrupted internal representation.

Centralized training bypasses these barriers entirely. By conditioning the world model on the joint action $(a_1, a_2)$ during training, the objective environment physics is perfectly decoupled from the opponent's shifting strategy. We intentionally structure the world model so that it can be used with only the ego-agent partial observations at test time. 

The barrier is directly observable: in perturbed Rock-Paper-Scissors, a decentralized model trained with the identical objective and optimizer learns the components of the game that do not depend on the opponent's action, driving its error on observations, terminal flags and legal-action masks towards zero, while its reward error \emph{grows} over training instead of falling, since without $a_2$ the reward is not a function of its inputs. Appendix~\ref{app:decentralized} reports the comparison.

Centralization also amplifies the usual benefits of a world model. Disentangling the stochasticity of the opponent's policy from the environment dynamics  reduces the real samples needed to learn the model: in Rock-Paper-Scissors, a tabular world model would require only 9 transitions, after which policy training can proceed entirely in imagination. A world model further decouples data collection from policy optimization, so agents may collect with highly exploratory policies while training purely on-policy inside imagination, avoiding the off-policy corrections causing prohibitive variance in IIGs.
\section{NashDreamer}
\label{sec:method}
\label{sec:marssm}

As established in Section~\ref{sec:ctde}, applying decentralized world models to adversarial environments inevitably results in policy-physics entanglement. To resolve this, we introduce the centralized Multi-Agent Recurrent State-Space Model (MARSSM) to learn the objective environment dynamics, while a separate recurrent network handles information aggregation for each player in a decentralized manner. Crucially, to preserve the causal chain of strategic deductions required for accurate information aggregation, we unroll complete trajectories starting at the initial state $t=0$ for the world model training. We detail possible mechanisms allowing training on shorter, mid-trajectory chunks and their respective pitfalls in Appendix~\ref{app:replay}

\begin{definition}
A \textbf{centralized MARSSM} is a tuple $(\text{seq}, \text{enc}, \text{dyn}, \text{iset})$, where:
$\hat{h}_t = \rm{seq}(m_{t-1}, (a_1^{t-1}, a_2^{t-1}))$ is the sequence model, receiving the \emph{joint} action; $q^{\theta}_t = \rm{enc}(\hat{h}_t, (o_1^t, o_2^t))$ is the posterior encoder, inferring the stochastic state $z_t \sim q^{\theta}_t$ from the \emph{joint} current observations; $p^{\theta}_t = \rm{dyn}(\hat{h}_t)$ is the prior dynamics predictor, predicting the stochastic state $z_t \sim p^{\theta}_t$ without access to the current observations; $\hat{I}_i^t = \rm{iset}(\hat{I}_i^{t-1}, a_i^{t-1}, o_i^t)$ is the infoset model for player $i$, recurrently building the latent infoset embedding from their own AOH $\tau_i$ (Section~\ref{sec:posg}).
\end{definition}

The central model state $m_t = [\hat{h}_t, z_t]$ aggregates the private observations of both players and handles the environment dynamics. The infoset model embeds each player's AOH $\tau_i$ into a fixed-size vector $\hat{I}_i^t$, serving as our latent infoset representation.

\subsection{World Model Objective}
\label{sec:worldmodel_training}

Our predictor loss extends DreamerV3 with legal action prediction and per-player observation reconstruction. Given the centralized model state $m_t = [\hat{h}_t, z_t]$, the predictors decode the reward $R_1^t$ for Player~1, a termination flag $d_t$, and, for each player $i$, the legal action mask $\ell_i^t$ and observation $o_i^t$:
\begin{equation}
    \mathcal{L}^{\text{mpred}}_t = \mathcal{L}^{\text{rew}}_t(R_1^t \mid m_t) + \mathcal{L}^{\text{BCE}}_t(d_t \mid m_t) + \sum_{i \in \{1,2\}} \left[ \mathcal{L}^{\text{BCE}}_t(\ell_i^t \mid m_t) + \mathcal{L}^{\text{MSE}}_t(o_i^t \mid m_t) \right],
    \label{eq:pred_loss}
\end{equation}
where $\mathcal{L}^{\text{rew}}$ follows the DreamerV3 categorical parameterization, BCE denotes binary cross-entropy (per element for the legal action mask), and MSE is mean squared error. 

The latent infoset embeddings $\hat{I}_i^t$ are trained by two further losses. The \emph{unifier} loss requires the joint embeddings $(\hat{I}_1^t, \hat{I}_2^t)$ to reconstruct the centralized model state $m_t$, a target well defined precisely because of collective observability (Section~\ref{sec:posg}); the \emph{individual} loss requires each embedding to reconstruct its own player's observation $o_i^t$ and previous action $a_i^{t-1}$, so that it captures local context:
\begin{align}
    \mathcal{L}^{\text{iset}}_t &= \underbrace{\mathcal{L}^{\text{MSE}}_t(\sg(\hat{h}_t) \mid \hat{I}_1^t, \hat{I}_2^t) + \mathcal{L}^{\text{CE}}_t(\sg(z_t) \mid \hat{I}_1^t, \hat{I}_2^t)}_{\text{Unifier Loss}} \quad + \sum_{i \in \{1,2\}} \underbrace{\left[ \mathcal{L}^{\text{MSE}}_t(o_i^t \mid \hat{I}_i^t) + \mathcal{L}^{\text{CE}}_t(a_i^{t-1} \mid \hat{I}_i^t) \right]}_{\text{Individual Loss}}
    \label{eq:iset_loss}
\end{align}
where CE is categorical cross-entropy.

The total model loss combines the standard DreamerV3 KL-balancing losses $\mathcal{L}^{\text{enc}}_t$ and $\mathcal{L}^{\text{dyn}}_t$ (Section~\ref{sec:dreamer}) with our prediction and infoset losses:
\begin{equation}
    \mathcal{L}^{\text{model}} = \sum_t^T \beta^{\text{enc}} \mathcal{L}^{\text{enc}}_t + \beta^{\text{dyn}} \mathcal{L}^{\text{dyn}}_t + \beta^{\text{pred}} \mathcal{L}^{\text{mpred}}_t + \beta^{\text{iset}} \mathcal{L}^{\text{iset}}_t,
    \label{eq:total_loss}
\end{equation}
where we set $\beta^{\text{dyn}} = \beta^{\text{pred}} = \beta^{\text{iset}} = 1$ and $\beta^{\text{enc}} = 0.1$. Following the DreamerV3 convention, all task losses carry weight $1$ while the encoder regularizer is down-weighted, so the posterior is shaped primarily by the prediction targets and only mildly pulled toward the prior. We deliberately do not tune these coefficients per environment, with the exception of Leduc, where we showcase failure modes of the KL balancing loss in stochastic environments (Appendix \ref{app:leducerror}). Figure~\ref{fig:ndwm} visualizes two steps of world-model training.

\subsection{Imagination}
\label{sec:imagination}

Imagination supplies the synthetic data on which the actor-critic is trained. As in DreamerV3, we start one imagined trajectory for every non-chance node of trajectories collected from the real environment, terminal nodes included. Goofspiel-5, for instance, yields five imagined trajectories per real one. The imagined volume per gradient step therefore scales with the length of the game rather than with the batch size alone. We deviate from DreamerV3 in \emph{where} a rollout begins: instead of branching from the encountered node, every rollout restarts from the root state $t=0$ and runs for the maximum trajectory length of the underlying game, for the reasons detailed in Appendix~\ref{app:replay}.

Starting from a model state $m_0$ obtained during world model training, each imagined step reconstructs the per-player legal actions, the reward and the termination flag from $m_t$; samples each actor's action from its policy conditioned on that player's latent infoset and legal actions \footnote{We explicitly mask out the policy output with the legal actions, rather than forcing the network to learn which actions are illegal, which is inefficient}; advances the sequence model on the joint action; samples the next stochastic state from the prior to form $m_{t+1}$; and updates each player's latent infoset from the observations reconstructed at $m_{t+1}$ (see Appendix~\ref{app:imagination_step}).

That last reconstruction is a key difference from standard Dreamer, where the decoder is merely a training signal. Here the infoset model consumes observations, so the decoder becomes an essential generative component of the imagination pipeline. We acknowledge the risks introduced by explicit reconstruction of high-dimensional observations and discuss their mitigation in Appendix \ref{app:compounding_errors}.

Following the CTDE paradigm, the \emph{decentralized} actor for player $i$ is conditioned on $\hat{I}_i^t$ alone, ensuring it cannot access the opponent's private observations. The \emph{centralized} critic, used exclusively during training, operates on the joint representation $(\hat{I}_1^t, \hat{I}_2^t)$, grounding its value estimates in the full game history. All MARSSM parameters are frozen during the imagination phase.Figure~\ref{fig:ndimag} visualizes two imagined steps; Appendix~\ref{app:imagination_step} states the step in full.

\begin{figure}[t]
    \centering
    \resizebox{\linewidth}{!}{\input{diagrams/nash_dreamer_wm}}
    \caption{Two steps of NashDreamer world-model training on a real trajectory. The centralized RSSM is unrolled on the joint action and joint observation, the infoset model separately for each player. All stochastic states are posterior. The KL losses between posterior and prior are omitted; $a_d$ denotes the initial dummy action.}
    \label{fig:ndwm}
\end{figure}

\begin{figure}[t]
    \centering
    \resizebox{0.8\linewidth}{!}{\input{diagrams/nash_dreamer_imagination}}
    \caption{Two imagined steps. Imagination is grounded in a posterior model state taken from world-model training and proceeds with prior stochastic states. The observation decoder is required during imagination, since the infoset model consumes observations.}
    \label{fig:ndimag}
\end{figure}

\subsection{Actor-Critic}
\label{sec:actor_critic}

We choose not to use the DreamerV3 standard REINFORCE, as its fragility towards hyperparameter tuning is unsuitable for MBRL (Appendix~\ref{app:reinforce_vs_rnad}). Among the game-theoretically sound alternatives, such as MMD~\citep{mmd}, we suggest RNaD because: (i) its last-iterate convergence guarantee is what our model-convergence result composes with (Corollary~\ref{cor:ne_convergence}); (ii) it reportedly needs less domain-specific tuning, which matters here because the game it optimizes against is a learned, non-stationary object; and (iii) its off-policy simultaneous-move variant~\citep{lamir} is a natural fit for the replay/exploratory model learning. The choice is nevertheless modular as we demonstrate by also testing with MMD.

We train the actor-critic networks concurrently on both real and imagined trajectories.

While the MARSSM imagination pipeline theoretically permits purely on-policy optimization, we deliberately employ the off-policy formulation in our empirical setup to ensure exact parity to our (necessarily) off-policy model-free RNaD with replay buffer and to permit exploratory policies for the model learning.

\subsection{Model Convergence}
\label{sec:theory_ideal}

To understand how NashDreamer captures environment dynamics, we analyzed the global minimum of the world model objective (all formal definitions and proofs are deferred to Appendix~\ref{app:theory}). We define an \emph{infoset-isomorphic} model ($\mathcal{M}^*$) as one that establishes a one-to-one mapping between latent states and true histories, and clusters latent states into latent infosets if and only if corresponding true histories belong to the same true infoset. 

Our first result is that this structure-preserving model is strictly better, in expected loss, than models with two structural deviations: a ``branch duplication'', which maps one history to several hallucinated latent branches with a distribution over them, and a model that violates the information partitioning of the game\footnote{Either by aliasing distinct true infosets or by subdividing a single true infoset into multiple latent ones.}. The same result fixes the value of the optimum, $\mathbb{E}_\mu[\mathcal{L}^{\rm{model}}(\mathcal{M}^*)] = (\beta^{\text{enc}} + \beta^{\text{dyn}})\,\mathbb{E}_{h,a \sim \mu}[H(T(h,a))]$, so the optimal loss of the ``perfect'' model scales linearly with environment stochasticity (Theorem~\ref{thm:isomorphic_optimal}). Because $\mathcal{M}^*$ captures the POSG structure, which unrolls into an EFG~\citep{fosg}, and because branch duplication is indistinguishable from it from the actor-critic's view, RNaD's convergence result composes with it: an infoset-isomorphic model held fixed, together with exactly realized RNaD dynamics, yields convergence to a Nash equilibrium of the underlying game (Corollary~\ref{cor:ne_convergence}).
That guarantee is idealized in a way worth stating explicitly, since the practical algorithm approximates every one of its assumptions: neural function approximation, sampled trajectories, off-policy corrections, and a \emph{learned} model updated concurrently with the policy.

However, our second result exposes a  profound vulnerability of two-way KL-balancing objective in a stochastic environment :

\begin{theorem}[Vulnerability to Posterior Collapse]
\label{thm:aliasedcounterexample}
Let $\mathcal{M}^{\text{collapse}}$ be a model, where $\text{KL}(\text{enc}(\hat{h}_t, o_t) \parallel \text{dyn}(\hat{h}_t)) = 0$ for all reachable $\hat{h}_t$, and $o_t$ where $p(o_t | \hat{h}_t) > 0$. There exists an environment, sampling policy $\mu$, and loss coefficients, such that $\mathbb{E}_\mu[\mathcal{L}(\mathcal{M}^{\text{collapse}})] < \mathbb{E}_\mu[\mathcal{L}(\mathcal{M}^*)].$
\end{theorem}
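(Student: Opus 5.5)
The plan is a minimal explicit counterexample: one chance node, so that everything reduces to a single-step computation. Take a game where, at $t=0$, chance draws a bit $b\in\{0,1\}$ uniformly, and at $t=1$ both players observe $o_1^1=o_2^1=b$. The game then terminates with zero reward and trivial legal-action masks; all actions are \texttt{noop}, so $\mu$ is irrelevant. The only nonconstant prediction target is the observation $o_i^1=b$, and every other term of $\mathcal{L}^{\text{mpred}}$ and $\mathcal{L}^{\text{iset}}$ can be driven to its minimum by both models alike. For $\mathcal{M}^*$, Theorem~\ref{thm:isomorphic_optimal} gives the value directly: the latent $z_1$ must encode $b$, so the KL part costs $(\beta^{\text{enc}}+\beta^{\text{dyn}})H(T)=(\beta^{\text{enc}}+\beta^{\text{dyn}})\log 2$. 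Since $z_1$ identifies $b$, the remaining terms are at their minimum; in particular the reconstruction MSE is zero.

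For $\mathcal{M}^{\text{collapse}}$, choose the encoder equal to the prior at $t=1$, e.g.\ a deterministic constant $z_1$. Then both KL terms vanish identically, and $m_1$ carries no information about $b$. The best the decoder can do is predict the mean $\tfrac12$ for each player's observation, which gives expected MSE $\tfrac14$ per player, $\tfrac12$ in total under $\beta^{\text{pred}}$. The unifier loss only asks the infoset embeddings to reconstruct $\sg(m_t)$, which is now constant, so it costs nothing. The individual infoset loss reconstructs $o_i$ from $\hat{I}_i$, which still sees the real $b$ during training, so it is unaffected. The loss gap is therefore
\[
\mathbb{E}_\mu[\mathcal{L}(\mathcal{M}^{\text{collapse}})]-\mathbb{E}_\mu[\mathcal{L}(\mathcal{M}^*)] = \tfrac12\beta^{\text{pred}} - (\beta^{\text{enc}}+\beta^{\text{dyn}})\log 2 .
\]
This is negative for, e.g., $\beta^{\text{enc}}=\beta^{\text{dyn}}=1$ and $\beta^{\text{pred}}=1$, since $\tfrac12<2\log 2$. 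Even the paper's defaults $\beta^{\text{enc}}=0.1$ and $\beta^{\text{dyn}}=1$ give $1.1\log 2\approx 0.76>0.5$.

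To make the gap larger, or to allow a smaller KL weight, the same device can be repeated: use $k$ independent coin flips observed over $k$ steps, or a chance node with $n$ outcomes. For $n$ outcomes the entropy cost grows as $\log n$ while the MSE of mean-prediction with one-hot observations stays bounded by $1-1/n$ per player. This shows the phenomenon is not tied to particular coefficients.

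The step I expect to be the main obstacle is checking that $\mathcal{M}^{\text{collapse}}$ as constructed satisfies the hypothesis and incurs no hidden costs. The hypothesis asks for zero KL for all reachable $\hat{h}_t$ and observations of positive probability; the constant encoder satisfies this. The delicate terms are the unifier loss and the deterministic $\hat{h}_t$: $\hat{h}_1$ depends only on the \texttt{noop} action, so it also carries no information about $b$. I would verify term by term that the termination flag, rewards and legal masks are history-independent in this game, so that no term forces information about $b$ into $m_1$. With that checked, the comparison reduces to the single displayed inequality above.
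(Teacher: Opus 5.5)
Your proposal is correct and follows the same route as the paper: an explicit environment with a single uniform chance node, Theorem~\ref{thm:isomorphic_optimal} to price $\mathcal{M}^*$ at $(\beta^{\text{enc}}+\beta^{\text{dyn}})H(T)$, and a collapsed model whose KL, unifier and individual-infoset terms all vanish while the decoders pay only the bounded error of predicting the mean. The paper instead uses $101$ outcomes that shift Player~1's observation and the terminal reward by small symlog amounts, giving a collapse cost of at most $0.25+\log 2\approx 0.94$ against $1.1\log 101$. Your fair coin observed by both players already gives $\tfrac12 < 1.1\log 2$ at the default coefficients, which is a leaner instance and also avoids the two-hot reward computation.
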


Specifically, this can happen in environments with high stochasticity and small magnitude changes for the predictor, turning disambiguation of the chance outcomes into a suboptimal solution. We note that this theorem also holds for a single-agent environment and the standard DreamerV3 architecture. 

The dependence of the isomorphism loss bound on the (typically) non-stationary $\mu$ renders static coefficient tuning unsound. However, this degenerate minimum can be mitigated by using a ``one-way'' KL loss (applying a stop-gradient to the posterior), which drives convergence to either $\mathcal{M}^*$ or $\mathcal{M}^b$ if our sampling policy is sufficiently exploratory (Appendix \ref{app:theory}). 

Unfortunately, removing this posterior regularization allows for representation drift (Section~\ref{sec:dreamer}). Consequently, Dreamer-style MBRL faces a trade-off: the regularizer serving to mitigate representation non-stationarity and unpredictability can drive the optimization towards a degenerate solution.

Finally, this artifact does not conflict with DreamerV3's empirical success on stochastic domains such as Minecraft, Atari and BSuite. What makes collapse harmful is not stochasticity as such, but chance that is discrete, many-outcome, and strategically decisive. Minecraft's procedural generation is highly stochastic, yet the next frames are largely predictable from those already seen, the initial-only randomness is closer to epistemic uncertainty than aleatoric, and pixel-level error is strategically benign where collapse does occur. The randomness injected into Atari and BSuite is genuinely aleatoric but either few-outcome (sticky actions) or noise-like. Chance in games is the opposite on every count, which is why the same artifact becomes damaging, as we demonstrate in Leduc.


\section{Empirical Evaluation}
\label{sec:experiments}

 To avoid imagination in a highly inaccurate model, a world model warm-up precedes imagination: $1000$ gradient steps for Imperfect-Information Goofspiel-5 (GS5) and Leduc Poker (LP), and $2000$ for the larger environments: Stochastic Imperfect Information Goofspiel-13 (GS13), Battleship $5\times5$ (BS5) and Phantom Tic-Tac-Toe (PTTT). Appendix~\ref{app:warmup} ablates this choice.  
 Goofspiel and Battleship are natively \emph{simultaneous-move}; Leduc Poker and Phantom Tic-Tac-Toe are turn-based, embedded by the dummy-action construction of Section~\ref{sec:posg}. The world model has no turn-based mode. We refer the reader to Appendix \ref{app:environments} for detailed environment rules, \ref{app:hyperparameters} for extended hyperparameters, and  \ref{app:pcm_results} for comparison with DreamerV3 in a single-agent domain.
 All the experiments were carried out with 10 distinct training seeds.

\textbf{Evaluation protocol.} All reported NashConv values are computed \emph{exactly} against the true game by best-response dynamic programming over the full game tree, not estimated by sampling. For Phantom Tic-Tac-Toe, we utilize the exp-a-spiel library \citep{pgconvergence}. Appendix~\ref{app:nashconv} details the protocol, including the handling of chance nodes and simultaneous moves.

\textbf{Baselines and positioning.} We compare against model-free RNaD (with and without replay) and against MMD~\citep{mmd}, the strongest-performing model-free policy-gradient method in the evaluation of~\citet{pgconvergence}. MMD serves both as a baseline and, run inside imagination as NashDreamer[MMD], as the demonstration that the policy optimizer is interchangeable (Section~\ref{sec:actor_critic}). Its hyperparameters are the generic setting of~\citet{pgconvergence} and are held fixed across all domains, detailed in Appendix~\ref{app:mmd_baseline}. We do not compare directly against published multi-agent Dreamer systems, because each is structurally inapplicable to 2p0s IIGs : cooperative communication-based methods lose their central mechanism in a zero-sum game, opponent-embedding models define no fixed joint policy. Appendix~\ref{app:contemporary} argues each case in detail. However, we do perform the comparison against a decentralized model and show its unidentifiability in Appendix \ref{app:decentralized}.

\subsection{Sample Efficiency: NashDreamer vs. Model-Free Baselines}
\label{sec:sample_efficiency}
We compare against the identical model-free versions of our policy optimization algorithms and compare in terms of real environment steps seen. To compensate for the additional steps seen in imagination, we also evaluate against RNaD augmented with a replay buffer.

Our environments directly provide infoset representation, used as an input for our model-free baseline. To ensure exact input parity for NashDreamer, we use these in place of observations. We validate the capabilities of NashDreamer to learn directly from partial observations via an ablation in Imperfect Information Goofspiel 5 (Appendix \ref{app:observations})

\textbf{Imperfect Information Goofspiel 5:} While both methods achieved comparable asymptotic performance, NashDreamer reaches it utilizing substantially fewer environment interactions (Figure~\ref{fig:nashconv}). The off-policy replay failed to improve RNaD's sample efficiency. As theorized in Appendix ~\ref{app:replay}, we attribute this failure to the variance injected by unbiasing the off-policy data.

\textbf{Leduc Poker: }While NashDreamer initially outpaced the model-free baseline, it exhibited late-stage performance deterioration (Figure~\ref{fig:nashconv}). Investigating the learned dynamics reveals a structural error present throughout training: the model hallucinated physically impossible events at the deeper public chance node, such as dealing a public card already present in a player's hand. We also tested the one-way KL loss of Theorem~\ref{thm:oneway_convergence}, which sets $\beta^{\text{enc}} = 0$, removing the posterior regularization and signal towards collapse, it fails to prevent the structural error and deteriorates furthest of all variants we ran, because of representation drift (Appendix~\ref{app:leducerror}), confirming out tradeoff claim from Section ~\ref{sec:theory_ideal}.Appendix~\ref{app:diagnostics} reports quantitative world-model diagnostics evidencing this failure.

\textbf{Phantom Tic-Tac-Toe.} Phantom TTT serves as a benchmark of a large. yet tractable game. In this game, both optimizers improve when wrapped in NashDreamer, with NashDreamer[RNaD] attains model-free RNaD's final exploitability using roughly a quarter of the gradient steps (Figure~\ref{fig:nashconv}). Replay improves substantially on plain RNaD here, as it also does in Leduc, but it stays above NashDreamer[RNaD] throughout training.

\textbf{MMD as a model-free baseline.} On small games, model-free MMD converges faster than RNaD. Wrapping MMD in NashDreamer changes little, as MMD already converges within the world-model warm-up phase. This phenomenon is observed in Goofspiel 5 and Leduc, however in Phantom Tic-Tac-Toe the model already brings visible acceleration. Appendix~\ref{app:warmup} tests this explanation directly by shortening the warm-up in Leduc, which recovers the acceleration even in Leduc.

\begin{figure}[t]
    \centering
    \includegraphics[width=\linewidth]{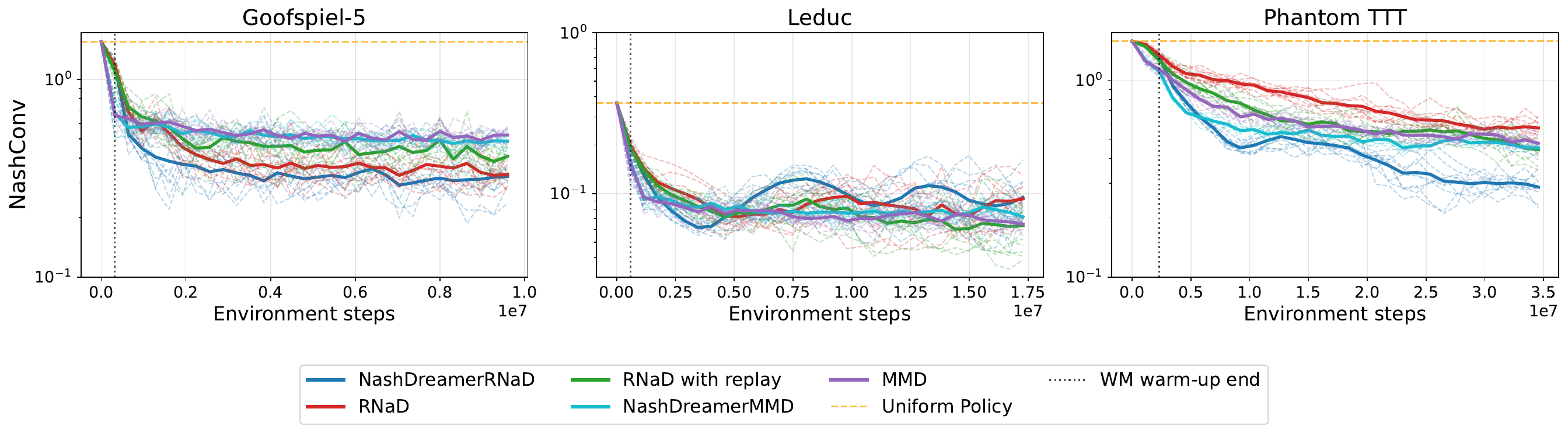}
    \caption{NashConv against real environment interaction on the three domains where it is exactly computable. In Goofspiel-5 NashDreamer[RNaD] reaches a given NashConv with fewer environment steps than either RNaD variant. In Leduc it converges faster initially and then deteriorates, the world model retaining a structural error at the public chance node (Appendix~\ref{app:letrees}). In Phantom Tic-Tac-Toe both optimizers improve inside imagination, and NashDreamer[RNaD] continues to improve while the two MMD variants settle together. Bold curves are means over the $10$ training seeds, faint curves the individual seeds; the dotted line marks the end of the world-model warm-up.}
    \label{fig:nashconv}
\end{figure}

\subsection{Approximate NashConv on Larger Domains}
\label{sec:approx_expl}

Exact NashConv is intractable in stochastic Goofspiel-13 and Battleship, so we approximate it by budgeted best-response search. We freeze a trained checkpoint's policy and, from each seat in turn, train $10$ PPO seeds against it for $3{,}000$ gradient steps, evaluating each over $100{,}000$ games. The largest best-response value obtained by any PPO seed is taken as that seat's estimate, and the two seats are summed. Appendix~\ref{app:nashconv} gives the full protocol.

We have trained each of the algorithms for $30k$ gradient steps, giving a world-model warm-up of $2k$ gradient steps for the NashDreamer variants and we evaluate the approximate NashConv each $10k$ steps.

Since a fixed search budget need not recover the true best response it is a  \emph{budgeted lower bound} on NashConv: values are comparable across methods evaluated at the same budget, but not against the exact NashConv of Section~\ref{sec:sample_efficiency} and not against results obtained at any other budget. Intervals are $\pm 2\sigma$ over the 10 training seeds, not over PPO seeds.

\begin{table}[h]
\centering
\footnotesize
\setlength{\tabcolsep}{2pt}
\begin{tabular}{lcccccc}
\toprule
 & \multicolumn{3}{c}{Goofspiel (13 cards)} & \multicolumn{3}{c}{Battleship (5$\times$5)} \\
\cmidrule(lr){2-4} \cmidrule(lr){5-7}
Method & 10k & 20k & 30k & 10k & 20k & 30k \\
\midrule
MMD & $0.827 \pm 0.076$ & $0.741 \pm 0.093$ & $0.732 \pm 0.075$ & $1.306 \pm 0.070$ & $1.161 \pm 0.095$ & $1.022 \pm 0.150$ \\
ND[MMD] & $0.853 \pm 0.074$ & $0.791 \pm 0.085$ & $0.763 \pm 0.075$ & $1.209 \pm 0.126$ & $0.969 \pm 0.097$ & $\mathbf{0.817 \pm 0.161}$ \\
RNaD & $0.742 \pm 0.388$ & $0.448 \pm 0.247$ & $0.356 \pm 0.291$ & $1.416 \pm 0.066$ & $1.326 \pm 0.132$ & $1.155 \pm 0.183$ \\
RNaD w/ replay & $0.756 \pm 0.074$ & $0.531 \pm 0.061$ & $0.499 \pm 0.074$ & $1.438 \pm 0.086$ & $1.395 \pm 0.122$ & $1.223 \pm 0.120$ \\
ND[RNaD] & $0.469 \pm 0.138$ & $0.250 \pm 0.145$ & $\mathbf{0.168 \pm 0.139}$ & $1.199 \pm 0.131$ & $1.184 \pm 0.168$ & $1.169 \pm 0.256$ \\
\bottomrule
\end{tabular}
\caption{Budgeted approximate NashConv (lower is better) at three checkpoints, as mean $\pm 2\sigma$ over the 10 training seeds. Values are a lower bound on NashConv at a fixed best-response budget and are not comparable with exact NashConv or with other budgets. ND denotes NashDreamer with the given actor-critic inside imagination; best value per domain in bold.}
\label{tab:approx_expl}
\end{table}

In Goofspiel-13 NashDreamer[RNaD] reaches lower approximate NashConv after $20$k gradient steps than model-free RNaD reaches only at $30$k, and it leads at every checkpoint. In Battleship the same holds for the other optimizer, NashDreamer[MMD] at $20$k surpassing MMD at $30$k. Which optimizer is the stronger one is domain-dependent, but our model nevertheless accelerates it further (Table ~\ref{tab:approx_expl}).

\section{Related Work}
\label{sec:related}

\textbf{Model-based RL.} MuZero~\citep{muzero} and Stochastic MuZero~\citep{stochasticmuzero} learn latent models for planning in perfect-information games. The Dreamer series~\citep{dreamerv1,dreamerv2,dreamerv3,dreamerv4} trains world models for imagination-based policy learning in single-agent settings. NashDreamer specifically builds upon the online architecture of DreamerV3, as adapting the offline learning paradigm of DreamerV4 to adversarial multi-agent settings introduces overfitting to a particular opponent set, as well as prohibitive requirements of dataset size. LAMIR \citet{lamir} extends MuZero-style look-ahead reasoning to IIGs, but focuses on search-time planning rather than pure RL and assumes deterministic games.

\textbf{Multi-agent Dreamer.} MAMBA~\citep{mamba} and DreamerCF~\citep{dreamercf} extend Dreamer to cooperative settings with inter-agent communication. In competitive ones, \citet{dreamerzerosum} condition the world model on ego and opponent embeddings, the latter randomly initialized at test time and so requiring zero-shot generalization, and \citet{dreamerhockey} use single-player modeling with population-based self-play. All use policy-gradient actor-critics without game-theoretic convergence guarantees.

\textbf{Game-theoretic RL.} CFR~\citep{cfr} and its variants are the gold standard for solving IIGs but require explicit game tree access. DREAM~\citep{dream} removes this requirement via Monte Carlo sampling, enabling model-free deep CFR. RNaD/DeepNash~\citep{rnadstratego} achieves human-level Stratego play using model-free RL with regularized reward dynamics. PSRO~\citep{psro} maintains a population of best-response policies. MMD \citep{mmd} achieves superhuman performance in Stratego \citep{ataraxos}. All mentioned model-free approaches are potential alternatives to RNaD as the actor-critic component within NashDreamer's learned world model, and we substantiate this for MMD in Section~\ref{sec:sample_efficiency} and Appendix~\ref{app:headtohead}.

\section{Conclusion}
\label{sec:conclusion}

We introduced NashDreamer, a principled MBRL framework for two-player zero-sum imperfect information games. By pairing a centralized latent world model (MARSSM) with a decentralized, game-theoretic actor-critic (such as RNaD), NashDreamer overcomes the identifiability barriers inherent to multi-agent learning. Empirically, it significantly improves convergence in early-to-mid stages of training. 

However, as formalized theoretically and validated in Leduc Poker, environments with high chance stochasticity remain challenging due to the tradeoff between stability and collapse. Resolving this tension without inducing degenerate minima is the most critical direction for future work. Additional limitations to overcome include relaxing the $t=0$ unrolling constraint and validating the architecture in massive environments, closer to the desired real-world deployment.

\section{Broader impacts}
We envision the primary application of our framework to be a building block for automated cyber defense. We acknowledge the dual-use nature of this research, as the same methodology could be leveraged to train sophisticated adversarial agents. We maintain, however, that the most effective countermeasure is making the algorithm widely accessible, allowing the community to train robust defenders with symmetric capabilities, rather than relying on the inherently fragile paradigm of security through obscurity.

%% file: diagrams/nash_dreamer_wm.tex
\begin{tikzpicture}[
  >=Stealth,
  arrow/.style={-{Stealth[length=2.2mm]}, line width=0.8pt, black},
  arrowdashed/.style={-{Stealth[length=2.2mm]}, line width=0.8pt, black, dash pattern=on 3pt off 2pt},
  arrowloss/.style={-{Stealth[length=2.2mm]}, line width=0.8pt, red, dash pattern=on 3pt off 2pt},
  arrowp1/.style={-{Stealth[length=2.2mm]}, line width=0.8pt, p1stroke},
  arrowp2/.style={-{Stealth[length=2.2mm]}, line width=0.8pt, p2stroke},
]
\draw[rounded corners=0.08cm,fill=white,draw=gray!50,line width=1.0pt] (12.160,-0.312) rectangle (14.400,-3.272);
\node[font=\scriptsize\bfseries,text=black] at (13.280,-0.544) {Legend};
\draw[rounded corners=0.02cm,fill=p1fill,draw=p1stroke,line width=1.0pt] (12.320,-0.736) rectangle (12.544,-0.960);
\node[anchor=west,font=\scriptsize,text=p1stroke] at (12.640,-0.848) {Player 1};
\draw[rounded corners=0.02cm,fill=p2fill,draw=p2stroke,line width=1.0pt] (12.320,-1.088) rectangle (12.544,-1.312);
\node[anchor=west,font=\scriptsize,text=p2stroke] at (12.640,-1.200) {Player 2};
\draw[rounded corners=0.02cm,fill=latentfill,draw=latentstroke,line width=1.0pt] (12.320,-1.440) rectangle (12.544,-1.664);
\node[anchor=west,font=\scriptsize,text=latentstroke] at (12.640,-1.552) {Det. latent};
\draw[arrowdashed] (12.432,-1.792) -- (12.656,-1.792);
\node[anchor=west,font=\scriptsize,text=black] at (12.640,-1.856) {Prior};
\draw[arrowloss] (12.432,-2.144) -- (12.656,-2.144);
\node[anchor=west,font=\scriptsize,text=red] at (12.640,-2.208) {Loss};
\node[font=\scriptsize,text=gray!70!black] at (13.280,-2.752) {$\hat{x} = \text{latent}$};
\draw[rounded corners=0.1cm,fill=startfill,draw=startstroke,line width=1.0pt] (0.800,-0.520) rectangle (4.960,-1.400);
\node[font=\footnotesize\bfseries,text=startstroke] at (2.880,-0.736) {Real Data (t=1)};
\node[font=\scriptsize,text=startstroke] at (2.880,-1.152) {$(o^1_{1},\, o^2_{1},\, d_{1},\, r_{1},\, a^1_{d},\, a^2_{d})$};
\draw[rounded corners=0.06cm,fill=p1fill,draw=p1stroke,line width=1.0pt] (2.080,-2.560) rectangle (3.680,-3.200);
\node[font=\footnotesize,text=black] at (2.880,-2.880) {$\hat{\mathcal{I}}^1_{1}$};
\draw[arrowp1] (2.720,-1.408) -- (2.720,-2.560);
\node[anchor=east,font=\footnotesize,text=p1stroke] at (2.560,-2.112) {$a^1_{d},\, o^1_{1}$};
\draw[rounded corners=0.06cm,fill=p2fill,draw=p2stroke,line width=1.0pt] (2.080,-4.160) rectangle (3.680,-4.800);
\node[font=\footnotesize,text=black] at (2.880,-4.480) {$\hat{\mathcal{I}}^2_{1}$};
\draw[arrowp2] (3.040,-1.408) .. controls (4.000,-1.408) and (4.000,-4.160) .. (3.040,-4.160);
\node[anchor=west,font=\footnotesize,text=p2stroke] at (3.488,-3.744) {$a^2_{d},\, o^2_{1}$};
\draw[rounded corners=0.1cm,fill=groupfill,draw=groupstroke,line width=1.0pt,dash pattern=on 3pt off 2pt,fill opacity=0.6,draw opacity=0.6] (1.920,-6.800) rectangle (3.840,-9.360);
\node[font=\tiny\bfseries,text=groupstroke!70!black] at (2.880,-6.992) {Central WM};
\draw[rounded corners=0.06cm,fill=latentfill,draw=latentstroke,line width=1.0pt] (2.240,-7.120) rectangle (3.520,-7.600);
\node[font=\footnotesize,text=black] at (2.880,-7.360) {$\hat{h}_{1}$};
\begin{scope}[shift={(2.880,-8.800)}]
\draw[line width=0.7pt] (-0.281,-0.281) rectangle (0.281,0.281);
\pgfmathsetmacro{\gridBbRa}{int(random(0,3))}
\ifnum\gridBbRa=0 \fill[black] (-0.256,0.128) rectangle (-0.128,0.256); \else \draw[gray!55,line width=0.25pt] (-0.256,0.128) rectangle (-0.128,0.256); \fi
\ifnum\gridBbRa=1 \fill[black] (-0.128,0.128) rectangle (0.000,0.256); \else \draw[gray!55,line width=0.25pt] (-0.128,0.128) rectangle (0.000,0.256); \fi
\ifnum\gridBbRa=2 \fill[black] (0.000,0.128) rectangle (0.128,0.256); \else \draw[gray!55,line width=0.25pt] (0.000,0.128) rectangle (0.128,0.256); \fi
\ifnum\gridBbRa=3 \fill[black] (0.128,0.128) rectangle (0.256,0.256); \else \draw[gray!55,line width=0.25pt] (0.128,0.128) rectangle (0.256,0.256); \fi
\pgfmathsetmacro{\gridBbRb}{int(random(0,3))}
\ifnum\gridBbRb=0 \fill[black] (-0.256,0.000) rectangle (-0.128,0.128); \else \draw[gray!55,line width=0.25pt] (-0.256,0.000) rectangle (-0.128,0.128); \fi
\ifnum\gridBbRb=1 \fill[black] (-0.128,0.000) rectangle (0.000,0.128); \else \draw[gray!55,line width=0.25pt] (-0.128,0.000) rectangle (0.000,0.128); \fi
\ifnum\gridBbRb=2 \fill[black] (0.000,0.000) rectangle (0.128,0.128); \else \draw[gray!55,line width=0.25pt] (0.000,0.000) rectangle (0.128,0.128); \fi
\ifnum\gridBbRb=3 \fill[black] (0.128,0.000) rectangle (0.256,0.128); \else \draw[gray!55,line width=0.25pt] (0.128,0.000) rectangle (0.256,0.128); \fi
\pgfmathsetmacro{\gridBbRc}{int(random(0,3))}
\ifnum\gridBbRc=0 \fill[black] (-0.256,-0.128) rectangle (-0.128,0.000); \else \draw[gray!55,line width=0.25pt] (-0.256,-0.128) rectangle (-0.128,0.000); \fi
\ifnum\gridBbRc=1 \fill[black] (-0.128,-0.128) rectangle (0.000,0.000); \else \draw[gray!55,line width=0.25pt] (-0.128,-0.128) rectangle (0.000,0.000); \fi
\ifnum\gridBbRc=2 \fill[black] (0.000,-0.128) rectangle (0.128,0.000); \else \draw[gray!55,line width=0.25pt] (0.000,-0.128) rectangle (0.128,0.000); \fi
\ifnum\gridBbRc=3 \fill[black] (0.128,-0.128) rectangle (0.256,0.000); \else \draw[gray!55,line width=0.25pt] (0.128,-0.128) rectangle (0.256,0.000); \fi
\pgfmathsetmacro{\gridBbRd}{int(random(0,3))}
\ifnum\gridBbRd=0 \fill[black] (-0.256,-0.256) rectangle (-0.128,-0.128); \else \draw[gray!55,line width=0.25pt] (-0.256,-0.256) rectangle (-0.128,-0.128); \fi
\ifnum\gridBbRd=1 \fill[black] (-0.128,-0.256) rectangle (0.000,-0.128); \else \draw[gray!55,line width=0.25pt] (-0.128,-0.256) rectangle (0.000,-0.128); \fi
\ifnum\gridBbRd=2 \fill[black] (0.000,-0.256) rectangle (0.128,-0.128); \else \draw[gray!55,line width=0.25pt] (0.000,-0.256) rectangle (0.128,-0.128); \fi
\ifnum\gridBbRd=3 \fill[black] (0.128,-0.256) rectangle (0.256,-0.128); \else \draw[gray!55,line width=0.25pt] (0.128,-0.256) rectangle (0.256,-0.128); \fi
\end{scope}
\node[font=\scriptsize,text=black] at (2.880,-9.296) {$\hat{z}_{1}$};
\draw[arrowdashed] (2.880,-7.600) -- (2.880,-8.400);
\node[anchor=west,font=\scriptsize\ttfamily\bfseries,text=gray!70!black] at (2.976,-8.032) {Posterior};
\draw[arrow] (3.360,-1.408) .. controls (5.280,-1.408) and (5.280,-7.120) .. (3.360,-7.120);
\node[anchor=east,font=\footnotesize,text=black] at (4.240,-5.760) {$a^1_{d},\, a^2_{d}$};
\draw[arrow] (3.680,-1.408) .. controls (6.560,-1.408) and (6.560,-8.400) .. (3.680,-8.400);
\node[anchor=west,font=\footnotesize,text=black] at (4.768,-5.760) {$o^1_{1},\, o^2_{1}$};
\draw[arrowloss] (2.560,-3.200) .. controls (1.600,-3.200) and (1.600,-7.360) .. (2.240,-7.360);
\draw[arrowloss] (2.560,-4.800) .. controls (2.020,-5.600) and (1.960,-7.360) .. (2.240,-7.360);
\node[anchor=west,font=\scriptsize\ttfamily\bfseries,text=red] at (2.400,-6.240) {Unifier Loss};
\draw[rounded corners=0.06cm,fill=white,draw=black!80,line width=1.0pt] (1.040,-10.208) rectangle (4.720,-10.912);
\node[font=\footnotesize\bfseries,text=black] at (2.880,-10.448) {Predictions};
\node[font=\scriptsize,text=gray!70!black] at (2.880,-10.752) {$\hat{o}^1_{1},\, \hat{o}^2_{1},\, \hat{r}_{1},\, \hat{d}_{1},\, \hat{\ell}^1_{1},\, \hat{\ell}^2_{1}$};
\draw[arrow] (2.880,-9.360) -- (2.880,-10.208);
\draw[arrowp1] (3.680,-2.880) -- (7.520,-2.880);
\draw[arrowp2] (3.680,-4.480) -- (7.520,-4.480);
\draw[arrow] (3.520,-7.360) -- (7.680,-7.360);
\draw[arrow] (3.360,-8.800) -- (7.680,-7.520);
\draw[rounded corners=0.1cm,fill=startfill,draw=startstroke,line width=1.0pt] (6.240,-0.520) rectangle (10.400,-1.400);
\node[font=\footnotesize\bfseries,text=startstroke] at (8.320,-0.736) {Real Data (t=2)};
\node[font=\scriptsize,text=startstroke] at (8.320,-1.152) {$(o^1_{2},\, o^2_{2},\, d_{2},\, r_{2},\, a^1_{1},\, a^2_{1})$};
\draw[rounded corners=0.06cm,fill=p1fill,draw=p1stroke,line width=1.0pt] (7.520,-2.560) rectangle (9.120,-3.200);
\node[font=\footnotesize,text=black] at (8.320,-2.880) {$\hat{\mathcal{I}}^1_{2}$};
\draw[arrowp1] (8.160,-1.408) -- (8.160,-2.560);
\node[anchor=east,font=\footnotesize,text=p1stroke] at (8.000,-2.112) {$a^1_{1},\, o^1_{2}$};
\draw[rounded corners=0.06cm,fill=p2fill,draw=p2stroke,line width=1.0pt] (7.520,-4.160) rectangle (9.120,-4.800);
\node[font=\footnotesize,text=black] at (8.320,-4.480) {$\hat{\mathcal{I}}^2_{2}$};
\draw[arrowp2] (8.480,-1.408) .. controls (9.440,-1.408) and (9.440,-4.160) .. (8.480,-4.160);
\node[anchor=west,font=\footnotesize,text=p2stroke] at (8.928,-3.744) {$a^2_{1},\, o^2_{2}$};
\draw[rounded corners=0.1cm,fill=groupfill,draw=groupstroke,line width=1.0pt,dash pattern=on 3pt off 2pt,fill opacity=0.6,draw opacity=0.6] (7.360,-6.800) rectangle (9.280,-9.360);
\node[font=\tiny\bfseries,text=groupstroke!70!black] at (8.320,-6.992) {Central WM};
\draw[rounded corners=0.06cm,fill=latentfill,draw=latentstroke,line width=1.0pt] (7.680,-7.120) rectangle (8.960,-7.600);
\node[font=\footnotesize,text=black] at (8.320,-7.360) {$\hat{h}_{2}$};
\begin{scope}[shift={(8.320,-8.800)}]
\draw[line width=0.7pt] (-0.281,-0.281) rectangle (0.281,0.281);
\pgfmathsetmacro{\gridBcRa}{int(random(0,3))}
\ifnum\gridBcRa=0 \fill[black] (-0.256,0.128) rectangle (-0.128,0.256); \else \draw[gray!55,line width=0.25pt] (-0.256,0.128) rectangle (-0.128,0.256); \fi
\ifnum\gridBcRa=1 \fill[black] (-0.128,0.128) rectangle (0.000,0.256); \else \draw[gray!55,line width=0.25pt] (-0.128,0.128) rectangle (0.000,0.256); \fi
\ifnum\gridBcRa=2 \fill[black] (0.000,0.128) rectangle (0.128,0.256); \else \draw[gray!55,line width=0.25pt] (0.000,0.128) rectangle (0.128,0.256); \fi
\ifnum\gridBcRa=3 \fill[black] (0.128,0.128) rectangle (0.256,0.256); \else \draw[gray!55,line width=0.25pt] (0.128,0.128) rectangle (0.256,0.256); \fi
\pgfmathsetmacro{\gridBcRb}{int(random(0,3))}
\ifnum\gridBcRb=0 \fill[black] (-0.256,0.000) rectangle (-0.128,0.128); \else \draw[gray!55,line width=0.25pt] (-0.256,0.000) rectangle (-0.128,0.128); \fi
\ifnum\gridBcRb=1 \fill[black] (-0.128,0.000) rectangle (0.000,0.128); \else \draw[gray!55,line width=0.25pt] (-0.128,0.000) rectangle (0.000,0.128); \fi
\ifnum\gridBcRb=2 \fill[black] (0.000,0.000) rectangle (0.128,0.128); \else \draw[gray!55,line width=0.25pt] (0.000,0.000) rectangle (0.128,0.128); \fi
\ifnum\gridBcRb=3 \fill[black] (0.128,0.000) rectangle (0.256,0.128); \else \draw[gray!55,line width=0.25pt] (0.128,0.000) rectangle (0.256,0.128); \fi
\pgfmathsetmacro{\gridBcRc}{int(random(0,3))}
\ifnum\gridBcRc=0 \fill[black] (-0.256,-0.128) rectangle (-0.128,0.000); \else \draw[gray!55,line width=0.25pt] (-0.256,-0.128) rectangle (-0.128,0.000); \fi
\ifnum\gridBcRc=1 \fill[black] (-0.128,-0.128) rectangle (0.000,0.000); \else \draw[gray!55,line width=0.25pt] (-0.128,-0.128) rectangle (0.000,0.000); \fi
\ifnum\gridBcRc=2 \fill[black] (0.000,-0.128) rectangle (0.128,0.000); \else \draw[gray!55,line width=0.25pt] (0.000,-0.128) rectangle (0.128,0.000); \fi
\ifnum\gridBcRc=3 \fill[black] (0.128,-0.128) rectangle (0.256,0.000); \else \draw[gray!55,line width=0.25pt] (0.128,-0.128) rectangle (0.256,0.000); \fi
\pgfmathsetmacro{\gridBcRd}{int(random(0,3))}
\ifnum\gridBcRd=0 \fill[black] (-0.256,-0.256) rectangle (-0.128,-0.128); \else \draw[gray!55,line width=0.25pt] (-0.256,-0.256) rectangle (-0.128,-0.128); \fi
\ifnum\gridBcRd=1 \fill[black] (-0.128,-0.256) rectangle (0.000,-0.128); \else \draw[gray!55,line width=0.25pt] (-0.128,-0.256) rectangle (0.000,-0.128); \fi
\ifnum\gridBcRd=2 \fill[black] (0.000,-0.256) rectangle (0.128,-0.128); \else \draw[gray!55,line width=0.25pt] (0.000,-0.256) rectangle (0.128,-0.128); \fi
\ifnum\gridBcRd=3 \fill[black] (0.128,-0.256) rectangle (0.256,-0.128); \else \draw[gray!55,line width=0.25pt] (0.128,-0.256) rectangle (0.256,-0.128); \fi
\end{scope}
\node[font=\scriptsize,text=black] at (8.320,-9.296) {$\hat{z}_{2}$};
\draw[arrowdashed] (8.320,-7.600) -- (8.320,-8.400);
\node[anchor=west,font=\scriptsize\ttfamily\bfseries,text=gray!70!black] at (8.416,-8.032) {Posterior};
\draw[arrow] (8.800,-1.408) .. controls (10.720,-1.408) and (10.720,-7.120) .. (8.800,-7.120);
\node[anchor=east,font=\footnotesize,text=black] at (9.680,-5.760) {$a^1_{1},\, a^2_{1}$};
\draw[arrow] (9.120,-1.408) .. controls (12.000,-1.408) and (12.000,-8.400) .. (9.120,-8.400);
\node[anchor=west,font=\footnotesize,text=black] at (10.208,-5.760) {$o^1_{2},\, o^2_{2}$};
\draw[arrowloss] (8.000,-3.200) .. controls (7.040,-3.200) and (7.040,-7.360) .. (7.680,-7.360);
\draw[arrowloss] (8.000,-4.800) .. controls (7.460,-5.600) and (7.400,-7.360) .. (7.680,-7.360);
\node[anchor=west,font=\scriptsize\ttfamily\bfseries,text=red] at (7.840,-6.240) {Unifier Loss};
\draw[rounded corners=0.06cm,fill=white,draw=black!80,line width=1.0pt] (6.480,-10.208) rectangle (10.160,-10.912);
\node[font=\footnotesize\bfseries,text=black] at (8.320,-10.448) {Predictions};
\node[font=\scriptsize,text=gray!70!black] at (8.320,-10.752) {$\hat{o}^1_{2},\, \hat{o}^2_{2},\, \hat{r}_{2},\, \hat{d}_{2},\, \hat{\ell}^1_{2},\, \hat{\ell}^2_{2}$};
\draw[arrow] (8.320,-9.360) -- (8.320,-10.208);
\end{tikzpicture}

%% file: diagrams/nash_dreamer_imagination.tex
\begin{tikzpicture}[
  >=Stealth,
  arrow/.style={-{Stealth[length=2.2mm]}, line width=0.8pt, black},
  arrowdashed/.style={-{Stealth[length=2.2mm]}, line width=0.8pt, black, dash pattern=on 3pt off 2pt},
  arrowactor/.style={-{Stealth[length=2.2mm]}, line width=0.8pt, actorstroke},
  arrowp1/.style={-{Stealth[length=2.2mm]}, line width=0.8pt, p1stroke},
  arrowp2/.style={-{Stealth[length=2.2mm]}, line width=0.8pt, p2stroke},
]
\draw[rounded corners=0.08cm,fill=white,draw=gray!50,line width=1.0pt] (9.600,-0.960) rectangle (11.840,-3.520);
\node[font=\scriptsize\bfseries,text=black] at (10.720,-1.184) {Legend};
\draw[rounded corners=0.02cm,fill=p1fill,draw=p1stroke,line width=1.0pt] (9.760,-1.376) rectangle (9.984,-1.600);
\node[anchor=west,font=\scriptsize,text=p1stroke] at (10.080,-1.488) {Player 1};
\draw[rounded corners=0.02cm,fill=p2fill,draw=p2stroke,line width=1.0pt] (9.760,-1.728) rectangle (9.984,-1.952);
\node[anchor=west,font=\scriptsize,text=p2stroke] at (10.080,-1.840) {Player 2};
\draw[rounded corners=0.02cm,fill=latentfill,draw=latentstroke,line width=1.0pt] (9.760,-2.080) rectangle (9.984,-2.304);
\node[anchor=west,font=\scriptsize,text=latentstroke] at (10.080,-2.192) {Det. latent};
\draw[rounded corners=0.02cm,fill=actorfill,draw=actorstroke,line width=1.0pt] (9.760,-2.432) rectangle (9.984,-2.656);
\node[anchor=west,font=\scriptsize,text=actorstroke] at (10.080,-2.544) {Actors};
\draw[arrowdashed] (9.872,-2.848) -- (10.096,-2.848);
\node[anchor=west,font=\scriptsize,text=black] at (10.080,-2.912) {Prior};
\node[font=\scriptsize,text=gray!70!black] at (10.720,-3.296) {$\hat{x} = \text{latent}$};
\draw[rounded corners=0.06cm,fill=p1fill,draw=p1stroke,line width=1.0pt] (2.080,-1.600) rectangle (3.680,-2.240);
\node[font=\footnotesize,text=black] at (2.880,-1.920) {$\hat{\mathcal{I}}^1_{1}$};
\draw[rounded corners=0.06cm,fill=p2fill,draw=p2stroke,line width=1.0pt] (2.080,-3.200) rectangle (3.680,-3.840);
\node[font=\footnotesize,text=black] at (2.880,-3.520) {$\hat{\mathcal{I}}^2_{1}$};
\draw[rounded corners=0.1cm,fill=groupfill,draw=groupstroke,line width=1.0pt,dash pattern=on 3pt off 2pt,fill opacity=0.6,draw opacity=0.6] (1.920,-7.120) rectangle (3.840,-9.680);
\node[font=\tiny\bfseries,text=groupstroke!70!black] at (2.880,-7.312) {Central WM};
\draw[rounded corners=0.06cm,fill=latentfill,draw=latentstroke,line width=1.0pt] (2.240,-7.440) rectangle (3.520,-7.920);
\node[font=\footnotesize,text=black] at (2.880,-7.680) {$\hat{h}_{1}$};
\begin{scope}[shift={(2.880,-9.120)}]
\draw[line width=0.7pt] (-0.281,-0.281) rectangle (0.281,0.281);
\pgfmathsetmacro{\gridAbRa}{int(random(0,3))}
\ifnum\gridAbRa=0 \fill[black] (-0.256,0.128) rectangle (-0.128,0.256); \else \draw[gray!55,line width=0.25pt] (-0.256,0.128) rectangle (-0.128,0.256); \fi
\ifnum\gridAbRa=1 \fill[black] (-0.128,0.128) rectangle (0.000,0.256); \else \draw[gray!55,line width=0.25pt] (-0.128,0.128) rectangle (0.000,0.256); \fi
\ifnum\gridAbRa=2 \fill[black] (0.000,0.128) rectangle (0.128,0.256); \else \draw[gray!55,line width=0.25pt] (0.000,0.128) rectangle (0.128,0.256); \fi
\ifnum\gridAbRa=3 \fill[black] (0.128,0.128) rectangle (0.256,0.256); \else \draw[gray!55,line width=0.25pt] (0.128,0.128) rectangle (0.256,0.256); \fi
\pgfmathsetmacro{\gridAbRb}{int(random(0,3))}
\ifnum\gridAbRb=0 \fill[black] (-0.256,0.000) rectangle (-0.128,0.128); \else \draw[gray!55,line width=0.25pt] (-0.256,0.000) rectangle (-0.128,0.128); \fi
\ifnum\gridAbRb=1 \fill[black] (-0.128,0.000) rectangle (0.000,0.128); \else \draw[gray!55,line width=0.25pt] (-0.128,0.000) rectangle (0.000,0.128); \fi
\ifnum\gridAbRb=2 \fill[black] (0.000,0.000) rectangle (0.128,0.128); \else \draw[gray!55,line width=0.25pt] (0.000,0.000) rectangle (0.128,0.128); \fi
\ifnum\gridAbRb=3 \fill[black] (0.128,0.000) rectangle (0.256,0.128); \else \draw[gray!55,line width=0.25pt] (0.128,0.000) rectangle (0.256,0.128); \fi
\pgfmathsetmacro{\gridAbRc}{int(random(0,3))}
\ifnum\gridAbRc=0 \fill[black] (-0.256,-0.128) rectangle (-0.128,0.000); \else \draw[gray!55,line width=0.25pt] (-0.256,-0.128) rectangle (-0.128,0.000); \fi
\ifnum\gridAbRc=1 \fill[black] (-0.128,-0.128) rectangle (0.000,0.000); \else \draw[gray!55,line width=0.25pt] (-0.128,-0.128) rectangle (0.000,0.000); \fi
\ifnum\gridAbRc=2 \fill[black] (0.000,-0.128) rectangle (0.128,0.000); \else \draw[gray!55,line width=0.25pt] (0.000,-0.128) rectangle (0.128,0.000); \fi
\ifnum\gridAbRc=3 \fill[black] (0.128,-0.128) rectangle (0.256,0.000); \else \draw[gray!55,line width=0.25pt] (0.128,-0.128) rectangle (0.256,0.000); \fi
\pgfmathsetmacro{\gridAbRd}{int(random(0,3))}
\ifnum\gridAbRd=0 \fill[black] (-0.256,-0.256) rectangle (-0.128,-0.128); \else \draw[gray!55,line width=0.25pt] (-0.256,-0.256) rectangle (-0.128,-0.128); \fi
\ifnum\gridAbRd=1 \fill[black] (-0.128,-0.256) rectangle (0.000,-0.128); \else \draw[gray!55,line width=0.25pt] (-0.128,-0.256) rectangle (0.000,-0.128); \fi
\ifnum\gridAbRd=2 \fill[black] (0.000,-0.256) rectangle (0.128,-0.128); \else \draw[gray!55,line width=0.25pt] (0.000,-0.256) rectangle (0.128,-0.128); \fi
\ifnum\gridAbRd=3 \fill[black] (0.128,-0.256) rectangle (0.256,-0.128); \else \draw[gray!55,line width=0.25pt] (0.128,-0.256) rectangle (0.256,-0.128); \fi
\end{scope}
\node[font=\scriptsize,text=black] at (2.880,-9.616) {$\hat{z}_{1}$};
\draw[arrowdashed] (2.880,-7.920) -- (2.880,-8.720);
\node[anchor=west,font=\scriptsize\ttfamily\bfseries,text=gray!70!black] at (2.976,-8.352) {Posterior};
\draw[rounded corners=0.06cm,fill=actorfill,draw=actorstroke,line width=1.0pt] (4.720,-5.120) rectangle (6.480,-5.760);
\node[font=\footnotesize\bfseries,text=black] at (5.600,-5.312) {Actors};
\node[font=\scriptsize,text=gray] at (5.600,-5.616) {$\hat{a}^1_{1},\, \hat{a}^2_{1}$};
\draw[arrowactor] (3.680,-2.080) -- (4.720,-5.280);
\draw[arrowactor] (3.680,-3.360) -- (4.720,-5.600);
\draw[arrow] (5.920,-5.760) .. controls (6.880,-5.760) and (8.800,-7.440) .. (7.840,-7.440);
\node[anchor=west,font=\footnotesize,text=latentstroke] at (6.720,-6.880) {$\hat{a}^1_{1},\, \hat{a}^2_{1}$};
\draw[arrowp1] (6.480,-5.200) -- (7.520,-2.160);
\node[anchor=east,font=\footnotesize,text=p1stroke] at (6.720,-4.448) {$\hat{a}^1_{1}$};
\draw[arrowp2] (6.480,-5.680) -- (7.520,-3.760);
\node[anchor=west,font=\footnotesize,text=p2stroke] at (6.880,-5.248) {$\hat{a}^2_{1}$};
\draw[arrowp1] (3.680,-1.920) -- (7.520,-1.920);
\draw[arrowp2] (3.680,-3.520) -- (7.520,-3.520);
\draw[arrow] (3.520,-7.680) -- (7.680,-7.680);
\draw[arrow] (3.360,-9.120) -- (7.680,-7.840);
\draw[rounded corners=0.03cm,fill=startfill,draw=startstroke,line width=1.0pt] (1.480,-0.960) rectangle (4.280,-1.440);
\node[font=\scriptsize\bfseries,text=startstroke] at (2.880,-1.200) {Start (WM training)};
\draw[rounded corners=0.06cm,fill=p1fill,draw=p1stroke,line width=1.0pt] (7.520,-1.600) rectangle (9.120,-2.240);
\node[font=\footnotesize,text=black] at (8.320,-1.920) {$\hat{\mathcal{I}}^1_{2}$};
\draw[rounded corners=0.06cm,fill=p2fill,draw=p2stroke,line width=1.0pt] (7.520,-3.200) rectangle (9.120,-3.840);
\node[font=\footnotesize,text=black] at (8.320,-3.520) {$\hat{\mathcal{I}}^2_{2}$};
\draw[rounded corners=0.1cm,fill=groupfill,draw=groupstroke,line width=1.0pt,dash pattern=on 3pt off 2pt,fill opacity=0.6,draw opacity=0.6] (7.360,-7.120) rectangle (9.280,-9.680);
\node[font=\tiny\bfseries,text=groupstroke!70!black] at (8.320,-7.312) {Central WM};
\draw[rounded corners=0.06cm,fill=latentfill,draw=latentstroke,line width=1.0pt] (7.680,-7.440) rectangle (8.960,-7.920);
\node[font=\footnotesize,text=black] at (8.320,-7.680) {$\hat{h}_{2}$};
\begin{scope}[shift={(8.320,-9.120)}]
\draw[line width=0.7pt] (-0.281,-0.281) rectangle (0.281,0.281);
\pgfmathsetmacro{\gridAcRa}{int(random(0,3))}
\ifnum\gridAcRa=0 \fill[black] (-0.256,0.128) rectangle (-0.128,0.256); \else \draw[gray!55,line width=0.25pt] (-0.256,0.128) rectangle (-0.128,0.256); \fi
\ifnum\gridAcRa=1 \fill[black] (-0.128,0.128) rectangle (0.000,0.256); \else \draw[gray!55,line width=0.25pt] (-0.128,0.128) rectangle (0.000,0.256); \fi
\ifnum\gridAcRa=2 \fill[black] (0.000,0.128) rectangle (0.128,0.256); \else \draw[gray!55,line width=0.25pt] (0.000,0.128) rectangle (0.128,0.256); \fi
\ifnum\gridAcRa=3 \fill[black] (0.128,0.128) rectangle (0.256,0.256); \else \draw[gray!55,line width=0.25pt] (0.128,0.128) rectangle (0.256,0.256); \fi
\pgfmathsetmacro{\gridAcRb}{int(random(0,3))}
\ifnum\gridAcRb=0 \fill[black] (-0.256,0.000) rectangle (-0.128,0.128); \else \draw[gray!55,line width=0.25pt] (-0.256,0.000) rectangle (-0.128,0.128); \fi
\ifnum\gridAcRb=1 \fill[black] (-0.128,0.000) rectangle (0.000,0.128); \else \draw[gray!55,line width=0.25pt] (-0.128,0.000) rectangle (0.000,0.128); \fi
\ifnum\gridAcRb=2 \fill[black] (0.000,0.000) rectangle (0.128,0.128); \else \draw[gray!55,line width=0.25pt] (0.000,0.000) rectangle (0.128,0.128); \fi
\ifnum\gridAcRb=3 \fill[black] (0.128,0.000) rectangle (0.256,0.128); \else \draw[gray!55,line width=0.25pt] (0.128,0.000) rectangle (0.256,0.128); \fi
\pgfmathsetmacro{\gridAcRc}{int(random(0,3))}
\ifnum\gridAcRc=0 \fill[black] (-0.256,-0.128) rectangle (-0.128,0.000); \else \draw[gray!55,line width=0.25pt] (-0.256,-0.128) rectangle (-0.128,0.000); \fi
\ifnum\gridAcRc=1 \fill[black] (-0.128,-0.128) rectangle (0.000,0.000); \else \draw[gray!55,line width=0.25pt] (-0.128,-0.128) rectangle (0.000,0.000); \fi
\ifnum\gridAcRc=2 \fill[black] (0.000,-0.128) rectangle (0.128,0.000); \else \draw[gray!55,line width=0.25pt] (0.000,-0.128) rectangle (0.128,0.000); \fi
\ifnum\gridAcRc=3 \fill[black] (0.128,-0.128) rectangle (0.256,0.000); \else \draw[gray!55,line width=0.25pt] (0.128,-0.128) rectangle (0.256,0.000); \fi
\pgfmathsetmacro{\gridAcRd}{int(random(0,3))}
\ifnum\gridAcRd=0 \fill[black] (-0.256,-0.256) rectangle (-0.128,-0.128); \else \draw[gray!55,line width=0.25pt] (-0.256,-0.256) rectangle (-0.128,-0.128); \fi
\ifnum\gridAcRd=1 \fill[black] (-0.128,-0.256) rectangle (0.000,-0.128); \else \draw[gray!55,line width=0.25pt] (-0.128,-0.256) rectangle (0.000,-0.128); \fi
\ifnum\gridAcRd=2 \fill[black] (0.000,-0.256) rectangle (0.128,-0.128); \else \draw[gray!55,line width=0.25pt] (0.000,-0.256) rectangle (0.128,-0.128); \fi
\ifnum\gridAcRd=3 \fill[black] (0.128,-0.256) rectangle (0.256,-0.128); \else \draw[gray!55,line width=0.25pt] (0.128,-0.256) rectangle (0.256,-0.128); \fi
\end{scope}
\node[font=\scriptsize,text=black] at (8.320,-9.616) {$\hat{z}_{2}$};
\draw[arrowdashed] (8.320,-7.920) -- (8.320,-8.720);
\node[anchor=west,font=\scriptsize\ttfamily\bfseries,text=gray!70!black] at (8.416,-8.352) {Prior};
\draw[rounded corners=0.06cm,fill=decodedfill,draw=black!70,line width=1.0pt] (7.040,-10.560) rectangle (9.600,-11.200);
\node[font=\footnotesize\bfseries,text=black] at (8.320,-10.752) {Decoded};
\node[font=\scriptsize,text=gray] at (8.320,-11.056) {$\hat{o}^1_{2},\, \hat{o}^2_{2}$};
\draw[arrow] (8.320,-9.680) -- (8.320,-10.560);
\draw[arrowp1] (7.520,-10.880) .. controls (5.920,-10.880) and (6.240,-2.240) .. (7.840,-2.240);
\node[anchor=east,font=\footnotesize,text=p1stroke] at (7.200,-10.000) {$\hat{o}^1_{2}$};
\draw[arrowp2] (7.840,-10.880) .. controls (6.880,-10.880) and (7.200,-3.840) .. (8.160,-3.840);
\node[anchor=east,font=\footnotesize,text=p2stroke] at (7.680,-10.432) {$\hat{o}^2_{2}$};
\end{tikzpicture}

%% file: appendices.tex
\newpage
\appendix
\section{Regularized Policy Optimization in Imperfect-Information Games}
\label{app:optimizers}

This appendix expands Section~\ref{sec:rnad} with the full statement of the two actor-critic algorithms that NashDreamer runs inside imagination. Their implementations and hyperparameters are given separately in Appendices~\ref{app:rnad_baseline} and~\ref{app:mmd_baseline}.

\subsection{Why Policy Gradients Cycle}

Follow the Regularized Leader (FoReL)~\citep{shalev2007forel} is a general framework from online learning. When the regularizer is entropy, FoReL yields Replicator Dynamics, which is closely connected to policy-gradient methods via Neural Replicator Dynamics (NeuRD)~\citep{neurd}. However, FoReL has been proven to cycle and fail to converge to a Nash Equilibrium in both matrix games~\citep{forelcycle} and sequential IIGs~\citep{rnadtheory}.

\subsection{Regularized Nash Dynamics}

Regularized Nash Dynamics (RNaD)~\citep{rnadstratego} overcomes this by introducing policy-dependent reward regularization:
\begin{equation}
    R^{\text{reg}}(h, a, \pi, \pi^{\text{reg}}) = R(h, a) - \eta \log \frac{\pi_1(a_1 \given I_1(h))}{\pi_1^{\text{reg}}(a_1 \given I_1(h))} + \eta \log \frac{\pi_2(a_2 \given I_2(h))}{\pi_2^{\text{reg}}(a_2 \given I_2(h))},
    \label{eq:rnad_reward}
\end{equation}
where $\eta > 0$ is the regularization coefficient. This formulation naturally preserves the zero-sum property of the game. By periodically updating the regularization policy $\pi^{\text{reg}} \leftarrow \pi$, RNaD provably converges to an NE. The practical deep reinforcement learning instantiation, DeepNash~\citep{rnadstratego}, combines RNaD with a V-trace estimator~\citep{impalavtrace} for the actor-critic. We adapt the simultaneous-move variant of RNaD~\citep{lamir}, which introduces counterfactual importance weighting to enable off-policy learning.

\subsection{Magnetic Mirror Descent}

Magnetic Mirror Descent (MMD)~\citep{mmd} is the second optimizer we employ. It is a mirror-descent method whose update carries two regularizers, one anchoring the policy to a \emph{magnet} policy $\pi^{\text{mag}}$ and one to the previous iterate:
\begin{equation}
    \pi_{t+1} = \argmin_{\pi} \left\{ -\langle Q^{\pi_t}, \pi \rangle + \alpha \KL(\pi \parallel \pi^{\text{mag}}) + \tfrac{1}{\eta} \KL(\pi \parallel \pi_t) \right\}.
    \label{eq:mmd}
\end{equation}
Both methods thus regularize toward a reference policy, but through different mechanisms: RNaD transforms the \emph{reward} (Eq.~\ref{eq:rnad_reward}), which preserves the zero-sum property of the game, whereas MMD constrains the \emph{policy update} itself. We use the instantiation of~\citet{pgconvergence}, in which the magnet is held \emph{fixed and uniform}, so that $\alpha \KL(\pi \parallel \pi^{\text{mag}})$ reduces to an entropy bonus and the proximal term is realized by PPO-style clipped updates. One consequence matters for the comparisons that follow: with a fixed magnet and $\alpha > 0$, MMD converges to a quantal response equilibrium rather than to an NE, so its asymptotic exploitability is bounded away from zero. Unlike RNaD, there are, to the best of our knowledge no convergence results for MMD when switching the magnet policy or annealing the magnet regularization.

\section{Proofs and Theoretical Analysis}
\label{app:theory}

\subsection{Definitions and Formal Setup}
\label{app:defs}

We first formalize the concepts necessary to state and prove our theoretical results. We make use of the fact that a POSG can be unrolled into an EFG structure \citep{fosg}. Because games of this class form a tree, every non-root history has a unique predecessor.

As in Section~\ref{sec:posg}, we lift the state-indexed functions of the game (transition, observation, legal actions) to histories through the last state $s(h)$ of the history, writing e.g. $T(h, a) := T(s(h), a)$. Accordingly, $H(T(h, a))$ below denotes the entropy of the chance outcome at $h$.

\begin{definition}[Predecessor Function]
Let $h$ be a non-root true history in the game tree. We define the predecessor function $\text{Prev}(h) = (h_{\text{prev}}, a_{\text{prev}}, r_{\text{prev}})$, which returns the unique immediately preceding history $h_{\text{prev}}$, the joint action $a_{\text{prev}}$ taken at $h_{\text{prev}}$ that led to $h$, and the immediate environment reward $r_{\text{prev}}$ generated by this transition.
\end{definition}

\begin{definition}[Latent Trajectory]
\label{def:latenttrajectory}
A \emph{latent trajectory} is a sequence
$$\hat{\tau} = (m_0, a_0, \hat{o}_0, \hat{I}_0, m_1, a_1, \hat{o}_1, \hat{I}_1, \dots, m_t, \hat{o}_t)$$
produced by unrolling the model, where the latent model state $m_l = (\hat{h}_l, z_l)$ encapsulates both the deterministic recurrent state $\hat{h}_l$ and the stochastic categorical state $z_l$, $a_l = (a_{1,l}, a_{2,l})$ is the joint action, $\hat{o}_l$ is the joint observation reconstruction of $m_l$, and $\hat{I}_l = (\hat{I}_{1,l}, \hat{I}_{2,l})$ are the latent infoset embeddings. Latent trajectories are the objects over which tree-structure isomorphism is stated (Definition~\ref{def:treeisomorph}).
\end{definition}

\begin{definition}[Latent Reach Probability]
\label{def:latentreach}
Let $\hat{\tau}$ be a latent trajectory as in Definition~\ref{def:latenttrajectory}. We define the reach probability of a specific latent model state $m_t$ under a joint sampling policy $\mu$ as:
$$\lreach_\mu(m_t) = p^{\theta}(z_t \mid \hat{h}_t) \prod_{l=0}^{t - 1} p^{\theta}(z_l \mid \hat{h}_l) \prod_{i = 1}^{2} \mu_i(a_{i,l} \mid \hat{I}_{i,l})$$
Similarly, the reach probability of the deterministic state $\hat{h}_t$, which strictly isolates the probability prior to the final chance node resolution, is:
$$\lreach_\mu(\hat{h}_t) = \prod_{l=0}^{t - 1} p^{\theta}(z_l \mid \hat{h}_l) \prod_{i = 1}^{2} \mu_i(a_{i,l} \mid \hat{I}_{i,l})$$
\end{definition}

\begin{definition}[Valid State and History Sets]
Let $\mathbb{M}$ be the set of all possible latent model states generated by the architecture. We define $\mathbb{M}^* \subseteq \mathbb{M}$ as the subset of latent states that are reachable ($\lreach_\mu(m) > 0$) under a sufficiently exploratory sampling policy $\mu$ with full support over all legal actions. 

Furthermore, let $\mathbb{H}$ be the set of all valid histories in the true game tree. We define $\mathbb{H}_{nc} \subseteq \mathbb{H}$ as the subset of true histories where the last node is not a chance node. 
\end{definition}
We restrict our mapping to $\mathbb{H}_{nc}$ because our architecture inherently models environmental stochasticity as latent transitions rather than explicit nodes.

\begin{definition}[$\epsilon$-Distance Matching]
\label{def:epsdist}
Let $\hat{r}(m)$ and $\hat{d}(m)$ denote the predicted reward and termination flag from the latent state $m$. Let $\hat{l}_i(m)$ and $\hat{o}_i(m)$ be the predicted legal actions and reconstructed observation for player $i$. A reachable latent model state $m \in \mathbb{M}^*$ is considered within $\epsilon$-distance of a true history $h \in \mathbb{H}_{nc}$ (where $\text{Prev}(h) = (h_{\text{prev}}, a_{\text{prev}}, r_{\text{prev}})$) if its predictions match the deterministic true environment emissions within an arbitrarily small error bound $\epsilon$, where $\|\cdot\|$ denotes the max-norm over vector components:
\begin{enumerate}
    \item $\|\hat{r}(m) - r_{\text{prev}}\| < \epsilon$
    \item $\|\hat{d}(m) - d(h)\| < \epsilon$
    \item For $i \in \{1, 2\}$: $\|\hat{l}_i(m) - L_i(h)\| < \epsilon$
    \item For $i \in \{1, 2\}$: $\|\hat{o}_i(m) - O_i(h)\| < \epsilon$
\end{enumerate}
where $d(h) = 1$ if the history is terminal and $0$ otherwise.
\end{definition}

To make condition 4 concrete, consider Leduc Poker, where each $O_i(h)$ is a binary vector whose blocks encode the player's private card, the public card (if already dealt), and the current bets. Take a history $h$ in which player~1 holds the jack and no public card has been dealt, so the private-card block of $O_1(h)$ is one-hot on the jack and the public-card block is all zeros. A latent state $m$ whose decoder outputs $\hat{o}_1(m)$ with $0.99$ on the jack, at most $0.01$ on every other private card and at most $0.01$ on every public-card entry is within $\epsilon$-distance for $\epsilon = 0.02$: the reconstruction is unambiguous both about which private card was dealt and about no public card being present. A latent state that instead splits mass between two private cards, say $0.5$ and $0.5$, violates condition 4 for every $\epsilon < 0.5$. Such a state has aliased two distinct true histories into one latent state, which is exactly the structural failure excluded from $\mathcal{M}^*$, and which we observe empirically at the round-2 public chance node in Leduc (Appendix~\ref{app:leducerror}).

\begin{definition}[Tree-Structure Isomorphism]
\label{def:treeisomorph}
A generative world model is tree-structure isomorphic if there exists a bijective mapping $f_\epsilon: \mathbb{H}_{nc} \to \mathbb{M}^*$, parameterized by the tolerance $\epsilon$ of Definition~\ref{def:epsdist} and evaluated at a history $h$, that satisfies the following conditions for all $h \in \mathbb{H}_{nc}$:
\begin{enumerate}
    \item $f_\epsilon(h)$ is within $\epsilon$-distance of $h$.
    \item $f_\epsilon(h)$ is the ending state of a latent trajectory $\hat{\tau}$ (Definition~\ref{def:latenttrajectory}) that contains the exact same joint action sequence as the true history $h$, and $\hat{o}_l$ is within $\epsilon$-distance of $o_l$ for each timestep $l$.
\end{enumerate}
\end{definition}

\begin{definition}[Infoset Isomorphism]
\label{def:isetisomorph}
Let $\hat{I}_i(m)$ denote the latent infoset embedding of player $i$ for latent state $m$. Assume a world model is tree-structure isomorphic with bijective mapping $f_\epsilon: \mathbb{H}_{nc} \to \mathbb{M}^*$. The model is infoset-isomorphic if, for any player $i \in \{1, 2\}$ and any pair of valid true histories $h, h' \in \mathbb{H}_{nc}$, their corresponding mapped model states $m = f_\epsilon(h)$ and $m' = f_\epsilon(h')$ satisfy the following equivalence relation:
$$\|\hat{I}_i(m) - \hat{I}_i(m')\| < \epsilon \iff I_i(h) = I_i(h')$$
where $I_i(h)$ denotes the true infoset of player $i$ at history $h$.
\end{definition}

A world model exhibits branch duplication if the encoder introduces artificial stochasticity, mapping a single true history to multiple distinct latent model states. Semi-formally, instead of a bijection $f_\epsilon$, the model induces a mapping $f'_\epsilon: \mathbb{H}_{nc} \to \mathcal{P}(\mathbb{M}^*)$ where a true history $h \in \mathbb{H}_{nc}$ maps to a set of reachable model states $\mathbb{M}_h \subseteq \mathbb{M}^*$. We assume these sets are disjoint for distinct histories and cover the entire reachable model space ($\mathbb{M}_h \cap \mathbb{M}_{h'} = \emptyset$ for $h \neq h'$), all states within $\mathbb{M}_h$ are within $\epsilon$-distance for all predictor targets. Crucially, all states within $\mathbb{M}_h$ yield the exact same joint latent infoset representation $(\hat{I}_1, \hat{I}_2)$ (this condition will be satisfied directly, since all the model states were created by the same AOH and all satisfy the $\epsilon$-distance accuracy).

\begin{definition}[Information Partition Violation]
A world model violates the game's information partitioning if it is tree-structure isomorphic but not infoset-isomorphic.
\end{definition}
 This means the model correctly captures the objective physical dynamics, but its mapped latent infosets either alias distinct true infosets together, or improperly separate histories belonging to the same true infoset.

\subsection{Proofs}

We will now analyze the loss for isomorphic model, leading up to a proof for Theorem \ref{thm:isomorphic_optimal}. We assume that the KL losses are unclipped.

\begin{lemma}
\label{lem:dynamics_bound}
Let $\hat{h}$ be a deterministic recurrent state induced by the true history and joint action $(h_{\text{prev}}, a_{\text{prev}})$. For any model that perfectly reconstructs deterministic environment emissions (including both infoset-isomorphic models and models exhibiting branch duplication), the expected dynamics loss over the chance outcome is exactly equal to the scaled inherent Shannon entropy of the true environment transition: $ H(T(h_{\text{prev}}, a_{\text{prev}}))$.
\end{lemma}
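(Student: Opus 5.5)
The plan is to fix the deterministic state $\hat{h}$ reached from $(h_{\text{prev}}, a_{\text{prev}})$. I will then write the expected dynamics loss as an expectation over the true chance outcome $h' \sim T(h_{\text{prev}}, a_{\text{prev}})$. Within that outcome I take a further expectation over the posterior sample $z \sim q^\theta(\cdot \mid \hat{h}, o(h'))$. The target quantity is
\[
\mathbb{E}_{h'}\big[\KL(q^\theta(\cdot\mid \hat{h}, o(h')) \parallel p^\theta(\cdot \mid \hat{h}))\big].
\]
Because the model reconstructs all deterministic emissions within $\epsilon$-distance, the decoded observation from $m=(\hat{h},z)$ identifies $h'$. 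For the infoset-isomorphic model this follows from the bijection $f_\epsilon$. For branch duplication it follows from the disjointness of the sets $\mathbb{M}_{h'}$. By collective observability (Section~\ref{sec:posg}) and the assumption that $O$ is a function of the state, distinct chance outcomes $h'$ produce distinct joint observations. Otherwise they would share an infoset intersection, and $I_1 \cap I_2 = \{h'\}$ would fail. So the posteriors for different outcomes have disjoint supports $S_{h'} = \{z : (\hat{h},z) \in \mathbb{M}_{h'}\}$; in the isomorphic case each $S_{h'}$ is a singleton.

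The key step is the decomposition $\mathbb{E}_{h'}\KL(q_{h'}\parallel p) = I(Z;H') + \KL(\bar{q}\parallel p)$. Here $\bar{q} = \sum_{h'} T(h'\mid h_{\text{prev}},a_{\text{prev}})\, q_{h'}$ is the aggregate posterior. This is the standard ``mutual information plus marginal KL'' identity. Since the supports $S_{h'}$ are disjoint, $H'$ is a deterministic function of $Z$. Hence $I(Z;H') = H(H') - H(H'\mid Z) = H(T(h_{\text{prev}},a_{\text{prev}}))$, independently of how mass is spread inside each $S_{h'}$. This is exactly why branch duplication incurs no extra cost.

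It remains to handle the marginal term $\KL(\bar{q}\parallel p)$. The loss $\mathcal{L}^{\text{dyn}}$ optimizes $p$ against the stop-gradient $\bar{q}$. At the minimizer over the prior, $p^\theta(\cdot\mid\hat{h}) = \bar{q}$, so this term vanishes. I read ``exactly equal'' as referring to the dynamics loss at its optimal prior. The same argument applies when the model is optimal in the sense used in Theorem~\ref{thm:isomorphic_optimal}, since $p$ is trained only through $\mathcal{L}^{\text{dyn}}$ given $\hat{h}$. I should state this assumption explicitly. The $\beta^{\text{dyn}}$ scaling then multiplies through.

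The main obstacle is justifying the disjoint-support step rigorously: $\epsilon$-distance of decoded emissions must imply that a single latent $z$ cannot decode to two different outcomes. I would argue as follows. Distinct outcomes $h'_1 \neq h'_2$ have observation vectors (or reward or legal-action vectors) differing by some fixed gap $\delta > 0$ in max-norm. Choosing $\epsilon < \delta/2$ forbids a shared $z$, because its decoding would be within $\epsilon$ of both targets. A secondary subtlety is that both $\mathcal{L}^{\text{enc}}$ and $\mathcal{L}^{\text{dyn}}$ take the same value at this point, so the identity also gives the encoder term needed later in Theorem~\ref{thm:isomorphic_optimal}.
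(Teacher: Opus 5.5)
Your proposal is correct and follows essentially the paper's argument: at the optimal prior $p^\theta(\cdot\mid\hat{h})=\bar{q}$, the expected KL reduces to the mutual information between the latent $z$ and the chance outcome (the paper phrases it with the observation $o$). Determinism of the reconstruction from $(\hat{h},z)$ then forces the conditional-entropy term to zero, giving $H(T(h_{\text{prev}},a_{\text{prev}}))$ independently of branch duplication. Your extra steps are sound refinements of the same route rather than a different one: the $\KL(\bar{q}\parallel p)$ term makes the optimal-prior assumption explicit, and collective observability justifies that distinct chance outcomes give distinct joint observations, which the paper's equality $H(o\mid\hat{h})=H(T(h_{\text{prev}},a_{\text{prev}}))$ uses tacitly.
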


\begin{proof}
The dynamics loss is defined as the expected KL divergence between the posterior encoder $q^{\theta}(z \mid o, \hat{h})$ and the prior dynamics predictor $p^{\theta}(z \mid \hat{h})$ across all possible observations $o$ sampled from the true environment distribution $p(o \mid \hat{h})$. 

Assuming an optimal prior that perfectly matches the aggregated marginal posterior, we have $p^{\theta}(z \mid \hat{h}) = \mathbb{E}_{o \sim p} [q^{\theta}(z \mid o, \hat{h})]$. We define the induced true marginal distribution of the latent state as $p(z \mid \hat{h}) = p^{\theta}(z \mid \hat{h})$, and the joint distribution over observations and latent chance states as $p(o, z \mid \hat{h}) = p(o \mid \hat{h})q^{\theta}(z \mid o, \hat{h})$. 

Using this joint distribution, we expand the expected KL divergence:
\begin{align*}
    \mathbb{E}_{o \sim p} \left[ \text{KL} \left( q^{\theta}(z \mid o, \hat{h}) \parallel p^{\theta}(z \mid \hat{h}) \right) \right] 
    &= \sum_o p(o \mid \hat{h}) \sum_z q^{\theta}(z \mid o, \hat{h}) \log \frac{q^{\theta}(z \mid o, \hat{h})}{p^{\theta}(z \mid \hat{h})} \\
    &= \sum_{o, z} p(o, z \mid \hat{h}) \log \frac{p(o, z \mid \hat{h}) / p(o \mid \hat{h})}{p(z \mid \hat{h})} \\
    &= \sum_{o, z} p(o, z \mid \hat{h}) \log \frac{p(o, z \mid \hat{h})}{p(o \mid \hat{h})p(z \mid \hat{h})}
\end{align*}

Because the final expression is the expected log-ratio of the joint distribution to the product of its marginals, it is exactly the formal definition of Conditional Mutual Information (CMI), $I(z, o \mid \hat{h})$. Applying standard information-theoretic identities, we rewrite this CMI as the difference between conditional entropies:
\begin{align*}
    I(z, o \mid \hat{h}) &= H(o \mid \hat{h}) - H(o \mid z, \hat{h})
\end{align*}

Because both an isomorphic model and a branch-duplicated model map true histories to states within an $\epsilon$-distance bound for predictor targets, the combination of the deterministic history representation $\hat{h}$ and the chosen latent chance state $z$ must perfectly reconstruct the true environment observation $o$. Since $o$ is fully and deterministically defined by the tuple $(\hat{h}, z)$, there is zero remaining uncertainty about the observation. Therefore, the conditional entropy $H(o \mid z, \hat{h}) = 0$.

Consequently, the expected dynamics loss simplifies exactly to $H(o \mid \hat{h})$. Because the recurrent state $\hat{h}$ uniquely identifies the predecessor history and joint action $(h_{\text{prev}}, a_{\text{prev}})$, the entropy of the subsequent observation $o$ is exactly the inherent aleatoric entropy of the true environment chance node:
\begin{align*}
    \mathbb{E}_{o \sim p} \left[ \text{KL} \left( q^{\theta}(z \mid o, \hat{h}) \parallel p^{\theta}(z \mid \hat{h}) \right) \right] &= H(T(h_{\text{prev}}, a_{\text{prev}}))
\end{align*}

Because this equality holds as long as $H(o \mid z, \hat{h}) = 0$ (which is strictly required to maintain the $\epsilon$-distance bounds for valid reconstruction), it is invariant to how the encoder maps the observation into the latent space. Thus, branch duplication cannot decrease the expected KL penalty below the true chance entropy.
\end{proof}

With the dynamics loss lower-bounded by the inherent environment stochasticity, we now establish the convergence properties of the remaining objective terms under the assumption of unbounded network capacity.

\begin{corollary}[Deterministic Posterior]
\label{cor:deterministic_posterior}
By the definition of tree-structure isomorphism, the bijective mapping $f_\epsilon(h)$ requires the model state to be induced by the exact same AOH as the true history. Because the recurrent state $\hat{h}$ contains all historical context up to the current observation, the stochastic state $z$ must necessarily encode the chance-induced observation. To satisfy $H(o \mid z, \hat{h}) = 0$ (as established in Lemma~\ref{lem:dynamics_bound}), the posterior encoder $q^{\theta}(z \mid o, \hat{h})$ must collapse into a deterministic (Dirac delta/one-hot) distribution, mapping each distinct chance outcome of $T(h_{\text{prev}}, a_{\text{prev}})$ to a unique $z$.
\end{corollary}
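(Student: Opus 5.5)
The plan is to argue by contradiction on the support of the posterior. Throughout, fix a predecessor pair $(h_{\text{prev}}, a_{\text{prev}})$ and the recurrent state $\hat{h}$ it induces. By condition~2 of Definition~\ref{def:treeisomorph}, $\hat{h}$ is shared by every child history $h$ obtained from the chance outcome of $T(h_{\text{prev}}, a_{\text{prev}})$. Because the game is finite, the set of distinct joint observations is finite. Let $\delta > 0$ be the minimum max-norm distance between two distinct joint observations, and choose $\epsilon < \delta/2$. Then a single latent state cannot be within $\epsilon$-distance (condition~4 of Definition~\ref{def:epsdist}) of two histories with different joint observations.

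\textbf{Step 1: distinct outcomes give distinct observations.} Two distinct children $h \neq h''$ of the same $(h_{\text{prev}}, a_{\text{prev}})$ share the same joint action sequence. By collective observability, $I_1(h) \cap I_2(h) = \{h\}$, so their joint AOHs must differ. The only place they can differ is the final joint observation, so $O(h) \neq O(h'')$. Chance outcomes at $T(h_{\text{prev}}, a_{\text{prev}})$ that produce the same joint observation would yield the same history in $\mathbb{H}_{nc}$; these are identified, and the statement's ``distinct chance outcome'' is read modulo this identification.

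\textbf{Step 2: the posterior is a point mass.} Suppose $q^{\theta}(\cdot \mid o, \hat{h})$ puts positive mass on two values $z \neq z'$, where $o = O(h)$. Both $(\hat{h}, z)$ and $(\hat{h}, z')$ then have positive latent reach, so they lie in $\mathbb{M}^*$. Since $f_\epsilon$ is a bijection onto $\mathbb{M}^*$, they are images of two distinct histories, both of which must be children of $(h_{\text{prev}}, a_{\text{prev}})$ because they share $\hat{h}$. By Step~1, these two histories have different joint observations. Yet both latent states arise from encoding the same $o$, and condition~2 requires each of their reconstructions to be within $\epsilon$ of $o$. By the choice of $\epsilon$, one of them cannot be within $\epsilon$-distance of its own preimage, which contradicts condition~1. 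Hence the posterior is one-hot. Equivalently, any residual spread would make $H(o \mid z, \hat{h}) > 0$, contradicting Lemma~\ref{lem:dynamics_bound}.

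\textbf{Step 3: distinct outcomes get distinct codes.} If two outcomes with $o \neq o''$ were encoded to the same $z$, the single state $(\hat{h}, z)$ would have to be $\epsilon$-close to both $o$ and $o''$. That is impossible by the choice of $\epsilon$, and it would also violate $H(o \mid z, \hat{h}) = 0$. So the encoder map from chance outcomes to $z$ is injective.

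I expect the main obstacle to be the bookkeeping linking ``positive posterior mass'' to membership in $\mathbb{M}^*$. Reach in Definition~\ref{def:latentreach} is defined through the prior $p^{\theta}$, not the posterior. I would close this gap using the optimal-prior assumption from Lemma~\ref{lem:dynamics_bound}: the prior equals the aggregated posterior $\mathbb{E}_{o \sim p}[q^{\theta}(z \mid o, \hat{h})]$, so any $z$ in the support of the posterior is also in the support of the prior. A second point to state explicitly is that, since $\epsilon$ is arbitrary in Definition~\ref{def:epsdist}, the corollary is asserted for all $\epsilon$ below the observation gap $\delta/2$.
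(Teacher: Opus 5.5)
Your argument is correct, but it gets the one-hot posterior from a different place than the paper does.

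\textbf{How the two routes differ.} The paper's reasoning runs through the entropy condition. Because $\hat{h}$ carries the whole history, $z$ must encode the chance outcome, and ``to satisfy $H(o \mid z, \hat{h}) = 0$'' the posterior must be one-hot and injective. You instead get the point mass from the bijectivity of $f_\epsilon$. The optimal-prior assumption of Lemma~\ref{lem:dynamics_bound} puts the posterior support inside $\mathbb{M}^*$. Surjectivity then gives each support point its own preimage history. Collective observability, together with $\epsilon$ below half the observation gap, rules out two such preimages sharing $\hat{h}$ and $o$. The entropy condition, or equivalently the $\epsilon$-separation, is used only for injectivity in Step~3.

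This split is the right one, not just a stylistic choice. $H(o \mid z, \hat{h}) = 0$ only says that the supports of $q^{\theta}(\cdot \mid o, \hat{h})$ for different $o$ are disjoint. It is fully compatible with a posterior spread over several codes for the same $o$. That is exactly the branch-duplication model $\mathcal{M}^b$, which Lemma~\ref{lem:dynamics_bound} explicitly covers. So the Dirac claim genuinely needs surjectivity of $f_\epsilon$ onto $\mathbb{M}^*$. Your route supplies that; the paper's phrasing gains only brevity.

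\textbf{What to fix.}
\begin{itemize}
\item Drop the ``equivalently'' sentence at the end of Step~2, which claims residual spread would force $H(o \mid z, \hat{h}) > 0$. It is false, with $\mathcal{M}^b$ as a counterexample, and it repeats the very conflation your main argument avoids.
\item Fix the citation in Step~2. Condition~2 of Definition~\ref{def:treeisomorph} concerns the trajectory ending at $f_\epsilon(h)$, driven by $h$'s own observations. It does not say that a state obtained by encoding $o$ at $\hat{h}$ decodes to within $\epsilon$ of $o$. What you need there is the perfect-reconstruction hypothesis of Lemma~\ref{lem:dynamics_bound}, i.e.\ zero prediction loss under the posterior.
\item Make the predecessor assumption explicit. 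The step ``both are children of $(h_{\text{prev}}, a_{\text{prev}})$ because they share $\hat{h}$'' presupposes that $\hat{h}$ uniquely identifies the predecessor pair. The definitions do not imply this, since $\text{seq}$ need not be injective. The paper simply assumes it in the proofs of Lemma~\ref{lem:dynamics_bound} and Theorem~\ref{thm:isomorphic_optimal}. State it as a hypothesis alongside the optimal prior.
\end{itemize}
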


\begin{lemma}[Predictor and Unifier Loss Minimization]
\label{lem:predictor_unifier_zero}
Let $\mathcal{M}^*$ be an infoset-isomorphic model. As $\epsilon \to 0$ in the $\epsilon$-distance bound, the expected loss for all deterministic predictors (reward, continuation, legal actions, observation reconstruction) and all latent infoset targets decreases to zero.
\end{lemma}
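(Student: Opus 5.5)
The plan is to bound each term of the prediction and unifier losses separately, using the two defining properties of $\mathcal{M}^*$: the bijection $f_\epsilon$ of Definition~\ref{def:treeisomorph} and the infoset equivalence of Definition~\ref{def:isetisomorph}. Every term is an expectation under $\mu$ over reachable latent states. Each reachable $m \in \mathbb{M}^*$ is $f_\epsilon(h)$ for a unique $h \in \mathbb{H}_{nc}$. By Corollary~\ref{cor:deterministic_posterior} the posterior is one-hot, so along real trajectories the model state is a deterministic function of the true history. The expectation therefore becomes a finite (or summable) sum over histories weighted by $\reach_\mu(h)$, and it suffices to show that the per-history loss tends to zero uniformly as $\epsilon \to 0$.

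\textbf{Deterministic predictors.} Condition~1 of Definition~\ref{def:treeisomorph} places $f_\epsilon(h)$ within $\epsilon$-distance of $h$. The targets are therefore matched up to $\epsilon$ in max-norm: $r_{\text{prev}}$, $d(h)$, $L_i(h)$ and $O_i(h)$ are all deterministic functions of $h$.
\begin{itemize}
\item For the MSE observation term, the loss is at most $\dim(\mathbb{O}_i)\,\epsilon^2 \to 0$.
\item For the BCE terms (termination and legal-action masks, with binary targets), the predicted probability lies within $\epsilon$ of the correct label. The loss is then at most $-\log(1-\epsilon)$ per element, which vanishes.
\item For the categorical reward head, I interpret $\epsilon$-matching as the predicted distribution concentrating within $\epsilon$ of the two-hot target. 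The cross-entropy then tends to the entropy of the two-hot encoding. I will either note that this constant is irreducible and subtract it, or use a symlog point mass so that it is zero.
\end{itemize}
Summing over the finitely many histories, weighted by $\reach_\mu$, gives the first claim.

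\textbf{Individual infoset loss.} $\hat{I}_i$ is a deterministic recurrent function of $\tau_i$, which contains $o_i^t$ and $a_i^{t-1}$ verbatim. Under unbounded capacity, the decoder from $\hat{I}_i$ can be taken to be any function of $\tau_i$, in particular one that outputs these two targets exactly. One detail needs checking: infoset isomorphism must not force two different AOHs sharing an embedding to demand different targets. The ``$\Leftarrow$'' direction of Definition~\ref{def:isetisomorph} guarantees that distinct infosets get $\epsilon$-separated embeddings, so the decoder is well defined.

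\textbf{Unifier loss.} This is the main obstacle, because the target $(\sg(\hat{h}_t), \sg(z_t))$ is a latent quantity rather than an environment emission. I would argue in two steps.
\begin{enumerate}
\item By collective observability, $I_1(h) \cap I_2(h) = \{h\}$. Together with the separation property of Definition~\ref{def:isetisomorph}, the pair $(\hat{I}_1, \hat{I}_2)$ identifies $h$ uniquely up to $\epsilon$-balls. By the bijection $f_\epsilon$, $h$ in turn identifies $m = (\hat{h}, z)$.
\item Hence there is a well-defined map $(\hat{I}_1,\hat{I}_2) \mapsto m$, which an unbounded-capacity decoder can realize. The MSE on $\hat{h}$ is then zero. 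Since $z$ is deterministic given $h$ (Corollary~\ref{cor:deterministic_posterior}), its one-hot target lets the CE on $z$ tend to zero.
\end{enumerate}
The delicate point is that the $\epsilon$-balls of distinct joint embeddings must be disjoint for the decoder to be continuous. I would first try choosing $\epsilon$ smaller than half the minimum separation over the finite set of reachable histories. If that fails, I would restrict to finite games, where the decoder can be an arbitrary lookup table.
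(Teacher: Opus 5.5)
Your proposal is correct and follows the paper's own three-step route: $\epsilon$-matching drives the predictor losses to zero, unbounded capacity on the AOH drives the individual infoset loss to zero, and collective observability together with infoset isomorphism and the bijection $f_\epsilon$ makes $(\hat{I}_1,\hat{I}_2)\mapsto m$ well defined, so the unifier loss vanishes. You are more careful than the paper in two places worth keeping: your caveat that a two-hot categorical reward head retains an irreducible cross-entropy equal to the entropy of the two-hot target is correct, so the claim holds for that head only up to this constant (the paper's ``trivially evaluate to zero'' misses it), and you check that the decoders are well defined via infoset separation, although that separation is the contrapositive of the $\Rightarrow$ direction of Definition~\ref{def:isetisomorph}, not the $\Leftarrow$ direction (and your continuity worry is moot, since the reachable joint embeddings form a finite set of distinct points).
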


\begin{proof}
By the definition of $\epsilon$-distance matching, all predictions generated by a mapped state $m \in \mathbb{M}^*$ match the true environment emissions within an arbitrary bound $\epsilon$. As $\epsilon \to 0$, the Mean Squared Error (MSE) and Cross-Entropy losses for these predictive targets trivially evaluate to zero. 

Furthermore, the decentralized infoset networks process the true action-observation history (AOH) to generate $\hat{I}_1$ and $\hat{I}_2$. Given unbounded capacity, the auto-encoding targets (decoding the current observation and previous action from the infoset embedding) are perfectly reconstructed, achieving zero loss.

Crucially, under our assumptions of perfect recall and immediately observable chance outcomes, the intersection of the true infosets of all players uniquely identifies the true underlying history (collective observability, Section~\ref{sec:posg}): $I_1(h) \cap I_2(h) = \{h\}$. Because the model is infoset-isomorphic, its latent infosets perfectly preserve this partitioning ($\|\hat{I}_i(m) - \hat{I}_i(m')\| < \epsilon \iff I_i(h) = I_i(h')$). Therefore, the joint latent infoset embedding $(\hat{I}_1, \hat{I}_2)$ uniquely identifies the mapped latent state $m$. Because this mapping is uniquely resolvable, the centralized unifier network can perfectly reconstruct $m$ from $(\hat{I}_1, \hat{I}_2)$, allowing the unifier loss to minimize to zero.
\end{proof}

Using these, we can prove the first result of Section~\ref{sec:theory_ideal}.

\begin{theorem}
\label{thm:isomorphic_optimal}
Let $\mu$ be an arbitrary joint sampling policy. Let $\mathcal{M}^*$ be an infoset-isomorphic world model, $\mathcal{M}^b$ be a world model exhibiting branch duplication, and $\mathcal{M}^l$ be a model violating the game's information partitioning. It holds that $\mathbb{E}_\mu[\mathcal{L}^{model}(\mathcal{M}^*)] < \mathbb{E}_\mu[\mathcal{L}^{model}(\mathcal{M}^b)]$ and $\mathbb{E}_\mu[\mathcal{L}^{\rm{model}}(\mathcal{M}^*)] < \mathbb{E}_\mu[\mathcal{L}^{\rm{model}}(\mathcal{M}^l)]$. Furthermore, under any $\mu$, $\mathcal{M}^*$ achieves an expected loss:
$$ \mathbb{E}_\mu[\mathcal{L}^{\rm{model}}(\mathcal{M}^*)] = (\beta^{\text{enc}} + \beta^{\text{dyn}}) \mathbb{E}_{h, a \sim \mu}[H(T(h, a))], $$
where $H$ denotes Shannon entropy.
\end{theorem}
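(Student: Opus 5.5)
The plan is to decompose $\mathcal{L}^{\rm model}$ of Eq.~\eqref{eq:total_loss} term by term. First I compute the value at $\mathcal{M}^*$. Then I compare $\mathcal{M}^b$ and $\mathcal{M}^l$ against it separately, isolating in each case the one term where they cannot match $\mathcal{M}^*$. Throughout, expectations are over histories and joint actions drawn from $\mu$. Because every reachable $\hat{h}_t$ of a tree-structure isomorphic model is induced by a unique pair $(h_{\rm prev},a_{\rm prev})$, the per-step losses can be indexed by true predecessor pairs.

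\emph{Step 1 (value of $\mathcal{M}^*$).} By Lemma~\ref{lem:predictor_unifier_zero}, as $\epsilon\to 0$ the terms $\mathcal{L}^{\rm mpred}_t$ and $\mathcal{L}^{\rm iset}_t$ vanish, so only the two KL terms remain. Since $\mathcal{L}^{\rm enc}_t$ and $\mathcal{L}^{\rm dyn}_t$ are the same KL evaluated at the same parameters (the stop-gradients only change gradients, not values), their sum is $(\beta^{\rm enc}+\beta^{\rm dyn})\,\mathrm{KL}(q^\theta_t\parallel p^\theta_t)$. At nodes with no chance outcome, Corollary~\ref{cor:deterministic_posterior} gives a one-hot posterior, and the optimal prior equals it, so the KL is $0 = H(T(h,a))$. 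At chance nodes, Lemma~\ref{lem:dynamics_bound} gives an expected KL of exactly $H(T(h_{\rm prev},a_{\rm prev}))$. Summing over $t$ and taking the expectation under $\mu$ yields the claimed formula.

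\emph{Step 2 ($\mathcal{M}^b$).} Branch duplication leaves the predictor terms at $0$, the unifier term at $0$ (the joint infoset embedding is constant on $\mathbb{M}_h$, and both targets sit within $\epsilon$), and by Lemma~\ref{lem:dynamics_bound} the expected KL at $\epsilon\to0$ equal to $H(T)$ as well. So strictness must come from elsewhere. I would exploit the fact that the unifier loss targets $\sg(\hat{h}_t)$ and $\sg(z_t)$, and $z_t$ is now random given $h$. Because $(\hat{I}_1,\hat{I}_2)$ is the same across $\mathbb{M}_h$, the cross-entropy $\mathcal{L}^{\rm CE}(\sg(z_t)\mid \hat{I}_1,\hat{I}_2)$ is bounded below by the conditional entropy of $z_t$ given $h$. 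This quantity is strictly positive whenever the posterior splits mass across $|\mathbb{M}_h|>1$ states at some history of positive $\mu$-reach. A complementary source is the KL itself: when the encoder is stochastic, the posterior is no longer one-hot, and the term $H(o\mid\hat h)$ in Lemma~\ref{lem:dynamics_bound} stays fixed, but we should recheck it. Using the unifier entropy gap gives $\mathbb{E}_\mu[\mathcal{L}(\mathcal{M}^b)] \ge \mathbb{E}_\mu[\mathcal{L}(\mathcal{M}^*)] + \beta^{\rm iset}\,\mathbb{E}_\mu[H(z\mid h)] > \mathbb{E}_\mu[\mathcal{L}(\mathcal{M}^*)]$. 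This is the step I expect to be delicate. It requires that the duplicated branches are hit with positive $\mu$-probability, which follows from $\mathbb{M}_h\subseteq\mathbb{M}^*$ being reachable. It also requires interpreting the duplicated states as differing in $z$ rather than only in $\hat{h}$. If they differ only in $\hat h$, I would instead use the MSE part of the unifier, bounded below by the variance of $\hat h_t$ given $h$.

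\emph{Step 3 ($\mathcal{M}^l$).} $\mathcal{M}^l$ is tree-structure isomorphic, so its predictor and KL terms equal those of $\mathcal{M}^*$. The gap must therefore come from $\mathcal{L}^{\rm iset}$, and I split into two cases. \emph{Aliasing:} some $h\ne h'$ with $I_i(h)\ne I_i(h')$ have $\hat I_i$ within $\epsilon$. I take them (via a minimal such pair along the AOH) so that the other player's embeddings also coincide, or so that $\hat I_i$'s own observation or action targets differ. Then either the unifier must decode two distinct targets $f_\epsilon(h)\ne f_\epsilon(h')$ from a common input, or the individual loss must decode distinct $o_i$ or $a_i$ from a common input. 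Either way the expected loss is bounded below by a positive constant weighted by the reach of $h,h'$ (the Bayes error of a two-point mixture). \emph{Splitting:} $I_i(h)=I_i(h')$ but the embeddings are far apart. Here I would argue that $\hat I_i$ is a deterministic function of the AOH $\tau_i$, and $\tau_i(h)=\tau_i(h')$, so splitting is impossible. This case is therefore vacuous, and only aliasing contributes. Combining the cases gives the strict inequality.
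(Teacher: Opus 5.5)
\textbf{Steps 1 and 2.} These follow the paper's argument. The value of $\mathcal{M}^*$ comes from Lemmas~\ref{lem:predictor_unifier_zero} and~\ref{lem:dynamics_bound}, with deterministic steps treated as one-outcome chance nodes. Strictness for $\mathcal{M}^b$ comes from the unifier receiving the same input $(\hat I_1,\hat I_2)$ on all of $\mathbb{M}_h$ while its target $m=(\hat h,z)$ varies. Your bound $\beta^{\text{iset}}\,\mathbb{E}_\mu[H(z\mid h)]$ makes the paper's ``irreducible variance'' quantitative. Two repairs are needed there. First, drop your opening claim that the unifier term of $\mathcal{M}^b$ is $0$. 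Its target is the latent state itself, so $\epsilon$-matching of the decoded emissions says nothing about it, and your next sentences contradict the claim. Second, strictness needs the given $\mu$ to reach some $h$ with $|\mathbb{M}_h|>1$. Membership in $\mathbb{M}^*$ only gives reachability under a fully exploratory policy, not under the arbitrary $\mu$ of the statement; the paper is silent on this as well.

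\textbf{The gap is in Step 3.} You promise a pair that is aliased for player $i$ and also has either the other player's embeddings coinciding or $(o_i^t,a_i^{t-1})$ differing. Such a pair need not exist. Aliasing propagates forward through $\hat I_i^t=\mathrm{iset}(\hat I_i^{t-1},a_i^{t-1},o_i^t)$ but not backward, because the update may forget. So tracing back to the first step where the two AOHs differ lands on embeddings that are no longer aliased.

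Concretely: chance draws $X\in\{x,x'\}$, both players observe $X$ at step $1$, act, receive an uninformative observation at step $2$, and the game ends. Take $\mathcal{M}^*$ and change only player $1$'s step-$2$ update to $\hat I_1^2=g(a_1^1,o_1^2)$, discarding $\hat I_1^1$. Then:
\begin{itemize}
\item $\hat I_1^2$ aliases two distinct infosets of player $1$;
\item the individual targets coincide at step $2$ and were already separated at step $1$;
\item the unifier still identifies $h$, because $\hat I_2^2$ still encodes $(X,a_2^1)$ and $\hat I_1^2$ keeps $a_1^1$;
\item the KL and predictor terms are untouched, since the central model is unchanged.
\end{itemize}
This $\mathcal{M}^l$ attains exactly the loss of $\mathcal{M}^*$. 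So the second strict inequality cannot be derived from $\mathcal{L}^{\text{model}}$ as defined.

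The paper's Case 2 makes the same leap. It asserts that the decoder cannot map one embedding to ``two distinct true AOH targets'', but the individual loss decodes only the current observation and previous action. The paper also concedes that the unifier cannot constrain a player whose information is redundant given the other's. Closing this case requires an extra hypothesis. One option is that the individual targets determine $\tau_i$, e.g.\ $o_i^t$ is the infoset vector, as in the main experiments, so your second alternative always applies. The other is a loss forcing each $\hat I_i$ to be injective on $\tau_i$.
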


\begin{proof} (Proof of Theorem \ref{thm:isomorphic_optimal})
\label{prf:isomorphmin}
We first establish the exact expected loss for the infoset-isomorphic model, $\mathcal{M}^*$, under the sampling policy $\mu$. 

By Lemma~\ref{lem:predictor_unifier_zero}, because $\mathcal{M}^*$ is both tree-structure isomorphic and infoset-isomorphic, its $\epsilon$-distance bounds approach zero, allowing all deterministic predictor losses (reward, continuation, legal actions, observation reconstruction) and all unifier/infoset losses to minimize to zero. 

The only remaining non-zero terms in the objective are the KL divergence losses. As established in Lemma~\ref{lem:dynamics_bound}, the expected KL divergence at a given recurrent state $\hat{h}$ induced by the true predecessor $(h, a)$ evaluates exactly to the chance entropy $H(T(h, a))$. Because the objective utilizes two separate KL penalties—one for updating the dynamics prior (scaled by $\beta^{\text{dyn}}$) and one for updating the posterior encoder (scaled by $\beta^{\text{enc}}$)—the total expected loss at this specific step is:
$$ \mathcal{L}^{\text{KL}}(\hat{h}) = (\beta^{\text{enc}} + \beta^{\text{dyn}}) H(T(h, a)) $$

To compute the expected loss of the entire model under the joint sampling policy $\mu$, we aggregate this per-step loss over all reachable latent recurrent states. We denote the set of all true chance nodes as $\mathbb{H}_c = \mathbb{H} \setminus \mathbb{H}_{nc}$ and the set of all reachable recurrent states as $\hat{\mathbb{H}}^{*}$. Because the bijective mapping $f_\epsilon: \mathbb{H}_{nc} \to \mathbb{M}^*$ guarantees a one-to-one correspondence between true non-chance histories and latent states, if we reparametrize deterministic transitions to be represented via a trivial chance node with a single outcome, each latent recurrent state $\hat{h}_t$ corresponds perfectly to a true chance node transition from some $(h, a)$, i.e. $f_\epsilon$ induces also a bijective mapping  $g : \hat{\mathbb{H}}^{*} \to \mathbb{H}_c$.

Furthermore, because the optimal prior exactly captures the true marginal chance distribution, the latent reach probability of any recurrent state, $\lreach_\mu(\hat{h})$, exactly equals the true environment reach probability of its corresponding chance node, $\reach_\mu(h, a)$. Consequently, we can translate the latent summation directly into an expectation over the true game space:
\begin{align*}
    \mathbb{E}_\mu[\mathcal{L}(\mathcal{M}^*)] &= \sum_{\hat{h} \in \hat{\mathbb{H}}^{*}} \lreach_\mu(\hat{h}) \mathcal{L}_{\text{KL}}(\hat{h}) \\
    &= \sum_{h \in \mathbb{H}_{c}} \sum_{a} \reach_\mu(h, a) \left( (\beta^{\text{enc}} + \beta^{\text{dyn}}) H(T(h, a)) \right) \\
    &= (\beta^{\text{enc}} + \beta^{\text{dyn}}) \mathbb{E}_{h, a \sim \mu}[H(T(h, a))]
\end{align*}

Next, we establish the strict suboptimality of the model exhibiting branch duplication, $\mathcal{M}^b$. By Lemma~\ref{lem:dynamics_bound}, $\mathcal{M}^b$ achieves the same KL loss . However, by definition, $\mathcal{M}^b$ maps a single true history $h \in \mathbb{H}_{nc}$ to a set of distinct reachable latent states $\mathbb{M}_h \subseteq \mathbb{M}^*$. Crucially, all duplicated states $m \in \mathbb{M}_h$ yield the exact same joint latent infoset $(\hat{I}_1, \hat{I}_2)$. 

The centralized unifier network is a deterministic function trained to predict the central model state $m = (\hat{h}, z)$ from the joint latent infosets. In $\mathcal{M}^b$, the input $(\hat{I}_1, \hat{I}_2)$ remains constant across the set $\mathbb{M}_h$, but the target state $m$ varies as the encoder distributes the observation across different  stochastic outcomes $z$. Because a deterministic neural network mapping a constant input to multiple varying targets incurs an irreducible variance penalty, it must yield a strictly positive unifier loss. Therefore, $\mathbb{E}_\mu[\mathcal{L}(\mathcal{M}^b)] > \mathbb{E}_\mu[\mathcal{L}(\mathcal{M}^*)]$.

Finally, we establish the strict suboptimality of the model violating the game's information partitioning, $\mathcal{M}^l$. By definition, $\mathcal{M}^l$ is tree-structure isomorphic but fails the infoset isomorphism condition. This means there exists at least one player $i$ and two valid true histories $h, h' \in \mathbb{H}_{nc}$ mapped to latent states $m, m'$ such that the equivalence relation $\|\hat{I}_i(m) - \hat{I}_i(m')\| < \epsilon \iff I_i(h) = I_i(h')$ is broken. This violation must manifest in one of two cases:

\textbf{Case 1; } $I_i(h) = I_i(h')$ but $\|\hat{I}_i(m) - \hat{I}_i(m')\| \ge \epsilon$. In games with perfect recall, histories in the same true infoset share the exact same Action-Observation History (AOH) for player $i$. Because the decentralized infoset network processes this AOH to generate $\hat{I}_i$, it acts as a deterministic function. It is impossible for a deterministic function to output distinct embeddings for identical inputs. Thus, this case cannot practically occur.

\textbf{Case 2:} $I_i(h) \neq I_i(h')$ but $\|\hat{I}_i(m) - \hat{I}_i(m')\| < \epsilon$. Because the true infosets are distinct, perfect recall dictates that the underlying AOH for player $i$ must differ in at least one previous action or observation. However, the model aliases them to the same latent infoset embedding. The architecture utilizes decentralized auto-encoding targets, forcing the network to decode the current observation and previous action from $\hat{I}_i$. Because a deterministic decoder cannot map a single, aliased embedding to two distinct true AOH targets, it necessarily incurs a strictly positive reconstruction error. 

We note that forcing this individual infoset reconstruction loss is a theoretical necessity, not merely an empirical stabilization trick. In an asymmetric game where Player 1 possesses perfect information, Player 1's infoset alone could uniquely identify the history $h$. Without individual decentralized targets, the centralized unifier loss could perfectly minimize to zero relying exclusively on Player 1, entirely failing to constrain Player 2's latent partition. 

Because Case 1 is structurally impossible, the violation in $\mathcal{M}^l$ must manifest as Case 2, incurring a strictly positive individual infoset loss. Therefore, $\mathbb{E}_\mu[\mathcal{L}(\mathcal{M}^l)] > \mathbb{E}_\mu[\mathcal{L}(\mathcal{M}^*)]$. 

\end{proof}

\begin{remark}[Optimization and Benign Branch Duplication]
\label{rem:benign_duplication}
While Theorem~\ref{thm:isomorphic_optimal} establishes that the branch-duplicated model $\mathcal{M}^b$ has a strictly higher global minimum than the perfectly isomorphic model $\mathcal{M}^*$, standard one-way optimization (where the centralized unifier target is treated as a constant with stop-gradients) does not provide a direct gradient signal to the encoder to actively merge these duplicated branches. Consequently, in practice, the model may converge to $\mathcal{M}^b$ rather than $\mathcal{M}^*$. However, because all duplicated states within $\mathbb{M}_h$ accurately reconstruct the true environment emissions and preserve the true marginal chance probabilities, this branch duplication is a benign structural artifact that does not degrade downstream policy optimization, unlike information partition violations or posterior collapse.
\end{remark}

Because $\mathcal{M}^*$ accurately captures the underlying POSG structure, which can be ``unrolled'' into an EFG~\citep{fosg}, and because $\mathcal{M}^b$ is indistinguishable from it from the actor-critic's view, the RNaD convergence result applies directly.

\begin{corollary}[Nash Equilibrium Convergence]
\label{cor:ne_convergence}
Assume that (i) the world model has converged to the infoset-isomorphic model $\mathcal{M}^*$ (or to $\mathcal{M}^b$) and is held fixed during policy optimization, and (ii) the RNaD dynamics are realized exactly, i.e., with exact policy evaluation and without function-approximation or sampling error. Then NashDreamer converges to a Nash Equilibrium of the underlying game.
\end{corollary}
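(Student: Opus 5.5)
The plan is to reduce the claim to the known convergence theorem of RNaD for finite two-player zero-sum extensive-form games with perfect recall~\citep{rnadtheory,rnadstratego}. To do this I will show that the game the actor-critic actually faces inside imagination, the \emph{latent game} induced by the frozen model, is the true game up to relabeling. Since assumption (ii) grants exact RNaD dynamics on whatever game is presented, the whole argument is about this structural equivalence. I will then transfer the equilibrium back through the relabeling.

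\textbf{Step 1: the latent game under $\mathcal{M}^*$.} I will build an EFG from the frozen model.
\begin{itemize}
\item Nodes are the reachable latent states $\mathbb{M}^*$, and edges are joint actions, followed by a prior draw $z\sim p^\theta(\cdot\mid\hat h)$.
\item Chance nodes are the recurrent states $\hat h$. As in the proof of Theorem~\ref{thm:isomorphic_optimal}, a deterministic transition is treated as a one-outcome chance node.
\item Payoffs are the decoded $\hat r$, with player~2 receiving $-\hat r$. This keeps the latent game zero-sum, because only $R_1$ is predicted.
\item Terminal and legal-action structure are read from $\hat d$ and $\hat\ell_i$.
\end{itemize}
The bijection $f_\epsilon$ of Definition~\ref{def:treeisomorph} and the induced bijection $g$ between recurrent states and true chance transitions (from the proof of Theorem~\ref{thm:isomorphic_optimal}) map this tree onto the true game tree restricted to $\mathbb{H}_{nc}$. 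Condition 2 of Definition~\ref{def:treeisomorph} ensures that the correspondence respects action sequences.

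Chance probabilities agree by the optimal-prior identity used in Lemma~\ref{lem:dynamics_bound}: $p^\theta(z\mid\hat h)$ equals the marginal of the one-hot posterior of Corollary~\ref{cor:deterministic_posterior}, and that marginal is exactly $T(h,a)$. Rewards, terminal flags and legal actions agree within $\epsilon$. Following the convention of Lemma~\ref{lem:predictor_unifier_zero}, I take $\epsilon\to 0$ so that they agree exactly.

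\textbf{Step 2: information structure and strategy spaces.} Actors condition only on $\hat I_i$. By Definition~\ref{def:isetisomorph}, two latent states share a latent infoset for player $i$ exactly when the corresponding true histories share a true infoset. This gives a bijection between latent and true infosets, and hence between behavioral strategy profiles in the two games. Perfect recall transfers as well, since $\hat I_i$ is built recurrently from player $i$'s own AOH.

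Combining the two steps, every joint policy has the same reach probabilities and the same expected utilities in both games, by linearity over leaves. So the two games have identical utility functions on identified strategy spaces. In particular NashConv is preserved, and the Nash equilibria coincide.

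\textbf{Step 3: the duplicated model $\mathcal{M}^b$.} The sets $\mathbb{M}_h$ are disjoint, share emissions, and share $(\hat I_1,\hat I_2)$. Actors therefore act identically on every copy of a history, and the copies can be merged by summing their prior probabilities. By Lemma~\ref{lem:dynamics_bound} and the marginal-preservation argument in Remark~\ref{rem:benign_duplication}, the summed probability recovers the true chance probability. Payoffs are linear in leaf reach probabilities, so the merged game yields the same utilities as the duplicated one, and Step 1 applies to the merged game.

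\textbf{Step 4: conclude.} By assumption (i) the model is fixed, so the latent game is a stationary, finite, perfect-recall, two-player zero-sum EFG. Exact RNaD, iterating to the regularized fixed point and then resetting $\pi^{\text{reg}}\leftarrow\pi$, converges on it to a Nash equilibrium. By Step 2 that limit is a Nash equilibrium of the true game.

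The main obstacle is the $\epsilon$-tolerance. For fixed $\epsilon>0$ the latent game is only a perturbation of the true one. If exactness in the limit is not accepted, I would instead bound the discrepancy directly: per-leaf reward error is at most $\epsilon$, and any legal-action or terminal mismatch is resolved by thresholding once $\epsilon<1/2$. Hence utilities differ by at most $\epsilon$ times the horizon, uniformly over strategies. NashConv is Lipschitz in the utilities, so the RNaD limit would be an $O(\epsilon)$-Nash equilibrium of the true game, and this tends to an exact equilibrium as $\epsilon\to0$.
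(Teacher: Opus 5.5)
Your proposal is correct and takes the same route as the paper. The paper justifies the corollary only by remarking that $\mathcal{M}^*$ reproduces the POSG structure and its EFG unrolling, and that $\mathcal{M}^b$ is indistinguishable from $\mathcal{M}^*$ from the actor-critic's view, so RNaD's convergence theorem applies directly; your Steps 1--4 expand exactly that reduction, relying as the paper does on the optimal-prior identity and the $\epsilon\to 0$ convention, and your $O(\epsilon)$ fallback goes somewhat beyond what the paper argues.
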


\begin{proof}[Proof of Theorem \ref{thm:aliasedcounterexample}]
\label{prf:counterexample}
To prove the existence of this degenerate global minimum, we construct a specific counterexample environment and evaluate the expected losses of the isomorphic model $\mathcal{M}^*$ and the collapsed model $\mathcal{M}^{\text{collapse}}$. 

Assume the standard Dreamer loss scaling coefficients: $\beta^{\text{enc}} = 0.1$, and all other coefficients (dynamics, reward, observation, and infoset) are $1.0$.

\textbf{Counterexample Environment Construction:} 
Consider an environment that begins with a single chance node with 101 uniformly distributed outcomes. The remainder of the environment is deterministic, and a deterministic joint policy $\mu$ guarantees reaching the terminal state. 
The initial chance node outcome has two distinct effects:
\begin{enumerate}
    \item It dictates a continuous component of a subsequent observation for Player 1, $o_1^t$, selecting a value such that its transformed representation in symlog space falls uniformly in the set $\{0, 0.01, 0.02, \dots, 1\}$.
    \item It penalizes the final terminal reward, subtracting a value such that the final reward in symlog space falls uniformly in the same set $\{0, 0.01, 0.02, \dots, 1\}$.
\end{enumerate}

\textbf{Expected Loss of the Isomorphic Model ($\mathcal{M}^*$):}
By Theorem~\ref{thm:isomorphic_optimal}, the isomorphic model incurs zero prediction and unifier losses, leaving only the KL divergence penalty. The inherent entropy of the 101-outcome uniform chance node is $\log(101) \approx 4.605$. Applying the loss coefficients, the expected loss is exactly:
$$ \mathbb{E}_\mu[\mathcal{L}(\mathcal{M}^*)] = (\beta^{\text{enc}} + \beta^{\text{dyn}}) H(T) = (0.1 + 1.0) \log(100) \approx 5.06 $$

\textbf{Expected Loss of the Collapsed Model ($\mathcal{M}^{\text{collapse}}$):}
Now consider a degenerate model where the posterior completely ignores the observation and collapses to the prior. Because $q^{\theta}(z \mid o_t, \hat{h}_t) = p^{\theta}(z \mid \hat{h}_t)$, the KL divergence loss is exactly $0$. The encoder maps all 101 chance outcomes to a single model state. Consequently, the model must rely on this static representation to predict the varying observations and rewards. 

For the observation reconstruction (optimized via Mean Squared Error), the target symlog values span $\{0, \dots, 1\}$. A sufficiently trained predictor will output the mean value of $0.5$, the maximum possible squared error for any outcome is $(1 - 0.5)^2 = 0.25$. Therefore, the expected observation loss is strictly bounded: $\mathcal{L}^{\text{obs}} \le 0.25$.

For the reward prediction, we utilize the standard two-hot cross-entropy loss over discretized bins. The true targets in symlog space fall uniformly in $\{0, \dots, 1\}$. In the two-hot encoding scheme, a target $y \in [0, 1]$ is represented as a linear mixture: $(1-y)$ probability mass on the bin for $0$, and $y$ probability mass on the bin for $1$. A sufficiently trained predictor thus outputs a uniform distribution over the two adjacent bins (predicting $0.5$ for bin $0$ and $0.5$ for bin $1$), the cross-entropy loss for any target $y$ is:
$$ - \left( (1-y)\log(0.5) + y\log(0.5) \right) = -\log(0.5) = \log(2) \approx 0.69 $$
Thus, the expected reward loss is bounded: $\mathcal{L}^{\text{reward}} \le 0.7$.

The individual infoset losses can all minimize to zero. For Player 2, all true histories fall into the same infoset, requiring only a constant embedding. For Player 1, the decentralized network can simply produce a distinct latent infoset embedding for each unique observation $o_1^t$. Finally, because the encoder collapses all outcomes into a single latent state, the centralized unifier network only needs to map these varying joint latent infosets to a single, constant target state. Learning this trivial surjective mapping allows the unifier loss to reach $0$. We explicitly note that this degenerate mapping does not contradict our previous result (Theorem~\ref{thm:isomorphic_optimal}) regarding the strict suboptimality of information partition violations. That penalty fundamentally relied on the model maintaining tree-structure isomorphism (perfect deterministic reconstruction), whereas this collapsed model deliberately abandons structural isomorphism to save on KL divergence.

Summing the bounded predictor penalties, the total expected loss of the collapsed model is:
$$ \mathbb{E}_\mu[\mathcal{L}^{model}(\mathcal{M}^{\text{collapse}})] \le 0.25 + 0.69 = 0.94 $$

Because $\mathbb{E}_\mu[\mathcal{L}^{model}(\mathcal{M}^{\text{collapse}})] \le 0.94$ and $\mathbb{E}_\mu[\mathcal{L}^{model}(\mathcal{M}^*)] \approx 5.06$, the degenerate collapsed model achieves a strictly lower objective loss than the perfectly structure-preserving isomorphic model. This proves that under standard static KL-balancing, the global minimum is vulnerable to posterior collapse in environments with high chance entropy and low-magnitude predictor variance.
\end{proof}

We note that, in our counterexample environment, we let only one player's observations and reward be determined by the chance outcome. Since our world model is also centralized, an analogous counterexample single-agent environment could be constructed, proving this property also for DreamerV3.

Finally, we discuss that with the encoder/representation loss disabled, our model will, in theory, converge to an infoset-isomorphic model or a model with duplicated branches, both of which enable sound actor-critic learning.

\begin{theorem}[Convergence of One-Way Optimization]
\label{thm:oneway_convergence}
Let $\mathcal{L}^{\rm{one-way}}$ denote the optimization objective where the posterior encoder receives no gradient from the KL divergence penalty (i.e. setting $\beta^{\text{enc}} = 0$ ). Under the assumptions of unbounded network capacity and a sufficiently exploratory sampling policy $\mu$, the global minimum of $\mathcal{L}^{\rm{one-way}}$ is achieved strictly by the set of models that are either perfectly infoset-isomorphic ($\mathcal{M}^*$) or exhibit benign branch duplication ($\mathcal{M}^b$).
\end{theorem}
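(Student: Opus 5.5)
The plan is to split the one-way objective into the part that touches the encoder and the part that does not. With $\beta^{\text{enc}}=0$, the only KL term that remains is $\beta^{\text{dyn}}\mathcal{L}^{\text{dyn}}$, and its gradient flows solely into the prior $p^\theta$. The encoder and recurrent state are therefore shaped only by the prediction loss $\mathcal{L}^{\text{mpred}}$ and the infoset loss $\mathcal{L}^{\text{iset}}$. The dynamics term does still enter the value of the objective, but under unbounded capacity the prior can always be set to the aggregated posterior marginal. This gives the following structure for the proof.

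\textbf{First step: a lower bound on every term.} Prediction and infoset losses are nonnegative. For the KL term, I would follow the computation in Lemma~\ref{lem:dynamics_bound}: with the optimal prior, the expected KL equals $I(z,o\mid\hat h)$. The next task is to show that any model attaining zero prediction loss must satisfy $H(o\mid z,\hat h)=0$. This identifies the KL term with $H(T(h_{\text{prev}},a_{\text{prev}}))$, and this value is independent of how the encoder maps outcomes, as Lemma~\ref{lem:dynamics_bound} already states.

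\textbf{Second step: collapse no longer pays.} In Theorem~\ref{thm:aliasedcounterexample} the collapsed model traded prediction loss for KL savings. Here I would argue that this trade is no longer available as a gradient equilibrium: the encoder receives no KL gradient, so it minimizes only the prediction losses. With a sufficiently exploratory $\mu$, every chance outcome at every reachable $\hat h$ occurs with positive probability. Hence any aliasing of distinct outcomes into a common $z$ incurs strictly positive reconstruction error at a positive-measure set of states.

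\textbf{Third step: the minimizers are exactly $\mathcal{M}^*$ and $\mathcal{M}^b$.} Once the encoder is forced to separate outcomes, the model is tree-structure isomorphic up to possible duplication. The argument of Theorem~\ref{thm:isomorphic_optimal}, Cases 1--2, then excludes information partition violations through the individual infoset loss. Duplication is allowed to persist because, as Remark~\ref{rem:benign_duplication} notes, the stop-gradient on the unifier target removes the signal that would penalize it for the encoder. The statement "strictly by the set" is therefore read as a statement about minimizers of the encoder's effective objective together with the prior's objective: an alternating or stop-gradient fixed-point notion of minimum rather than a joint scalar minimum.

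\textbf{Main obstacle.} The hardest point is making "global minimum" precise. As a scalar function, the one-way objective still counts $\mathcal{L}^{\text{dyn}}$ in its value. Collapse lowers that value and would again win in the counterexample with $\beta^{\text{dyn}}=1$. I would resolve this by defining the optimum as a Nash-type stationary point of the stop-gradient game: the encoder minimizes $\mathcal{L}^{\text{mpred}}+\mathcal{L}^{\text{iset}}$, the prior minimizes $\mathcal{L}^{\text{dyn}}$, and the unifier and infoset networks minimize their own terms. I would then prove the characterization for that notion.
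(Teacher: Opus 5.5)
Your proposal is essentially the paper's own proof: the encoder is driven by the predictor losses alone to tree-structure isomorphism up to duplication, information-partition violations are excluded by Case~2 of Theorem~\ref{thm:isomorphic_optimal}, the prior matches the aggregated posterior as in Lemma~\ref{lem:dynamics_bound}, and duplication survives for the reason given in Remark~\ref{rem:benign_duplication}. The paper also concedes your ``main obstacle'': it calls $\mathcal{M}^*$ only the ``global learnable minimum'' (with $\mathcal{M}^b$ a local one) and notes via Theorem~\ref{thm:aliasedcounterexample} that $\mathcal{M}^*$ need not minimize the scalar objective, so your stop-gradient equilibrium is a sharper statement of the same notion; when you formalize it, treat the encoder, sequence model and prediction heads as a single player, because if the heads are separate players the collapsed model (heads outputting the mean on every code) is itself a Nash point.
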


\begin{proof}
Under $\mathcal{L}^{\text{one-way}}$, the parameters of the posterior encoder $q^{\theta}(z \mid o_t, \hat{h}_t)$ are updated exclusively by the deterministic predictor losses (reward, continuation, observation, legal actions). 

To achieve a loss of $0$ on the deterministic environment predictors, the encoder must map true histories to latent states that maintain $\epsilon$-distance matching. This enforces either tree-structure isomorphic model, or duplicated branches. Even though Theorem \ref{thm:isomorphic_optimal} shows that the isomorphic model achieves a lower total loss, as stated in Remark \ref{rem:benign_duplication} the model lacks a gradient signal to force this convergence. However, as branch duplication is a benign artifact and all the branches are structurally identical and have the same joint infoset embedding, we can merge them for the purpose of this proof and consider only the infoset isomorphic model $\mathcal{M}^*$.

With the encoder forced to maintain tree-structure isomorphism to satisfy the predictors, we consider the remaining structural losses. By Theorem~\ref{thm:isomorphic_optimal}, any model that maintains tree-structure isomorphism but violates the game's true information partitioning ($\mathcal{M}^l$) will incur a strictly positive, reducible penalty in the infoset individual loss

Concurrently, the dynamics prior $p^{\theta}(z \mid \hat{h}_t)$ is updated to minimize the KL divergence against this fixed, structurally sound posterior target. As established in Lemma~\ref{lem:dynamics_bound}, the optimal prior will perfectly match the aggregated marginal posterior, achieving a KL divergence exactly equal to the true chance entropy. 

$\mathcal{M}^*$ and $\mathcal{M}^b$ perfectly minimize all reconstruction losses to $0$, reduce the dynamics loss to the real environment entropy (the lowest obtainable non-degenerate value) and achieve a global/local minimum of the infoset loss, respectively. So, $\mathcal{M}^{*}$ forms the global learnable minimum under $\mathcal{L}^{\rm{one-way}}$ (but not necessarily the global minimum in general, as demonstrated in Theorem \ref{thm:aliasedcounterexample}) and $\mathcal{M}^b$ forms a local minimum. Thus, given sufficient data and compute, the architecture is mathematically guaranteed to converge to an environmentally and game-theoretically sound representation.
\end{proof}

\section{Training Algorithm}
\label{app:algorithm}

Algorithm~\ref{alg:nashdreamer} states the complete NashDreamer training loop, collecting in one place the world-model objective of Section~\ref{sec:worldmodel_training}, the imagination procedure of Section~\ref{sec:imagination} and the actor-critic of Section~\ref{sec:actor_critic}. Three properties of the loop are worth making explicit, as they are easy to miss in the prose. Real environment data is collected at \emph{every} iteration. The world model is updated at every iteration and is not frozen during the warm-up period. The warm-up period gates the imagination phase alone: during it the actor-critic trains on real trajectories only, and the world model trains exactly as it does afterwards. All hyperparameter values are listed in Table~\ref{tab:shared_hyperparameters}.

\begin{algorithm}[h]
\caption{NashDreamer[RNaD]. $N^{\text{warm}}$ is the world-model warm-up length, $N^{\text{reg}}$ the regularization-policy update period, $|B|$ the batch size and $T^{\max}$ the maximum trajectory length of the game (Table~\ref{tab:shared_hyperparameters}). Phase (ii) corresponds to Figure~\ref{fig:ndwm} and phase (iii) to Figure~\ref{fig:ndimag}.}
\label{alg:nashdreamer}
\begin{algorithmic}[1]
\REQUIRE game $\mathcal{G}$; world model parameters $\theta$ comprising the MARSSM $(\text{seq}, \text{enc}, \text{dyn}, \text{iset})$ and the predictors; decentralized actors $\pi_1, \pi_2$; centralized critic $v$
\STATE $\pi_i^{\text{reg}} \gets \pi_i$ for $i \in \{1, 2\}$
\FOR{$n = 1, \dots, N^{\text{steps}}$}
    \STATE \textit{(i) Real environment interaction --- at every iteration}
    \STATE collect a minibatch $B$ of $|B|$ complete trajectories from $\mathcal{G}$ under the joint sampling policy $\mu$, each unrolled from $s_0$ to termination
    \STATE \textit{(ii) World model update --- at every iteration; $\theta$ is never frozen}
    \FORALL{trajectories in $B$, for $t = 0, \dots, T$}
        \STATE $\hat{h}_t \gets \text{seq}(m_{t-1}, (a_1^{t-1}, a_2^{t-1}))$ \COMMENT{joint action}
        \STATE $q^{\theta}_t \gets \text{enc}(\hat{h}_t, (o_1^t, o_2^t))$, \quad $p^{\theta}_t \gets \text{dyn}(\hat{h}_t)$, \quad $z_t \sim q^{\theta}_t$ \COMMENT{posterior}
        \STATE $m_t \gets [\hat{h}_t, z_t]$
        \STATE $\hat{I}_i^t \gets \text{iset}(\hat{I}_i^{t-1}, a_i^{t-1}, o_i^t)$ for $i \in \{1, 2\}$ \COMMENT{own AOH only}
    \ENDFOR
    \STATE take a gradient step on $\theta$ with $\mathcal{L}^{\text{model}}$ of Eq.~\ref{eq:total_loss}
    \STATE \textit{(iii) Imagination --- gated by the warm-up; $\theta$ frozen throughout}
    \STATE $\hat{B} \gets \emptyset$
    \IF{$n > N^{\text{warm}}$}
        \FORALL{non-chance nodes of each trajectory in $B$, terminals included}
            \STATE $m_0, \hat{I}_1^0, \hat{I}_2^0 \gets$ posterior states at $t = 0$ computed in (ii) \COMMENT{all rollouts start at the root}
            \FOR{$t = 0, \dots, T^{\max} - 1$}
                \STATE $\hat{l}_1^t, \hat{l}_2^t, \hat{r}_t, \hat{b}_t \gets \text{predictors}(m_t)$
                \STATE $a_i^t \sim \pi_i(\cdot \given \hat{I}_i^t, \hat{l}_i^t)$ for $i \in \{1, 2\}$ \COMMENT{decentralized}
                \STATE $\hat{h}_{t+1} \gets \text{seq}(m_t, (a_1^t, a_2^t))$
                \STATE $p^{\theta}_{t+1} \gets \text{dyn}(\hat{h}_{t+1})$, \quad $z_{t+1} \sim p^{\theta}_{t+1}$ \COMMENT{prior}
                \STATE $m_{t+1} \gets [\hat{h}_{t+1}, z_{t+1}]$
                \STATE $\hat{o}_i^{t+1} \gets \text{observation\_decoder}(m_{t+1})$ for $i \in \{1, 2\}$ \COMMENT{decoder is generative here}
                \STATE $\hat{I}_i^{t+1} \gets \text{iset}(\hat{I}_i^t, a_i^t, \hat{o}_i^{t+1})$ for $i \in \{1, 2\}$
            \ENDFOR
            \STATE append the rollout to $\hat{B}$
        \ENDFOR
    \ENDIF
    \STATE \textit{(iv) Actor-critic --- one RNaD step on real and imagined data jointly}
    \STATE update $\pi_1, \pi_2, v$ by off-policy simultaneous-move RNaD on $B \cup \hat{B}$, with rewards regularized toward $\pi^{\text{reg}}$ as in Eq.~\ref{eq:rnad_reward}
    \IF{$n \bmod N^{\text{reg}} = 0$}
        \STATE $\pi_i^{\text{reg}} \gets \pi_i$ for $i \in \{1, 2\}$
    \ENDIF
\ENDFOR
\end{algorithmic}
\end{algorithm}

The centralized critic $v$ is evaluated on the joint representation $(\hat{I}_1^t, \hat{I}_2^t)$ and is used only during training; each actor $\pi_i$ conditions on $\hat{I}_i^t$ alone, so the policies remain executable from one player's action-observation history at test time. Note that phase (iii) generates one full rollout from the root per non-chance node of the real minibatch, terminal nodes included, so the number of imagined trajectories per gradient step grows with the length of the game rather than being a fixed multiple of $|B|$: Goofspiel-5 yields five imagined trajectories per real trajectory, Goofspiel-13 thirteen, and Battleship $5\times5$ twenty-eight. The resulting counts for the head-to-head domains are given in Appendix~\ref{app:headtohead}.

\textbf{On the warm-up period.} The warm-up of phase (iii) is a stabilizing mechanism rather than a theoretical requirement, and nothing about it is specific to imperfect information. Policy-gradient actor-critics, including the REINFORCE actor of DreamerV3, are not designed for a non-stationary optimization target, so a warm-up would be equally defensible in a single-agent POMDP. What the imperfect-information setting adds is that the non-stationarity arrives from several directions at once, and each family of methods available here pays for it somewhere: through careful hyperparameter tuning (MMD, PPO-style methods), through a regularization mechanism that is itself a source of non-stationarity (the periodic $\pi^{\text{reg}}$ update of RNaD, Section~\ref{sec:rnad}), or by requiring explicit access to the true game/estimating the true distributions via importance sampling (CFR and its variants). Imagination from a world model that is still changing rapidly adds a further source on top of these, and the warm-up removes it from the phase of training where it is most harmful. Appendix~\ref{app:warmup} sweeps its length in Leduc Poker and finds the method not to be sensitive to it.

The warm-up should not be read as realizing assumption (i) of Corollary~\ref{cor:ne_convergence}, which requires a world model that has converged \emph{and is held fixed} during policy optimization. As Algorithm~\ref{alg:nashdreamer} makes explicit, the model is updated for the entire run; the warm-up narrows the gap to that assumption without closing it, and it is one of the approximations discussed in Section~\ref{sec:theory_ideal}.

\subsection{The Imagination Step}
\label{app:imagination_step}

Section~\ref{sec:imagination} summarizes the imagined step in prose. Stated in full, and starting from a model state $m_0$ obtained during world model training, each step proceeds as follows.
\begin{enumerate}
    \item Reconstruct current per-player legal actions $\hat{l}_i^{t}$, the reward $\hat{r_t}$ and the termination flag $\hat{b_t}$ from $m_t$
    \item Each actor samples an action $a_i^t$ from its policy conditioned on the player's latent infoset and legal actions: $a_i^t \sim \pi_i(\cdot \given \hat{I}_i^t, \hat{l}_i^t)$. The legal actions are used to explicitly mask out the policy output, rather then letting the actor learn the illegal actions.
    \item The sequence model advances the recurrent state: $\hat{h}_{t+1} = \text{seq}(m_t, (a_1^t, a_2^t))$.
    \item The dynamics predictor produces the prior distribution $p^{\theta}_{t+1} = \text{dyn}(\hat{h}_{t+1})$, from which a stochastic state $z_{t+1}$ is sampled, yielding the next model state $m_{t+1} = [\hat{h}_{t+1}, z_{t+1}]$.
    \item We obtain per-player observation reconstructions $\hat{o}_i^{t+1}$ from $m_{t+1}$ using the predictor.
    \item The infoset model updates each player's latent infoset: $\hat{I}_i^{t+1} = \text{iset}(\hat{I}_i^t, a_i^t, \hat{o}_i^{t+1})$.
\end{enumerate}
Step~4 is the key difference from standard Dreamer, where the decoder is merely a training signal. Here, the infoset model requires observation input (step~5), so the decoder becomes an essential generative component of the imagination pipeline.

\section{Contemporary approaches}
\label{app:contemporary}

So far, Dreamer family algorithms in multi-agent setting employ standard MARL techniques for non-stationarity mitigation ~\citep{nonstationaritysurvey} and decentralized models. We argue that these fail in adversarial settings. Explicit communication~\citep{mamba, dreamercf} is incompatible with zero-sum games, as rational players will learn to ignore it. Alternatively, population self-play or explicit opponent modeling~\citep{dreamerzerosum,dreamerhockey} suffers from the aforementioned policy-physics entanglement, which can have catastrophic results upon encountering a novel opponent. 

This is also why we do not report these systems as empirical baselines (Section~\ref{sec:experiments}): each is structurally inapplicable to 2p0s IIGs rather than merely inconvenient to run.
\begin{itemize}
    \item MAMBA~\citep{mamba} and DreamerCF~\citep{dreamercf} are cooperative methods built around inter-agent communication. In a zero-sum game, a communication channel carries no equilibrium value, as rational opponents ignore or exploit it, so porting these methods amounts to deleting their central mechanism.
    \item The opponent-embedding approach of~\citet{dreamerzerosum} conditions the world model on an embedding of the opponent's identity that must be randomly initialized at test time against an unseen opponent. There is then no fixed joint policy to evaluate, and NashConv, which requires one, is not well defined for such a model.
    \item Population self-play with single-agent models~\citep{dreamerhockey} instantiates exactly the policy-physics entanglement described above: what is learned is a joint-system model of the environment together with the historical opponent population, not a model of the game. Our decentralized MARSSM ablation is a controlled instance of this family, with the same optimizer and capacity and only the centralization removed, which we consider a fairer comparison than re-implementing any single such system together with its confounding design choices.
    \item LAMIR~\citep{lamir} extends MuZero-style search to IIGs, but assumes deterministic transitions, which excludes both Leduc and Goofspiel-13, and targets search-time planning rather than model learning for sample efficiency, so the comparison would be ill-posed along the axis we measure.
\end{itemize}

As a motivating example, consider the Battleship game. Since a decentralized model is, from a particular player point of view, effectively a POMDP marginalized over past opponents, the player might hallucinate the opponent's fleet placement based purely on historical correlations with the ego-agent's own layout. Observing a hit or miss that contradicts this entrenched hallucination breaks the internal belief representation, rendering all subsequent latent planning nonsensical.

\section{Architectural design}
\subsection{Mitigating Compounding Errors in High-Dimensional Spaces}
\label{app:compounding_errors}

As noted in Section \ref{sec:imagination}, our current MARSSM formulation reconstructs the explicit observation $\hat{o}_{i,t}$ at each imagination step to update the individual latent infosets $\hat{I}_{i,t}$. Because our empirical evaluations isolate structural learning by using compact infoset representation vectors, this explicit decoding is computationally inexpensive and highly interpretable. 

However, scaling this architecture to high-dimensional raw observations (such as images) presents a vulnerability. Standard Dreamer architectures transition strictly within the latent space to avoid the compounding auto-regressive errors and computational overhead of decoding raw pixels at every step of imagination. 

To resolve this when scaling MARSSM, the architecture can seamlessly adopt a latent bottleneck approach. Rather than decoding the raw observation directly, the centralized world model would predict a low-dimensional observation embedding $e_{i,t}$ (this approach is already employed by the encoder, before predicting the posterior latent stochastic state). The decentralized infoset networks would then aggregate these compact embeddings instead of raw observations. The actual high-dimensional observation reconstruction would be decoupled from the autoregressive imagination loop, acting solely as an auxiliary loss on $e_{i,t}$ during training. This two-stage approach preserves the causal information chain required for IIGs while maintaining the computational efficiency and robustness to compounding errors characteristic of the broader Dreamer family.

\subsection{Replay Buffer Design}
\label{app:replay}

Our replay buffer stores complete trajectories and unrolls sequences strictly from the initial state $t=0$. We store fixed-size trajectories and pad with invalid data if needed. This differs fundamentally from standard MBRL practice of sampling random mid-episode chunks. In IIGs, games often have rigid starting configurations, and uncertainty is generated iteratively by hidden opponent actions. Initializing a recurrent state at $t > 0$ severs the causal chain of strategic deductions, making it impossible for the infoset networks to reconstruct accurate beliefs. While this presents a scaling limitation for very long-horizon games, it ensures rigorous adherence to the game's information-theoretic properties.

While it is theoretically possible to bypass the $t=0$ requirement by storing the latent model states and infoset vectors directly into the buffer~\citep{dreamerv3}, doing so in an IIG introduces two fatal issues. First, due to representation drift, latent states stored in the buffer are generated by stale network parameters and lose their semantic meaning as the networks evolve. 

Second, and more fundamentally, because a player's policy conditions solely on their infoset, optimizing that policy requires the empirical distribution of the underlying histories within that infoset to perfectly match their true reach probabilities under the current joint policy. Sampling mid-episode histories from an off-policy replay buffer intrinsically violates this property, as the sampled histories reflect stale opponent strategies. Correcting this mismatch requires importance sampling or counterfactual unbiasing, which injects prohibitively high variance into the learning process.

How costly that correction is for a replay buffer of full trajectories, which is the design we do use, is a game-dependent matter rather than a uniform verdict on replay. The importance ratio accumulates along a trajectory, so the variance grows with the length of the game, and it grows with how far the policies travel between the moment a trajectory enters the buffer and the moment it is replayed. We consequently clip the counterfactual importance-sampling ratio (Table~\ref{tab:shared_hyperparameters}); in Battleship the gradients diverge without it. The empirical picture follows suit: replay does not improve on plain RNaD in Goofspiel-5, whereas in Leduc and in Phantom Tic-Tac-Toe it does, and in the two largest domains it is the weaker of the two RNaD baselines under both head-to-head play (Appendix~\ref{app:headtohead}) and approximate exploitability (Table~\ref{tab:approx_expl}). We therefore offer the variance argument as an explanation of the cases where replay does not help, not as a prediction that it never will.

\section{Environments}
\label{app:environments}

\textbf{Point Card Matching (PCM).} A single-agent environment inspired by Goofspiel. The agent holds $M$ cards ($1, \ldots, M$); point cards are revealed in descending order. Matching the point card scores a point. Reward is sparse (total score at termination). Features dynamic action masking and strict sequentiality.

\textbf{Imperfect Information Goofspiel.} A two-player simultaneous-move card game. Players simultaneously play cards; the higher card wins the revealed point card (ties discard it). We use an imperfect-information variant where players observe round outcomes (win/loss/tie) but not the opponent's specific card. We evaluate on 3-card, 5-card (descending point order), and 13-card (random point order) variants.

\textbf{Leduc Poker~\citep{leduc}.} A condensed Texas Hold'em with a 6-card deck (J, Q, K in two suits). Features private cards, a public community card, and two betting rounds. As this is a turn-based game, it is adapted to our framework via dummy actions. Trajectories reach up to 8 steps with multiple chance nodes.

\textbf{Battleship.} A two-player board game with placement and battle phases. Players simultaneously and secretly place ships on their grid (ships may not overlap or be adjacent), then simultaneously target opponent tiles each round, receiving hit/miss feedback. We evaluate on a  5$\times$5 board, with two size-2 ships and one size-3 ship for head-to-head comparison.

\textbf{Phantom Tic-Tac-Toe.} A turn-based, imperfect-information variant of Tic-Tac-Toe on a $3\times3$ board in which neither player observes the opponent's marks. A player attempting to claim a tile already held by the opponent is informed of the collision and \emph{replays} their move, repeating until they place on a free tile, so a turn can consist of several attempts. Following the OpenSpiel~\citep{openspiel} and exp-a-spiel~\citep{pgconvergence} implementations, we additionally forbid a player from attempting a tile they already know the opponent occupies. This keeps the game length bounded, and it is without loss of information under our perfect-recall assumption (Section~\ref{sec:posg}), since a player remembers every collision they have encountered. This is a deterministic turn-based game, with trajectories reaching up to 17 steps. It is also adapted via dummy actions.

\textbf{Perturbed Rock-Paper-Scissors.} The canonical one-shot matrix game, modified so that the rewards for playing Scissors are multiplied by two, which moves the equilibrium off the uniform strategy; the uniform policy has a NashConv of $2/3$. It is the smallest simultaneous-move game we use, and serves as an isolated testbed for the identifiability barriers of Section~\ref{sec:ctde} (Appendix~\ref{app:decentralized}).

\section{Implementation Details}
\label{app:implementation}

All models and environments were implemented in Python 3.12 using the JAX~\citep{jax} ecosystem and the newest version of Flax~\citep{flax} (NNX). To keep consistent with the DreamerV3 architecture \citep{dreamerv3}, we use a single LaProp optimizer~\citep{laprop} with a constant learning rate of $\alpha = 3 \times 10^{-4}$ for all network parameters. Training utilizes a batch size of 64 complete trajectories per gradient step. All experiments were conducted on a cluster, utilizing 2 cores of AMD server processors with up to 30 GB of system RAM for the PPO budgeted approximate NashConv across 10 stored model seeds and 12 GB for other experiments (experiment dependent) and a single NVIDIA RTX A4000 GPU, running on Debian 12 with CUDA 13.0. To upper-bound the complexity, our most time-demanding experiments (Battleship 5x5) required a maximum of 24 hours of total compute time to train for 30,000 minibatches across 10 seeds.
The estimated compute time for all the experiments (including preliminaries not reported in the paper) is around 4 weeks of CPU/GPU time.

\subsection{Network Architecture}

As our work focuses on algorithmic adaptations rather than novel network structures, we use the standard DreamerV3 network architecture throughout. Each predictor component is implemented as a Multi-Layer Perceptron (MLP) utilizing layer normalization~\citep{layernorm} and SiLU activations. The sequential network embeds input components separately, creates a skip connection with the recurrent state, concatenates the features, passes them through an MLP, and uses a GRU cell~\citep{gru} as the gate unit. The posterior encoder consists of two primary pathways: one embedding the joint observation, and one predicting the stochastic latent distribution conditioned on both the recurrent state and this observation embedding.

\subsection{Capacity Constraints}

\textbf{MLP Dimensions:} Every MLP predictor across the architecture consists of an input layer, a single hidden layer of 256 units, and an output layer (with layer normalization and SiLU applied after the input and hidden layers). The dimensions of the input and output layers are determined based on the specific input and target sizes of the respective environment.

To test the model's ability to capture dynamics under heavily constrained capacity, we strictly the size of latent state representations.

\textbf{Deterministic States:} We constrain the dimensionalities of the deterministic latent vectors to correspond directly to the true information content of the environment. Specifically, the central recurrent state $\hat{h}_t$ matches the dimensionality of the true joint infoset representation, and each decentralized latent infoset $\hat{I}_{i,t}$ matches the true individual player's infoset dimensionality. The specific per-environment infoset representation sizes are:
\begin{itemize}
    \item \textbf{Point Card Matching with 3 cards (PCM 3):} 16
    \item \textbf{Goofspiel:} 35 (3 cards), 87 (5 cards), 535 (13 cards)
    \item \textbf{Leduc Poker:} 39
    \item \textbf{Phantom Tic-Tac-Toe:} 104
    \item \textbf{Battleship:} 716 (for a 5x5 board with 3 ships each)
\end{itemize}

\textbf{Stochastic Latent Space:} Rather than relying on the excessively large stochastic capacity of standard DreamerV3 (e.g., $32 \times 32$ classes), we deliberately restrict the categorical distribution to the minimal capacity required to uniquely identify chance outcomes without forcing the network to ``temporally multiplex'' representations. 

Temporal multiplexing occurs when the latent capacity is so constrained that the network must assign the exact same latent code $z_t$ to completely distinct physical chance outcomes, relying entirely on the recurrent state $\hat{h}_t$ to disambiguate them (e.g., class 1 representing a private card at step 1, but a public card at step 3). This places immense optimization strain on the dynamics MLP. 

To prevent this, we allocate exactly enough capacity to assign a distinct latent configuration to each unique class of chance node. For Imperfect Information stochastic Goofspiel-13, we use a single distribution of 13 categories. While a single distribution of 34 categories should be sufficient for Leduc, we find that under our capacity constraints, the model cannot learn the underlying environment, due to gradient dilution for a single, large distribution. So, our main results use a factored stochastic state of 2 categorical distributions with 6 categories. We present a comparison in Appendix \ref{app:leducerror}. We explicitly note that utilizing domain knowledge to size the latent space is an experimental choice designed to evaluate the framework under minimal capacity conditions. It is not a structural requirement of MARSSM. For environments with unknown or intractably large chance dynamics, the framework natively supports scaling back up to latent spaces with many distinct outcomes (e.g., $32 \times 32$ classes) typical of the Dreamer family.

In deterministic environments, the free-nats KL clipping threshold is set to 0, while in stochastic environments it is maintained at 0.2.

\subsection{Hyperparameters}
\label{app:hyperparameters}

Unless explicitly stated otherwise in the environment descriptions, all NashDreamer experiments share the hyperparameters listed in Table~\ref{tab:shared_hyperparameters}. The architecture is trained entirely online for $30,000$ gradient steps, where, at each step, a new minibatch is collected from the real environment, used to update the world model, which then generates imagination trajectories minibatch. Then, the actor-critic performs a step on the concatenation of the imagined minibatch with the real minibatch. For the significantly larger state spaces of Goofspiel-13, Battleship $5\times5$ and Phantom Tic-Tac-Toe, the initial world model warm-up phase is extended from the default $1,000$ to $2,000$ gradient steps to ensure stable structural representations before policy learning begins.

To ensure numerical stability during training, we apply several targeted regularizations following \citep{dreamerv3}. First, we add a $0.01$ uniform mixture to both the prior and posterior categorical distributions to prevent KL divergence spikes to infinity. Second, during imagination, we apply a state sampling threshold of $0.05$ (thresholding individual category probabilities and renormalizing before sampling the stochastic state $z_t$) to prevent the actor from exploring highly improbable hallucinated branches. The only exception to this threshold is Leduc under the flat stochastic space, where this threshold is set to 0.01 (as the initial chance outcomes have probability $\frac{1}{30} < 0.05$). Finally, the target predictors for terminal states and legal actions are thresholded at $0.5$ to yield binary masks during execution. For the off-policy RNaD optimization, we also clip the counterfactual importance sampling weights to prevent exploding gradients. We further adopt the DreamerV3 practice of zero-initializing the output layers of the reward and critic predictors, so that both heads predict exactly zero at the start of training; this keeps large, arbitrary early predictions from dominating the world-model loss and the imagined returns during the first iterations, and complements the warm-up period discussed in Appendix~\ref{app:algorithm}.

\begin{table}[h]
\centering
\caption{Shared NashDreamer hyperparameters used across all experiments.}
\label{tab:shared_hyperparameters}
\begin{tabular}{ll}
\toprule
\textbf{Hyperparameter} & \textbf{Value} \\
\midrule
\textit{General Training} & \\
Total gradient steps & $30,000$ \\
World model warm-up period & $1,000$ ($2,000$ for large envs) \\
Batch size & $64$ \\
\midrule
\textit{Optimizer (Shared)} & \\
Learning rate & $3 \times 10^{-4}$ \\
Learning rate warmup steps & $1,000$ \\
Betas ($\beta_1, \beta_2$) & $(0.9, 0.999)$ \\
Epsilon ($\epsilon$) & $1 \times 10^{-8}$ \\
Adaptive Gradient Clipping (AGC) & $0.3$ \\
\midrule
\textit{World Model Targets \& Regularization} & \\
Reward symlog bins (one-sided) & $20$ \\
Categorical uniform mixture & $0.01$ \\
State sampling threshold & $0.05$ \\
Terminal / Legal prediction threshold & $0.5$ \\
Dynamics KL scale ($\beta^{\text{dyn}}$) & $1.0$ \\
Encoder KL scale ($\beta^{\text{enc}}$) & $0.1$ \\
Prediction loss scales & $1.0$ \\
Infoset loss scale & $1.0$ \\
\midrule
\textit{Actor-Critic \& RNaD} & \\
Critic symlog bins (one-sided) & $20$ \\
RNaD entropy scale ($\eta$) & $0.2$ \\
V-trace parameters ($\bar{\rho}, \bar{c}, \gamma, \lambda$) & $(\infty, 1.0, 1.0, 1.0)$ \\
Target network EMA rate ($\tau$) & $0.001$ \\
Counterfactual IS clipping threshold & $100.0$ \\
NeuRD advantage absolute clip threshold & $10,000$ \\
NeuRD advantage relative clip threshold & $2.0$ \\
Imagination uniform policy mixture ($\epsilon$) & $0.2$ \\
Loss weighting (Real : Imagined) & $1.0 : 1.0$ \\
\bottomrule
\end{tabular}
\end{table}

\subsection{Standalone RNaD Baseline Configuration}
\label{app:rnad_baseline}

To ensure a rigorous and fair empirical comparison, the model-free standalone RNaD baseline utilizes the exact same Actor-Critic hyperparameters and optimizer configurations as NashDreamer (listed in Table~\ref{tab:shared_hyperparameters}). 

Because the standalone baseline relies on offline experience replay rather than online model-based imagination, we maintain a  per-trajectory replay buffer with a capacity of $10,000$ trajectories. The baseline is trained using an environment-dependent replay ratio (the number of trajectories sampled from the buffer per newly collected online trajectory) to compensate for the additional computational overhead of the model-based variant. For Goofspiel-13, this ratio is set to $26$, and for Battleship $5\times5$, it is set to $40$. 

We train the baseline for $30,000$ gradient steps, matching NashDreamer. Each gradient step entails the collection of a new minibatch from the environment, concatenation with replayed trajectories (if used) and performing the step on this larger minibatch.

\subsection{MMD Baseline Configuration}
\label{app:mmd_baseline}

The MMD baseline~\citep{mmd} shares the network architecture, optimizer and batch size of the RNaD baseline; the two differ only in the policy-optimization rule, with MMD replacing RNaD's policy-dependent reward regularization by magnet-policy regularization. This parity is what allows the NashDreamer[MMD] and NashDreamer[RNaD] comparison of Appendix~\ref{app:headtohead} to isolate the effect of the world model rather than of the surrounding implementation. Concretely, we follow the OpenSpiel PPO implementation, with an entropy exploration bonus and the additional inverse-KL penalty described by~\citet{pgconvergence}, and we adopt their ``generic'' hyperparameter setting. All network architectures and layer sizes are identical to RNaD (Appendix~\ref{app:implementation}). The MMD-specific values are listed in Table~\ref{tab:mmd_hyperparameters}.

\begin{table}[h]
\centering
\caption{MMD-specific hyperparameters, shared by the model-free MMD baseline and NashDreamer[MMD], and held fixed across all domains.}
\label{tab:mmd_hyperparameters}
\begin{tabular}{lll}
\toprule
\textbf{Hyperparameter} & \textbf{Value} & \textbf{Role} \\
\midrule
PPO epochs & $7$ & optimization epochs per batch \\
PPO clipping coefficient & $0.1$ & realizes the proximal term of Eq.~\ref{eq:mmd} \\
Inverse-KL coefficient & $0.05$ & loss multiplicand of the inverse KL penalty \\
Magnet coefficient ($\alpha$) & $0.2$ & fixed uniform magnet, i.e.\ an entropy bonus \\
Advantage-normalization $\epsilon$ & $1 \times 10^{-8}$ & numerical guard, not a tuned quantity \\
Discount / bootstrapping ($\gamma, \lambda$) & $(1.0, 1.0)$ & as for RNaD (Table~\ref{tab:shared_hyperparameters}) \\
\bottomrule
\end{tabular}
\end{table}

The magnet is fixed and uniform, so the magnet term contributes an entropy bonus rather than an anchor that moves with the policy; as noted in Section~\ref{sec:rnad}, this means MMD targets a quantal response equilibrium rather than an NE. Advantages are normalized in the standard PPO fashion over the valid (non-padded) entries of the batch,
$$ \tilde{A} = \frac{A - \mathbb{E}[A]}{\sqrt{\mathrm{Var}[A]} + \epsilon}, $$
with $\gamma$ and $\lambda$ set to $1.0$ to match the RNaD configuration rather than as independent choices.

We hold these values fixed across domains and do not tune them per environment. The primary reason is that tuning would not affect the claim under test: the model-free MMD baseline and NashDreamer[MMD] use the \emph{same} MMD hyperparameters, so whichever values are chosen, the comparison isolates the effect of the world model on a given optimizer. An untuned MMD is thus a conservative baseline for the modularity claim of Section~\ref{sec:actor_critic} and a fair one for the sample-efficiency comparison. The values are moreover not arbitrary: they are the generic configuration reported by~\citet{pgconvergence} for exactly this family of games. Finally, the domains that motivate this work are those in which environment interaction is the dominant cost, which raises the value of an optimizer that performs acceptably without per-domain tuning; we note as a secondary point that tuning against a world model that is itself still changing is of questionable soundness. We acknowledge that per-domain tuning would likely improve MMD's absolute numbers, and we make no claim that the values used are optimal for it.

\section{Exploitability Evaluation Protocols}
\label{app:nashconv}

\subsection{Exact NashConv}

All NashConv values reported in Section~\ref{sec:experiments} are computed \emph{exactly} against the true game. They are not computed against the learned world model, and they are not estimated by sampling.

At evaluation time we first extract a behavioral policy over the true game. We traverse every infoset of the true game and query the trained actor network at it; for NashDreamer this means replaying that infoset's AOH $\tau_i$ through the infoset model to obtain the latent embedding $\hat{I}_i$ and reading off $\pi_i(\cdot \given \hat{I}_i)$, with the true legal-action mask applied. The evaluated object is therefore a well-defined joint policy over the true game's infosets, which is what makes NashConv meaningful as an equilibrium metric.

Given this joint policy, each player's best response is computed by dynamic programming over the full true game tree, following the OpenSpiel exploitability routine~\citep{openspiel}; for Phantom Tic-Tac-Toe we instead use the exp-a-spiel library of~\citet{pgconvergence}, whose implementation of the game matches the variant described in Appendix~\ref{app:environments}. Chance nodes are enumerated exactly with their true probabilities rather than sampled. Simultaneous-move nodes are expanded in the standard turn-based form, in which the second player's infoset does not contain the first player's action at that node, so the best-response computation respects the original information structure. The reported values are exact for the extracted policies; the only approximation in the pipeline is the extraction itself, i.e., the network's own numerical output at each infoset. NashConv curves report values per training seed, evaluated at fixed intervals of environment interaction.

\subsection{Best-Response Learner}
\label{app:approx_expl}

Section~\ref{sec:approx_expl} states the budgeted best-response protocol and reports its results; this section documents the learner that protocol runs. Taking the maximum over the ten PPO seeds rather than their mean makes the estimate the tightest lower bound the search budget supports, which is the conservative choice for a metric on which we claim an improvement.

The best-response learner is plain PPO: the same implementation and hyperparameters as the MMD baseline of Appendix~\ref{app:mmd_baseline}, with the inverse-KL penalty and the entropy exploration bonus removed. It is run identically in both games, for $3{,}000$ gradient steps with batch size $64$, $\gamma = 1.0$, $\lambda = 1.0$, clipping coefficient $0.1$ and $7$ inner epochs. The budget of $3{,}000$ steps is a tenth of that of the largest evaluated checkpoint, which suffices because computing a best response against a fixed opponent is a single-agent problem and materially easier than equilibrium finding.

A budget that is too small manifests as a best response failing to reach the value the evaluated profile achieves against itself, which indicates that the search fell short rather than that the checkpoint is unexploitable. We mark any such cell with $\dagger$. No cell of Table~\ref{tab:approx_expl} is affected, so the budget is inside the useful range everywhere the main text draws a conclusion from it.

\subsection{Aggregation over PPO Seeds}
\label{app:approx_expl_agg}

Table~\ref{tab:approx_expl_avg} reports the same experiment aggregated by averaging over the $10$ PPO seeds instead of taking the worst case. The values are uniformly smaller, as expected of a looser bound, and the least exploitable method is the same one under both aggregations at every checkpoint of both games. The full ranking is identical throughout Goofspiel-13; in Battleship a single adjacent pair exchanges places at each checkpoint, the two RNaD baselines at $10$k, MMD and NashDreamer[RNaD] at $20$k, and NashDreamer[RNaD] and model-free RNaD at $30$k. Each of those exchanges is between values lying well inside one another's intervals, and every claim made in Section~\ref{sec:approx_expl} holds under either aggregation. One cell is flagged under this looser bound, NashDreamer[RNaD] at $30$k in Goofspiel-13, on two of the ten training seeds.

\begin{table}[h]
\centering
\footnotesize
\setlength{\tabcolsep}{3pt}
\begin{tabular}{lcccccc}
\toprule
 & \multicolumn{3}{c}{Goofspiel (13 cards)} & \multicolumn{3}{c}{Battleship (5$\times$5)} \\
\cmidrule(lr){2-4} \cmidrule(lr){5-7}
Method & 10k & 20k & 30k & 10k & 20k & 30k \\
\midrule
MMD & $0.789 \pm 0.076$ & $0.695 \pm 0.098$ & $0.684 \pm 0.067$ & $1.202 \pm 0.058$ & $0.986 \pm 0.100$ & $0.831 \pm 0.147$ \\
ND[MMD] & $0.805 \pm 0.069$ & $0.744 \pm 0.085$ & $0.708 \pm 0.082$ & $1.069 \pm 0.110$ & $0.781 \pm 0.099$ & $0.612 \pm 0.127$ \\
RNaD & $0.675 \pm 0.368$ & $0.381 \pm 0.258$ & $0.279 \pm 0.276$ & $1.312 \pm 0.067$ & $1.135 \pm 0.172$ & $0.937 \pm 0.163$ \\
RNaD w/ replay & $0.701 \pm 0.075$ & $0.476 \pm 0.073$ & $0.434 \pm 0.081$ & $1.306 \pm 0.057$ & $1.235 \pm 0.089$ & $1.007 \pm 0.152$ \\
ND[RNaD] & $0.408 \pm 0.133$ & $0.184 \pm 0.143$ & $0.093 \pm 0.134^{\dagger}$ & $0.979 \pm 0.105$ & $0.951 \pm 0.140$ & $0.912 \pm 0.235$ \\
\bottomrule
\end{tabular}
\caption{Budgeted approximate exploitability averaged over the $10$ PPO seeds, rather than taking the worst case as in Table~\ref{tab:approx_expl}. Mean $\pm 2\sigma$ over the 10 training seeds. $^{\dagger}$ marks the one cell in which the best response failed to reach the profile's own value, on two of the ten training seeds, i.e.\ where the budget was too small.}
\label{tab:approx_expl_avg}
\end{table}

\section{Latent Error Trees}
\label{app:letrees}

To qualitatively assess the structural fidelity of the learned world model, we introduce Latent Error Trees (LE-Trees). Starting from the initial state, we recursively unroll the game tree following the agent's current joint policy, pruning any branch where a player's action probability falls below a threshold of $0.05$. 

In these visualizations, blue diamond nodes represent the ground-truth game states, while the adjacent rectangular nodes represent their corresponding latent reconstructions. These latent nodes are color-coded based on the maximum prediction error across all deterministic decoder heads (reward, continuation, legal actions, and individual infoset reconstruction): green indicates low error (error $< 0.2$), orange indicates moderate error ($0.2 \leq$ error $< 0.5$), and red indicates severe error (error $\geq 0.5$). Any non-green node is explicitly annotated with the specific predictor component responsible for the maximum error. Finally, all transitions (edges) are visually weighted according to their corresponding chance or policy probabilities.

\section{Leduc Failure Modes and Latent Space Capacity}
\label{app:leducerror}

As described in Section~\ref{sec:experiments}, while the world model accurately matches the dynamics for round 1 of Leduc Poker (Figure~\ref{fig:letree_leduc_r1_good}), it mistakenly predicts the wrong public card in round 2 (the LE-Tree for round 2 is contained in the Supplementary Material due to size constraints). 

Our Leduc implementation utilizes binary observation and infoset tensors, alongside rewards divided by the maximal bet (13), effectively rescaling the reward signal into the $[-1, 1]$ range. This bounded structure meets the criteria of low prediction magnitude changes combined with high stochasticity, which inherently exacerbates the tension between posterior collapse and representation drift in the late stages of the game.

Figure~\ref{fig:leduc_categorical} compares the three configurations of NashDreamer[RNaD] we ran on this domain. Replacing the factored latent with a minimal flat one, a single categorical with $34$ categories, costs surprisingly little in NashConv, but the world model then fails to capture the environment's topological structure (Figure~\ref{fig:letree_leduc_r1_bad}), which Appendix~\ref{app:collapse_leduc} quantifies as posterior collapse at the initial deal. The actor-critic evidently compensates for that inaccuracy; since our objective is a structurally accurate and interpretable model, the factored latent remains the better choice.

The one-way variant sets $\beta^{\text{enc}} = 0$, removing the posterior regularizer as Theorem~\ref{thm:oneway_convergence} prescribes, and is the worst of the three in the later stages of training. It does not prevent the structural error at the public chance node, as its prior instead spreads over far more latent configurations than the node has outcomes (Appendices~\ref{app:collapse_leduc} and~\ref{app:calibration_leduc}), which is what representation drift looks like once nothing pulls the posterior toward the prior. The variant our theory endorses is therefore the one that can perform the worst in practice, which is the tension stated in Section~\ref{sec:theory_ideal}.

\begin{figure}[t]
    \centering
    \begin{minipage}{0.48\textwidth}
        \centering
        \includegraphics[width=\linewidth]{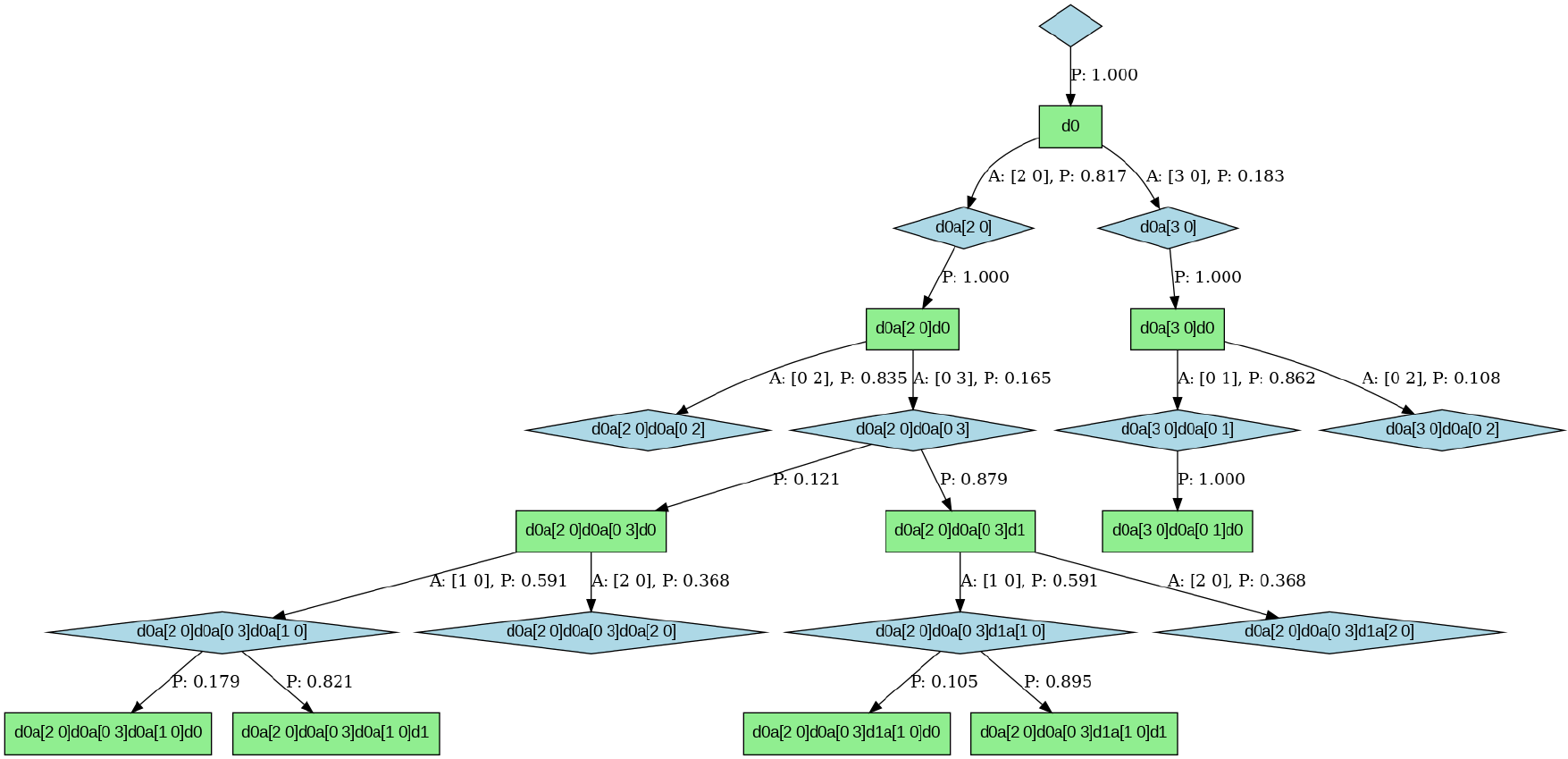}
        \caption{NashDreamer LE-Tree for round 1 of Leduc Poker (private cards 1 and 2 dealt). Utilizing the factored latent space, the model accurately learns the underlying game dynamics.}
        \label{fig:letree_leduc_r1_good}
    \end{minipage}\hfill
    \begin{minipage}{0.48\textwidth}
        \centering
        \includegraphics[width=\linewidth]{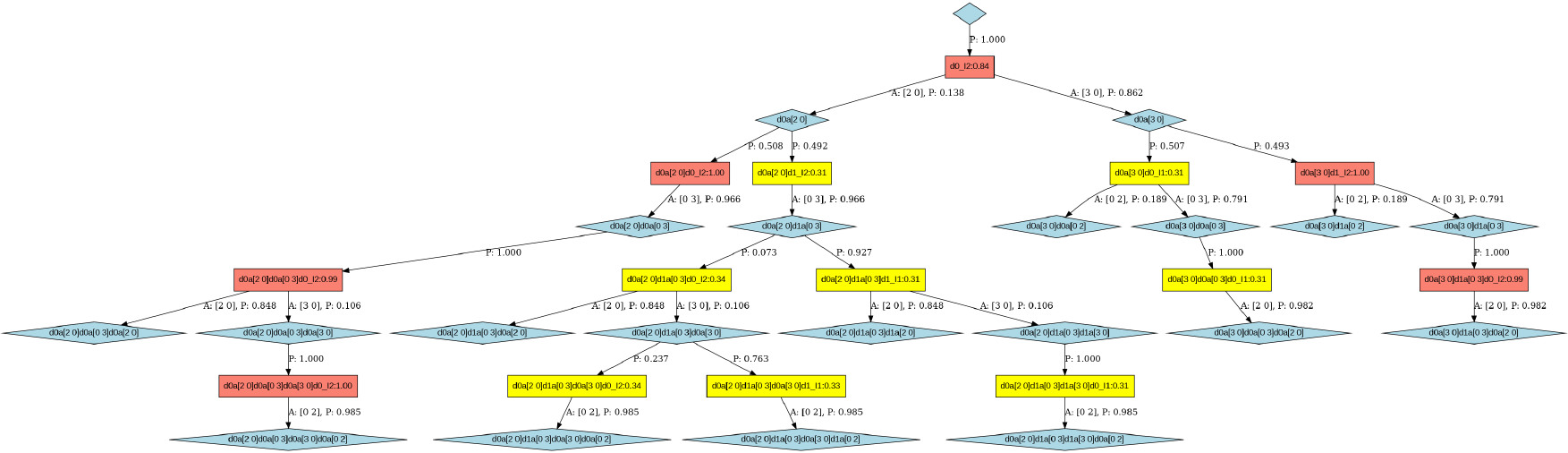}
        \caption{NashDreamer LE-Tree for round 1 of Leduc Poker under a flat latent space ($1 \times 34$). The single categorical distribution suffers from severe mode collapse, failing to learn the true environment structure.}
        \label{fig:letree_leduc_r1_bad}
    \end{minipage}
\end{figure}

\begin{figure}[h]
        \centering
        \includegraphics[width=0.55\linewidth]{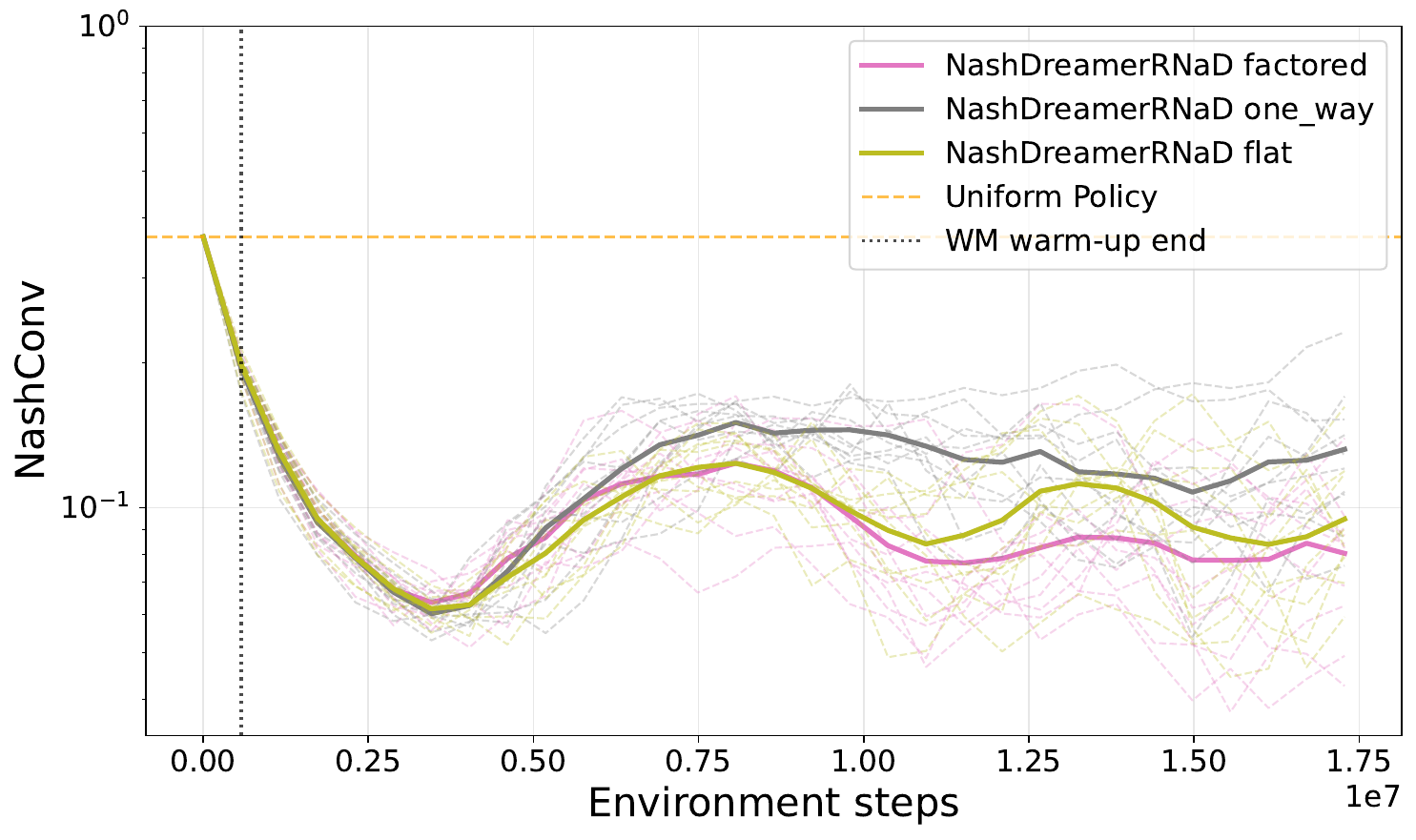}
        \caption{NashConv in Leduc Poker for the three latent and loss configurations of NashDreamer[RNaD]: the factored latent, the flat single-categorical latent, and the one-way variant with $\beta^{\text{enc}} = 0$. All three deteriorate after the initial descent, the one-way variant furthest and the factored latent least. Bold curves are means over the $10$ training seeds, faint curves the individual seeds; the dotted line marks the end of the world-model warm-up.}
        \label{fig:leduc_categorical}
\end{figure}

\section{Quantitative World-Model Diagnostics}
\label{app:diagnostics}

The Latent Error Trees of Appendix~\ref{app:letrees} are qualitative. To attribute the policy results of Section~\ref{sec:experiments} to world-model quality rather than to actor-critic dynamics alone, we measure the learned model directly under three protocols, defined in Appendices~\ref{app:rollout}, \ref{app:collapse_protocol} and~\ref{app:calibration_protocol}: \emph{rollout validity}, the per-step emission error of a trajectory generated entirely by the model; the \emph{posterior-collapse indicator}, the expected $\KL(q^{\theta} \parallel p^{\theta})$ at chance nodes; and \emph{chance-outcome calibration}, the agreement between the prior's distribution over successors and the true one. Results are reported in Appendices~\ref{app:collapse_leduc} to~\ref{app:rollout_goofspiel}.

All three protocols filter the model's categorical stochastic state before use, applying the thresholding used during imagination (Appendix~\ref{app:hyperparameters}), so that the model is scored on the states it generates at run time. The filter acts on each categorical distribution independently: within a distribution, categories below the threshold are set to zero and the remainder renormalized, so a factored state whose two categoricals read $[0.9, 0.0, 0.01]$ and $[0.01, 0.0, 0.9]$ becomes $[1, 0, 0] \otimes [0, 0, 1]$ at a threshold of $0.05$. The threshold must lie below the smallest outcome probability the game presents, or real outcomes are removed before they can be scored; we use $0.05$ for the factored variants and $0.01$ for the flat one, whose single categorical has to resolve the $1/30$ of Leduc's initial deal.

In \textbf{Goofspiel-5}, which has no chance nodes, every emission converges: terminal flags and legal actions within $1{,}000$ training steps, observations about an order of magnitude later, and the reward last of all. Appendix~\ref{app:rollout_goofspiel} reports the four curves and the ordering in full.

In \textbf{Leduc Poker}, the picture is qualitatively different and matches the deterioration visible in Figure~\ref{fig:nashconv}. Terminal flags and legal actions are again identified without error within $1{,}000$ steps, and deterministic transitions are reconstructed exactly by the factored and one-way variants, but the observation and reward errors never reach zero and the chance transitions stay miscalibrated for the whole of training. Because the three tested model variants in Leduc fail in visibly different ways, we report each diagnostic separately for all three rather than summarizing them here: Appendix~\ref{app:collapse_leduc} for the posterior-collapse indicator, Appendix~\ref{app:calibration_leduc} for chance-outcome calibration and Appendix~\ref{app:rollout_leduc} for the prediction accuracies under rollout.

These diagnostics confirm the mechanism our theory predicts. NashDreamer learns models of deterministic game components without major issues, whereas in the presence of chance stochasticity the world model can settle on an incorrect representation, or even degrade with further training, and the degradation is localized in the chance transitions rather than in the deterministic emissions. This is the empirical counterpart of Theorem~\ref{thm:aliasedcounterexample}: the objective itself rewards a model that declines to disambiguate chance outcomes.
\subsection{Rollout Validity}
\label{app:rollout}

This protocol measures whether a trajectory generated entirely by the model corresponds to a realizable trajectory of the true game. Unlike a single-step emission score, errors compound along the rollout, as they do when the actor-critic trains on imagined data.

The model rollout and the true game are advanced with the same joint actions, sampled from the model. The recurrent state is initialized at $h_0$ and the latent infoset embeddings at their initial values; the initial stochastic state is sampled from the prior $p^{\theta}$. Each step proceeds as follows.

\begin{enumerate}
    \item At a chance node of the true game, the outcome is not sampled from the true chance distribution but chosen to minimize the $L^{\infty}$ distance between the true joint observation and the one reconstructed from the current model state. Divergence later in the rollout is then attributable to the learned dynamics rather than to the two trajectories having drawn different chance outcomes.
    \item Each player's action is sampled as $a_i^t \sim \pi_i(\cdot \given \hat{I}_i^t)$ from the latent infoset embedding, or from the infoset reconstructed from the model state when the model was trained on observations alone.
    \item $(a_1^t, a_2^t)$ advances the recurrent state through the centralized sequence model and advances the true game; $z_{t+1}$ is then sampled from the prior at the new recurrent state.
    \item The observations reconstructed from $m_{t+1}$ update each $\hat{I}_i^{t+1}$.
\end{enumerate}

The rollout terminates when the true game reaches a terminal state.

\textbf{Error statistics.} At each step the model's emissions are compared with the true ones and the discrepancy accumulated: $L^1$ for the reward, $L^{\infty}$ for the joint observation, and $1$ per boolean mismatch for the terminal flag and the legal-action mask. Each statistic is the accumulated magnitude divided by the number of valid steps, so all four are mean per-step magnitudes, on different scales.

\subsection{Posterior Collapse at Chance Nodes}
\label{app:collapse_protocol}

By Theorem~\ref{thm:isomorphic_optimal}, an infoset-isomorphic model attains an expected $E_{o}\KL(q^{\theta} \parallel p^{\theta})$ at a chance node equal to the entropy of that node, whereas posterior collapse drives the same quantity to zero.

The model is conditioned on the true action sequence (teacher forcing). At each non-chance node the stochastic state is taken from the posterior, selecting the outcome whose reconstructed joint observation is closest to the true one in $L^{\infty}$ norm, which keeps the model state aligned with the true history. At a chance node we enumerate the outcomes, evaluate the posterior $q^{\theta}(\cdot \given o)$ induced by each, and record their expected divergence from the single prior $p^{\theta}$ at that node,
\begin{equation}
    \mathcal{C} = \sum_{o} p(o) \, \KL\!\left(q^{\theta}(\cdot \given o) \;\middle\|\; p^{\theta}\right).
    \label{eq:collapse}
\end{equation}
Chance nodes are grouped by their number of outcomes $K$, and $\mathcal{C}$ is averaged, unweighted, over all nodes with the same $K$. Every chance node in our environments is uniform, so an isomorphic model attains $\mathcal{C} = \log K$.

Deviation from $\log K$ has two distinct causes. $\mathcal{C} < \log K$ indicates posterior collapse: the posterior depends only weakly on the chance outcome. $\mathcal{C} > \log K$ indicates that the prior assigns less mass to the realized posterior mode than a calibrated prior over $K$ equiprobable outcomes would, i.e.\ that it is spread over more latent configurations than the node has outcomes. A modest excess is expected when the latent space is larger than $K$ and distinct histories reaching the same outcome occupy distinct configurations; a large excess indicates that the prior does not track the posterior.

\subsection{Chance-Outcome Calibration}
\label{app:calibration_protocol}

This protocol measures the prior's distribution over successors: what fraction of its mass corresponds to realizable outcomes, and whether those outcomes carry the correct probabilities.

Teacher forcing is applied as in Appendix~\ref{app:collapse_protocol}, but the stochastic states are drawn from the prior $p^{\theta}$. At each node the model's stochastic states are enumerated and assigned to true successors: a state is assigned to the successor whose joint observation is nearest its reconstruction in $L^{\infty}$ norm, or to a separate \emph{unmatched} bucket if that distance exceeds $0.3$ for every successor. Summing model probabilities within each bucket gives a marginal over the true successors, plus the unmatched mass. Deterministic nodes are included, their true successor distribution being a Dirac.

Two statistics are reported: the $L^1$ distance between the bucketed, unnormalized marginal (so the model marginal need not sum up to 1) and the true successor distribution and the unmatched mass, both bounded by $1$. Both are averaged over all nodes of a kind, with chance and deterministic nodes reported separately, and are not grouped by $K$. The two isolate different failures: unmatched mass measures prior mass on latent states with no corresponding true successor, whereas a large $L^1$ distance with little unmatched mass indicates successors that are represented but mis-weighted.

\subsection{Posterior Collapse in Leduc Poker}
\label{app:collapse_leduc}

Leduc has chance nodes of two sizes: the initial private deal, with $K = 30$ outcomes, and the second-round public card, with $K = 4$. We measure the statistic of Appendix~\ref{app:collapse_protocol} at both over $10$ seeds and $30{,}000$ gradient steps, for the three configurations the paper uses: the \emph{flat} latent, a single categorical with $34$ categories (Appendix~\ref{app:leducerror}); the \emph{factored} latent, with $36$ configurations; and the \emph{one-way} variant, which keeps the factored latent but sets $\beta^{\text{enc}} = 0$ as Theorem~\ref{thm:oneway_convergence} prescribes.

\textbf{At the initial node ($K = 30$) the flat latent collapses and the other two do not} (Figure~\ref{fig:collapse_k30}). It settles near $1.7$, almost exactly half of the $\log 30 = 3.401$ baseline, so it cannot distinguish all thirty outcomes; since it also struggles at deterministic nodes (Appendix~\ref{app:calibration_leduc}), the difficulty lies with the flat stochastic state rather than with this node. The factored latent recovers most of the gap at $2.9$, leaving a residual collapse, and the one-way variant sits just above the baseline at $3.45$, which follows from its $36$ configurations: a prior spread over all of them exceeds $\log 30$.

\textbf{At the public node ($K = 4$) the ordering inverts} (Figure~\ref{fig:collapse_k4}). The flat latent sits on the $\log 4 = 1.386$ baseline, the factored latent just above at $1.45$, and the one-way variant far above at $2.4$, with a visibly wider spread across seeds. Its prior is therefore spread over many more configurations than the node has outcomes, which we attribute to the prior being unable to track the moving posterior \textemdash{} the mechanism Section~\ref{sec:sample_efficiency} holds responsible for the one-way variant's poor performance.

No configuration is faithful at both nodes: the flat latent is calibrated at small $K$ and collapses at large, the factored latent is close at both and exact at neither, and dropping the posterior regularizer trades collapse for drift. This is the tension of Section~\ref{sec:conclusion}, and why the structural error at the deeper chance node survives every variant we ran.

\begin{figure}[h]
    \centering
    \begin{minipage}{0.48\textwidth}
        \centering
        \includegraphics[width=\linewidth]{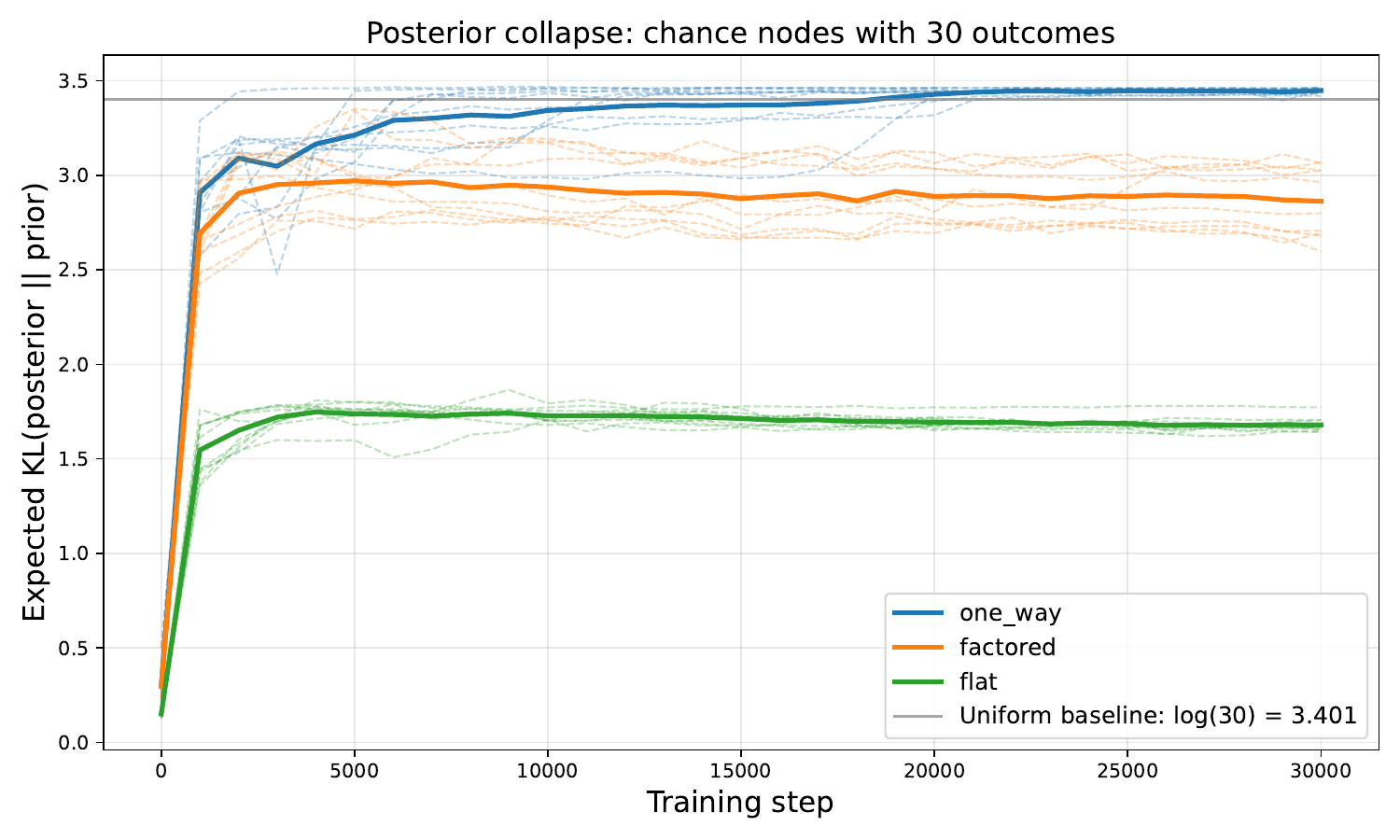}
        \caption{Expected $\KL(q^{\theta} \parallel p^{\theta})$ at the $K = 30$ private-deal chance node of Leduc Poker, averaged over nodes of that size. Bold curves are means over $10$ seeds, faint curves the individual seeds. The flat latent collapses to roughly half the uniform baseline.}
        \label{fig:collapse_k30}
    \end{minipage}\hfill
    \begin{minipage}{0.48\textwidth}
        \centering
        \includegraphics[width=\linewidth]{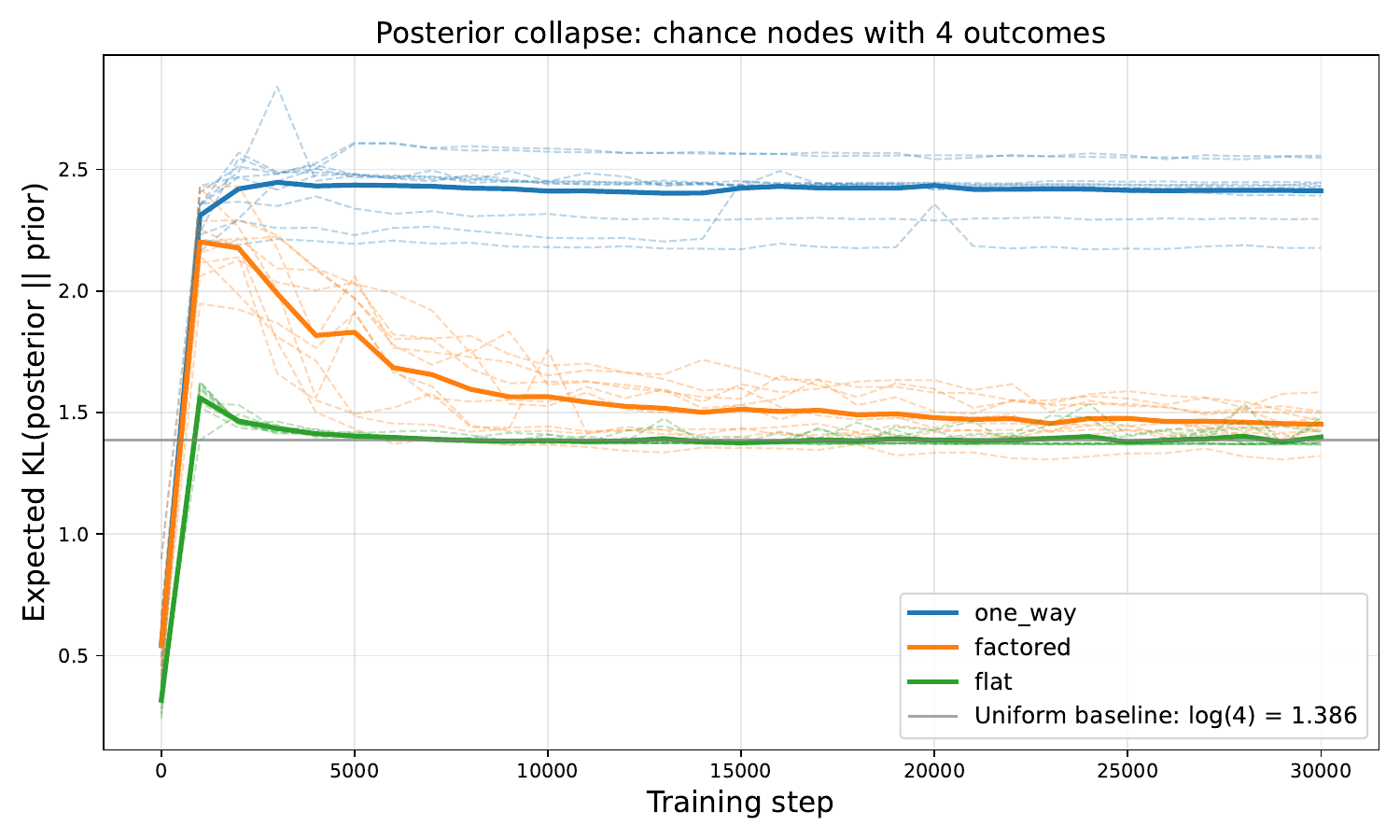}
        \caption{The same statistic at the $K = 4$ public-card chance nodes of Leduc Poker. Here the flat latent is calibrated and the one-way variant overshoots the baseline by a wide margin, indicating a prior that is not tracking the posterior.}
        \label{fig:collapse_k4}
    \end{minipage}
\end{figure}

\subsection{Chance-Outcome Calibration in Leduc Poker}
\label{app:calibration_leduc}

The calibration statistics of Appendix~\ref{app:calibration_protocol} localize the Leduc failure more sharply than the collapse statistic can, because they separate the nodes at which the model must represent a distribution from those at which it must represent a single successor. We measure them for the same three configurations over $10$ seeds and $30{,}000$ gradient steps; both statistics are bounded by $1$.

At deterministic nodes the factored and one-way variants drive both statistics to zero within roughly $10{,}000$ steps, while the flat latent settles near $0.11$ on both (Figures~\ref{fig:det_l1} and~\ref{fig:det_mag}). At chance nodes the three separate: the factored variant reaches an $L^1$ distance of $0.16$ with $0.07$ of its mass unmatched, the flat latent $0.45$ and $0.44$, and the one-way variant $0.63$ and $0.63$ (Figures~\ref{fig:chance_l1} and~\ref{fig:chance_mag}).

For the one-way and flat variants the two statistics coincide, so no successor is over-weighted and their whole discrepancy is deficit. They reach that signature for different reasons, which the collapse statistic separates: at $K = 30$ the one-way posterior still resolves every deal whereas the flat one sits at about half of $\log K$ (Appendix~\ref{app:collapse_leduc}). The one-way variant therefore represents every outcome and only mis-weights it, diverting two thirds of its prior mass to configurations the game cannot produce, while the flat variant's deficit includes outcomes it no longer distinguishes at all.

The flat latent is also the only configuration that fails to resolve a deterministic successor. We attribute this to a single one-hot supplying the sequence model with less usable structure than a pair of them, so the factored latent earns its place by the form in which it presents the outcomes rather than by how many it can represent.

The factored variant is the only one whose $L^1$ distance exceeds its unmatched mass, so beyond a small leakage it misallocates probability among the outcomes it does represent. It is nevertheless the best configuration at chance nodes and exact at deterministic ones: the structural error of Section~\ref{sec:sample_efficiency} is, for this configuration, a misallocation rather than a failure to represent.

\begin{figure}[h]
    \centering
    \begin{minipage}{0.48\textwidth}
        \centering
        \includegraphics[width=\linewidth]{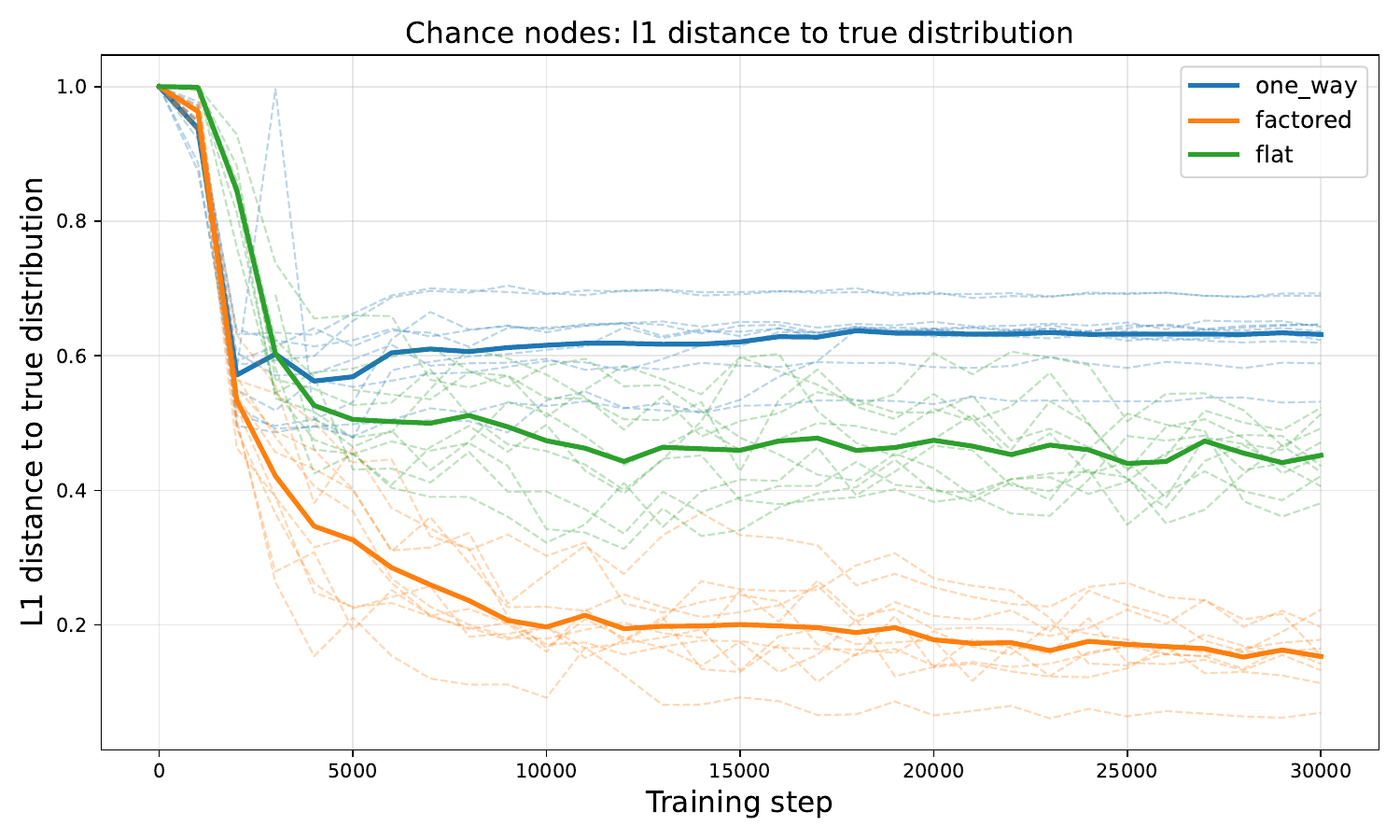}
        \caption{$L^1$ distance between the bucketed marginal of the prior and the true outcome distribution at the chance nodes of Leduc Poker. Bold curves are means over $10$ seeds, faint curves the individual seeds.}
        \label{fig:chance_l1}
    \end{minipage}\hfill
    \begin{minipage}{0.48\textwidth}
        \centering
        \includegraphics[width=\linewidth]{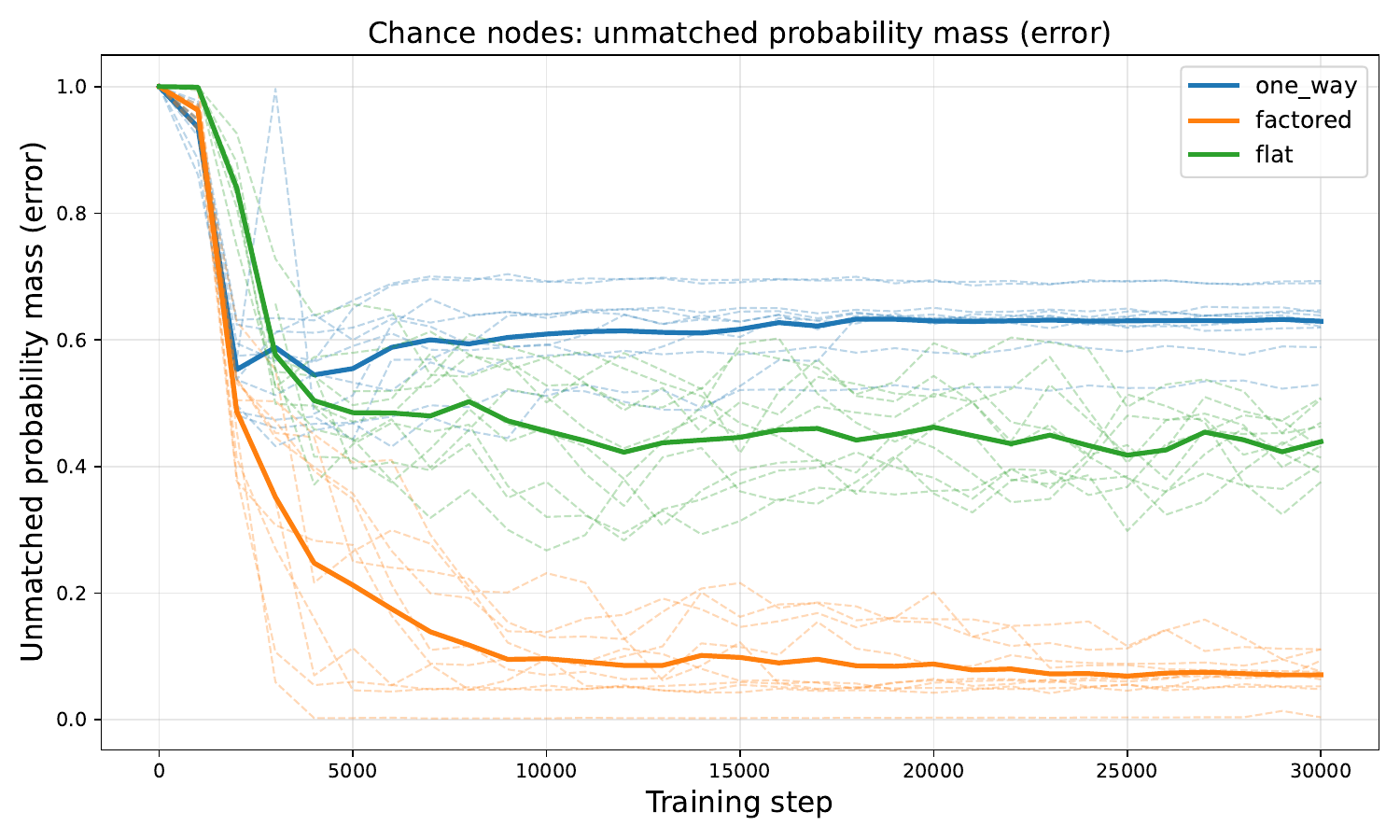}
        \caption{Unmatched probability mass at the chance nodes of Leduc Poker: the share of the prior that decodes to no real outcome. The one-way variant leaves roughly $0.63$ unmatched, the flat latent $0.44$ and the factored latent $0.07$.}
        \label{fig:chance_mag}
    \end{minipage}
\end{figure}

\begin{figure}[h]
    \centering
    \begin{minipage}{0.48\textwidth}
        \centering
        \includegraphics[width=\linewidth]{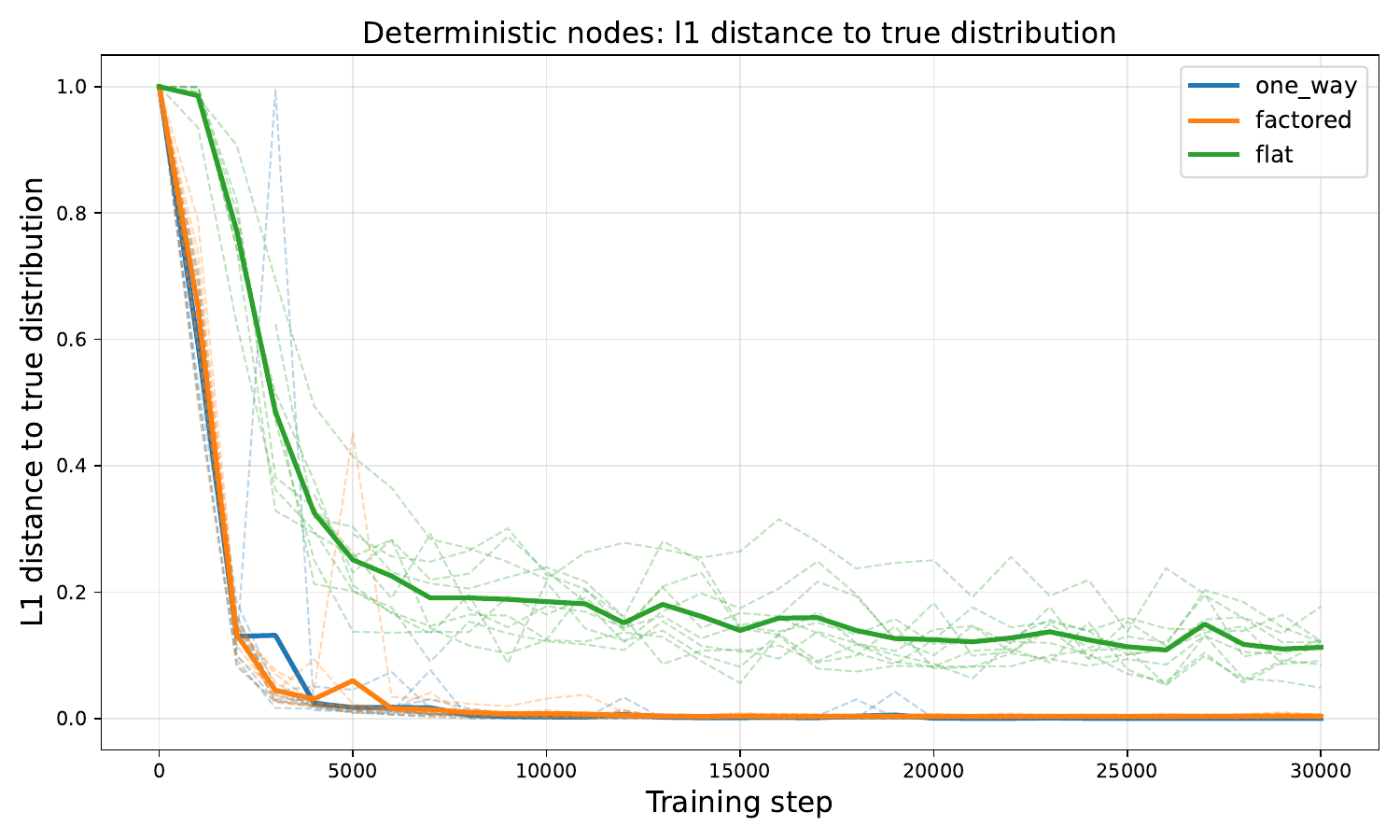}
        \caption{The same $L^1$ statistic at the deterministic nodes of Leduc Poker, where the true distribution is a Dirac on the single successor. The factored and one-way variants reach zero; the flat latent does not.}
        \label{fig:det_l1}
    \end{minipage}\hfill
    \begin{minipage}{0.48\textwidth}
        \centering
        \includegraphics[width=\linewidth]{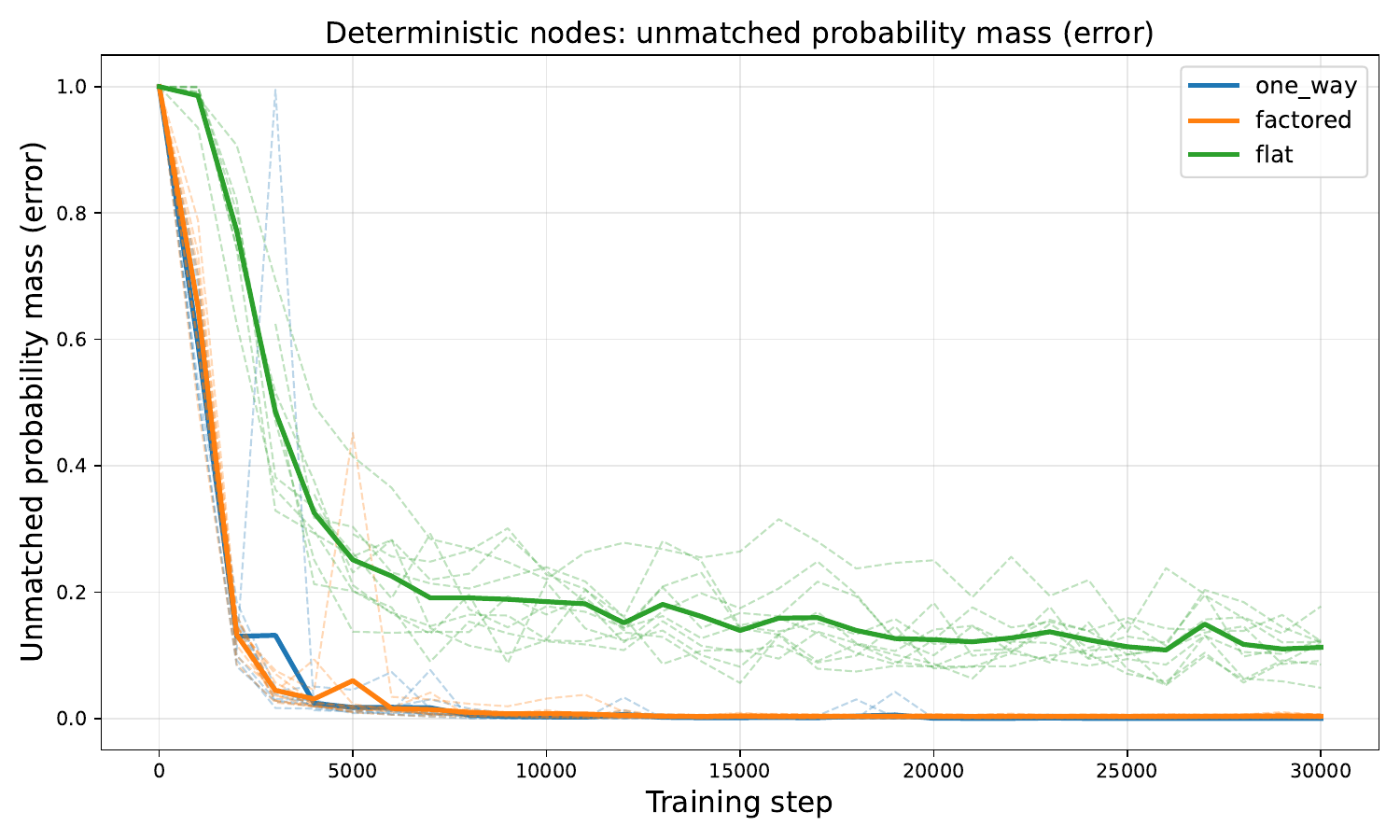}
        \caption{Unmatched probability mass at the deterministic nodes of Leduc Poker. It vanishes for the factored and one-way variants but plateaus near $0.11$ for the flat latent.}
        \label{fig:det_mag}
    \end{minipage}
\end{figure}

\subsection{Rollout Validity in Leduc Poker}
\label{app:rollout_leduc}

Rollout validity (Appendix~\ref{app:rollout}) removes the teacher forcing of the two statistics above, so errors compound as they do during imagination. We report it for the same three configurations over $10$ seeds and $30{,}000$ gradient steps. Leduc rewards are divided by the maximal bet of $13$ into $[-1, 1]$ (Appendix~\ref{app:leducerror}), so reward errors are on that normalized scale.

\textbf{The deterministic emissions survive the rollout} (Figures~\ref{fig:leduc_roll_terminal} and~\ref{fig:leduc_roll_legal}): terminal flags and legal-action masks reach zero within roughly $1{,}000$ gradient steps for all three configurations and stay there. The Leduc failure is not a loss of track of when the game ends or which actions are available.

\textbf{Neither observations nor rewards reach zero} (Figures~\ref{fig:leduc_roll_obs} and~\ref{fig:leduc_roll_reward}), and they disagree on which configuration is best. On observations the one-way variant leads at $0.36$, ahead of the factored at $0.54$ and the flat at $0.83$, all down from an initial $4.1$; on rewards the first two reverse, at $0.024$ and $0.029$, with the flat latent again worst at $0.037$. The flat latent is the only one worst on both, consistent with its failure to resolve even deterministic successors (Appendix~\ref{app:calibration_leduc}).

That the one-way variant leads here despite leaving $0.63$ of its chance-node mass unmatched follows from what each protocol measures: rollout validity scores whether a generated trajectory is realizable, not the probability with which it is generated, and the one-way defect is in the probabilities. The two protocols are therefore complementary, since a model can score well here while visiting realizable states at incorrect frequencies.

That the factored variant nonetheless trails it on observations, despite being far better calibrated at chance nodes, reflects two different notions of a latent state supporting an outcome. Bucketing (Appendix~\ref{app:calibration_protocol}) credits an outcome whenever \emph{some} configuration in the prior's support decodes close to it, whereas a rollout uses the configuration actually sampled. The size of the gap suggests the factored variant's best supporter for an outcome is considerably better than its typical sampled one.

\begin{figure}[h]
    \centering
    \begin{minipage}{0.48\textwidth}
        \centering
        \includegraphics[width=\linewidth]{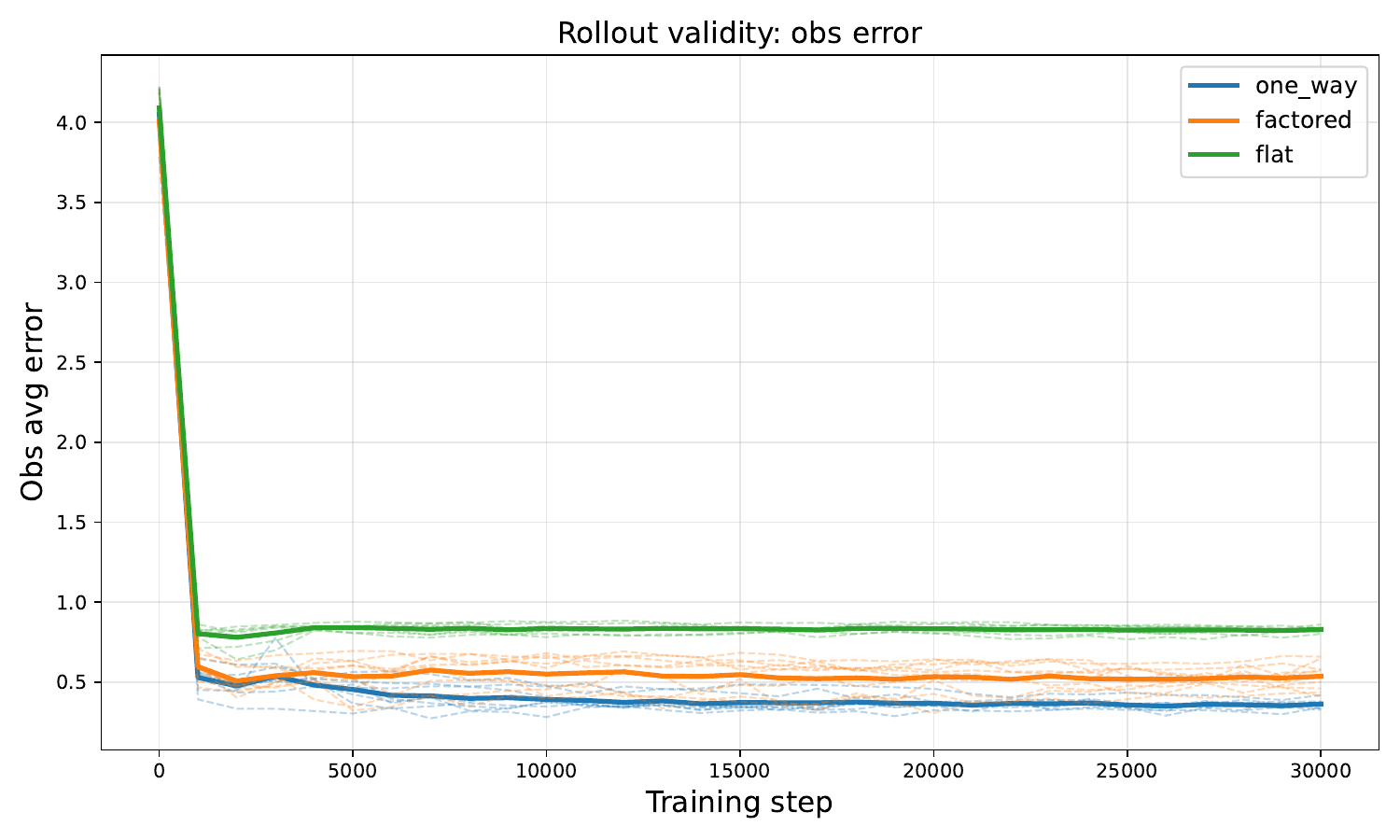}
        \caption{Rollout observation error in Leduc Poker, as the mean per-step $L^{\infty}$ error of the joint observation. Bold curves are means over $10$ seeds, faint curves the individual seeds. No configuration reaches zero.}
        \label{fig:leduc_roll_obs}
    \end{minipage}\hfill
    \begin{minipage}{0.48\textwidth}
        \centering
        \includegraphics[width=\linewidth]{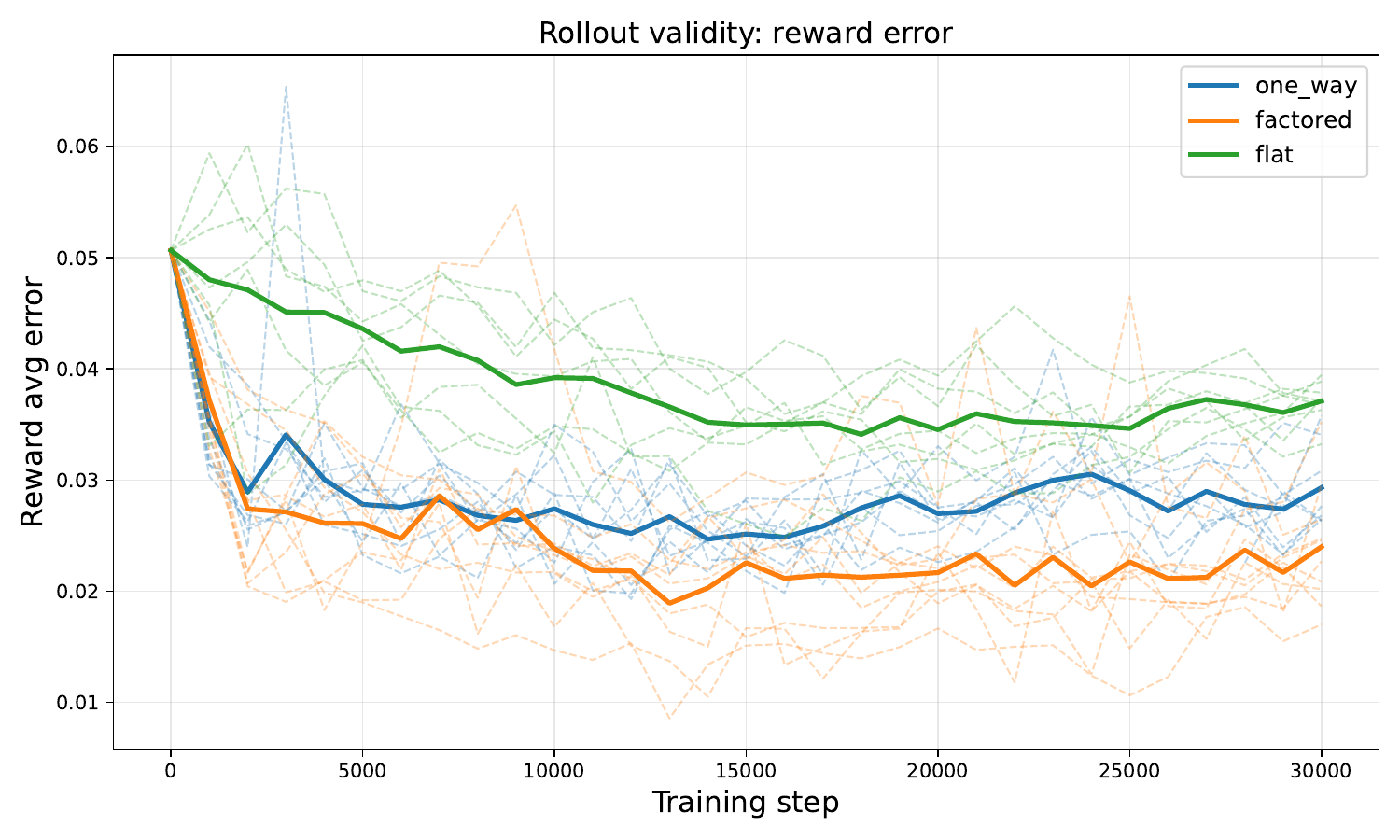}
        \caption{Rollout reward error in Leduc Poker, as the mean per-step $L^1$ error on rewards normalized to $[-1, 1]$. The ordering of the factored and one-way variants is the reverse of the one on observations.}
        \label{fig:leduc_roll_reward}
    \end{minipage}
\end{figure}

\begin{figure}[h]
    \centering
    \begin{minipage}{0.48\textwidth}
        \centering
        \includegraphics[width=\linewidth]{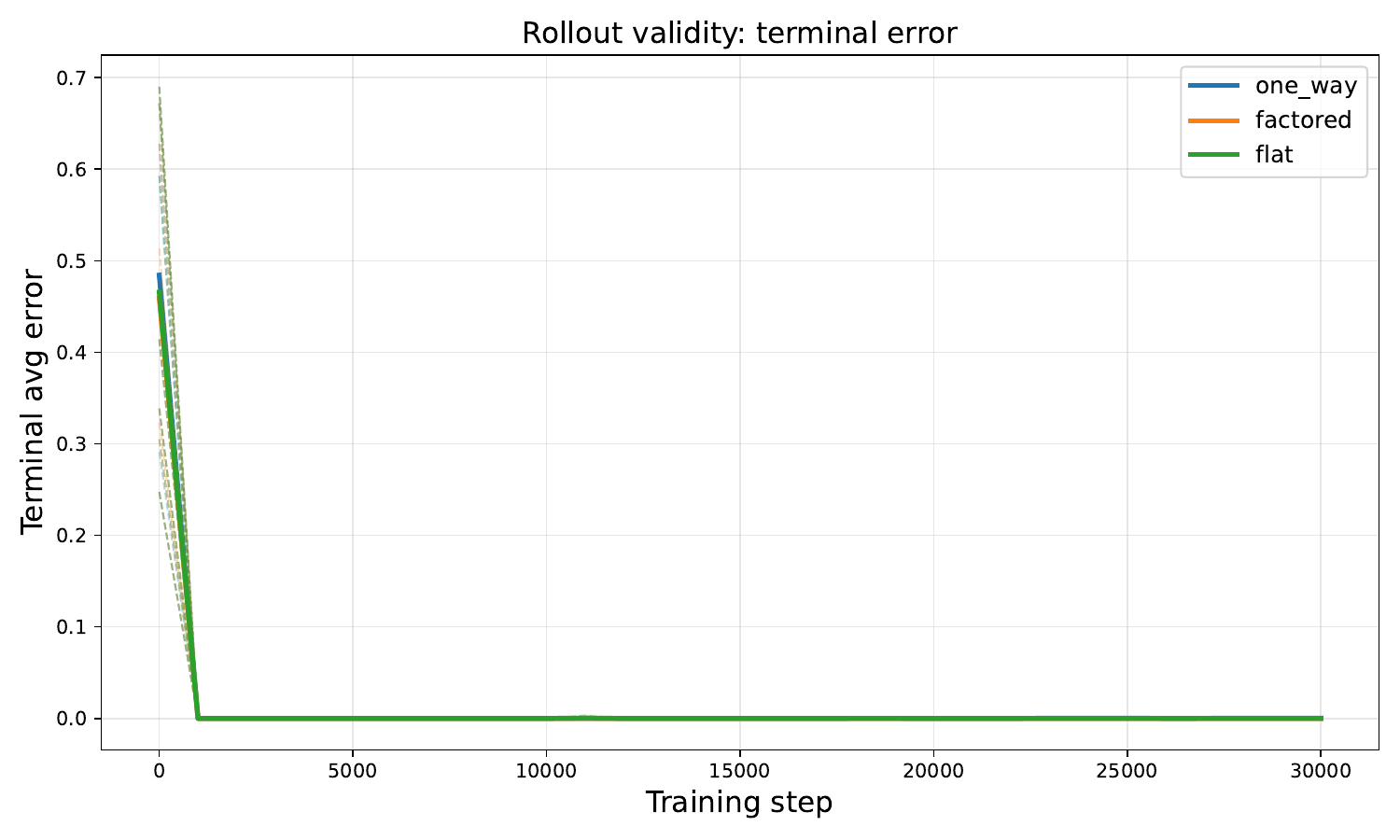}
        \caption{Rollout terminal-flag error in Leduc Poker. All three configurations reach zero within roughly $1{,}000$ gradient steps and the curves coincide thereafter.}
        \label{fig:leduc_roll_terminal}
    \end{minipage}\hfill
    \begin{minipage}{0.48\textwidth}
        \centering
        \includegraphics[width=\linewidth]{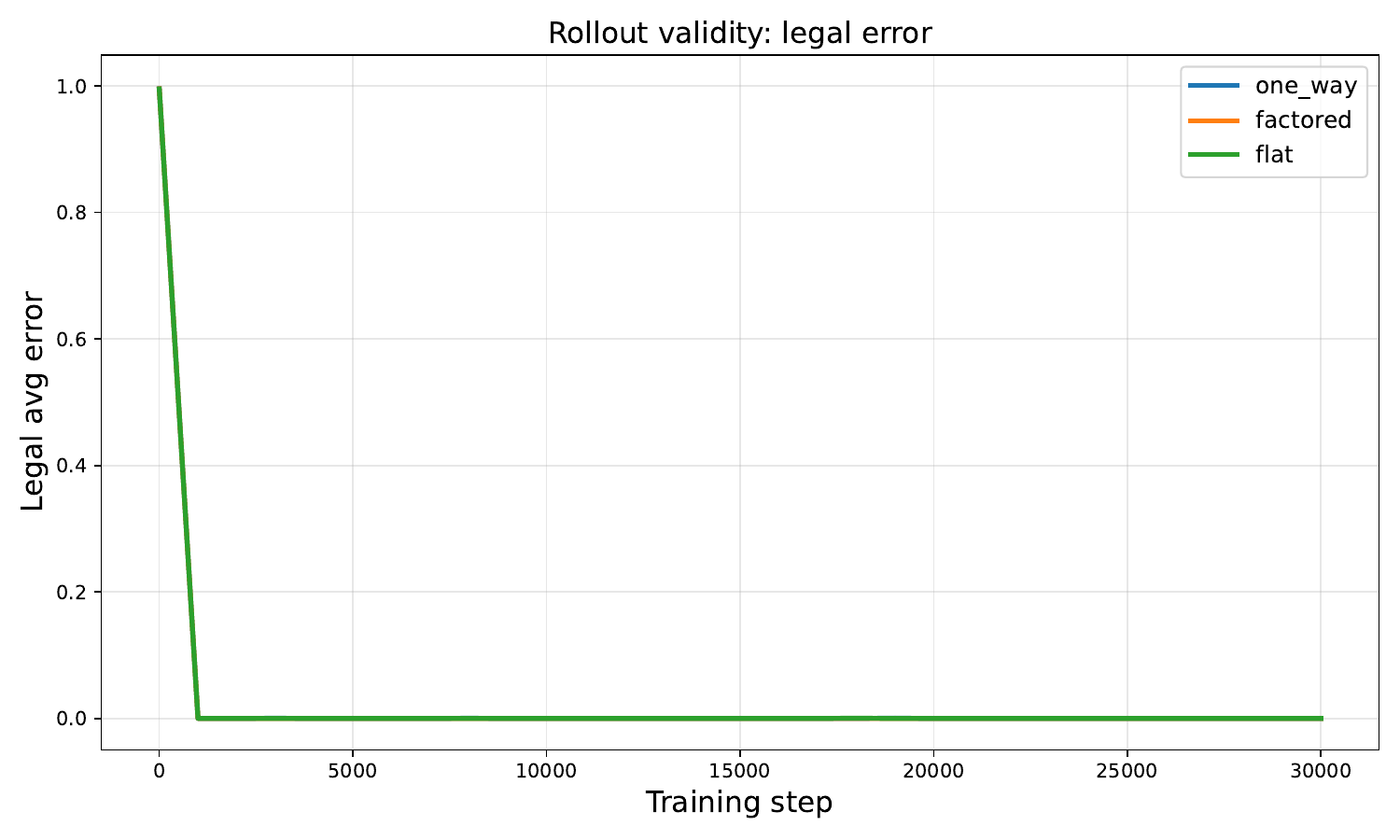}
        \caption{Rollout legal-action-mask error in Leduc Poker, which likewise reaches zero for all three configurations and stays there.}
        \label{fig:leduc_roll_legal}
    \end{minipage}
\end{figure}

\subsection{Rollout Validity in Goofspiel-5}
\label{app:rollout_goofspiel}

Goofspiel-5 reveals its point cards in descending order (Appendix~\ref{app:environments}) and therefore contains no chance nodes, so the failure mode of Theorem~\ref{thm:aliasedcounterexample} cannot arise. It serves as the positive control for the Leduc results: it establishes that the architecture can recover the rules of a game exactly. We report rollout validity for NashDreamer[RNaD] over $10$ seeds and $30{,}000$ gradient steps.

Every statistic converges. Terminal flags and legal-action masks reach zero error within $1{,}000$ gradient steps and remain there (Figures~\ref{fig:goof_roll_terminal} and~\ref{fig:goof_roll_legal}). The observation error falls from roughly $4.4$ to $0.25$ over the same $1{,}000$ steps and to near zero by $10{,}000$ (Figure~\ref{fig:goof_roll_obs}). The reward converges last (Figure~\ref{fig:goof_roll_reward}): it reaches roughly $0.04$ within $1{,}000$ steps, then descends slowly with individual seeds spiking to $0.1$, and approaches zero only in the final third of training.

The reward is also the slowest statistic in Leduc (Appendix~\ref{app:rollout_leduc}), which suggests a property of the reward signal rather than of either game. We attribute it to the reward being terminal: it is zero at every step except the one ending the game, and small in magnitude, so it contributes little to the prediction loss and is resolved last.

\begin{figure}[h]
    \centering
    \begin{minipage}{0.48\textwidth}
        \centering
        \includegraphics[width=\linewidth]{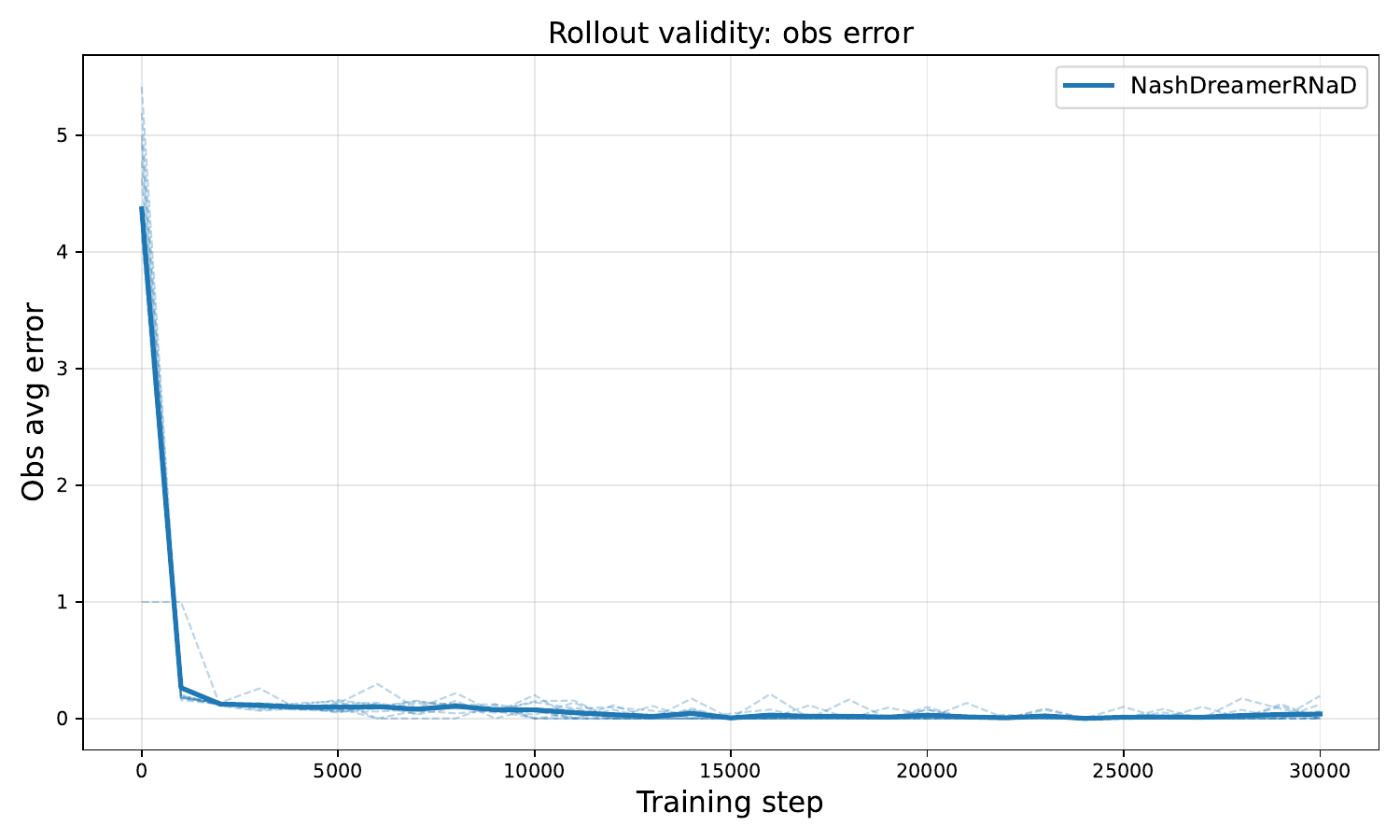}
        \caption{Rollout observation error in Goofspiel-5, as the mean per-step $L^{\infty}$ error of the joint observation. Bold curve is the mean over $10$ seeds, faint curves the individual seeds.}
        \label{fig:goof_roll_obs}
    \end{minipage}\hfill
    \begin{minipage}{0.48\textwidth}
        \centering
        \includegraphics[width=\linewidth]{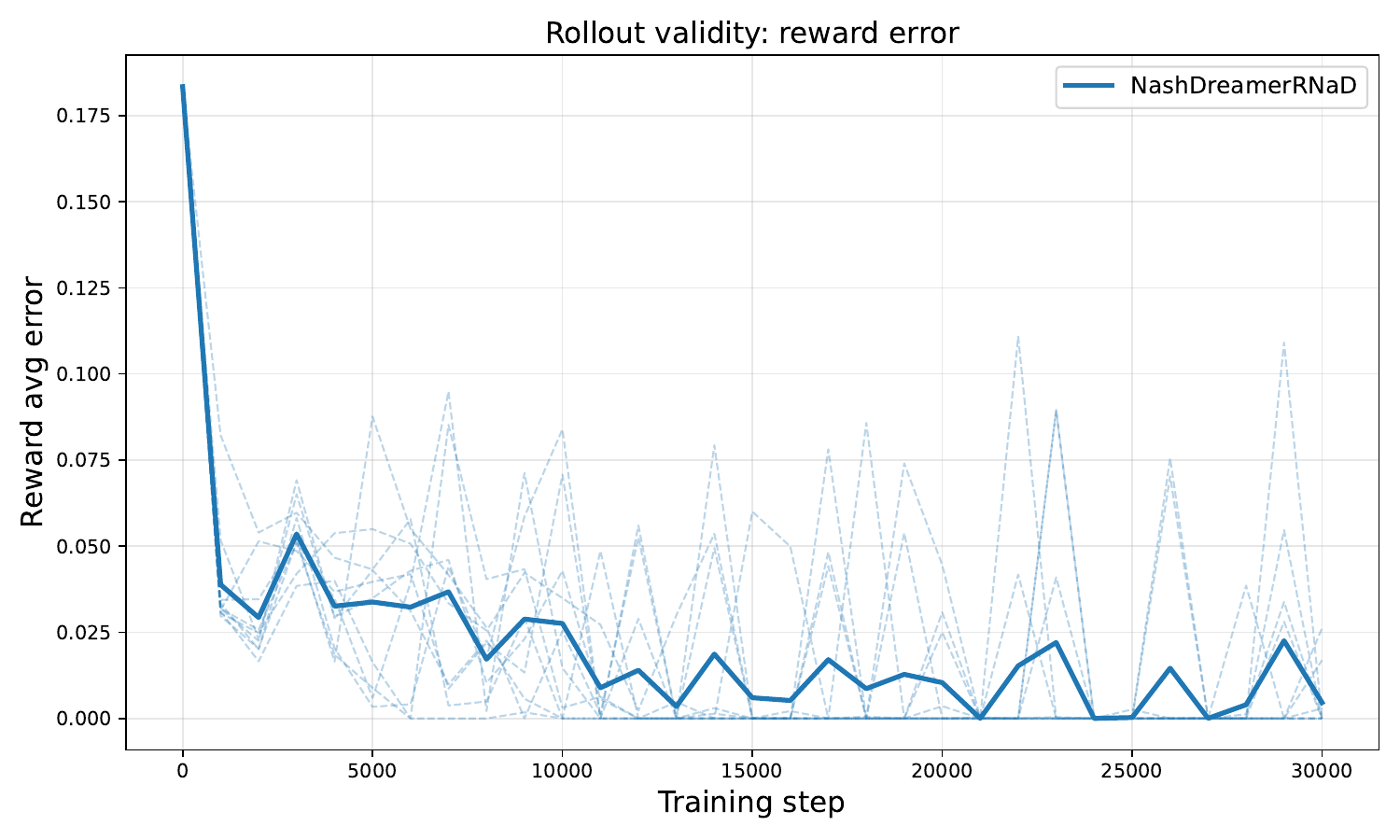}
        \caption{Rollout reward error in Goofspiel-5, the slowest of the four to converge, with individual seeds spiking before falling back.}
        \label{fig:goof_roll_reward}
    \end{minipage}
\end{figure}

\begin{figure}[h]
    \centering
    \begin{minipage}{0.48\textwidth}
        \centering
        \includegraphics[width=\linewidth]{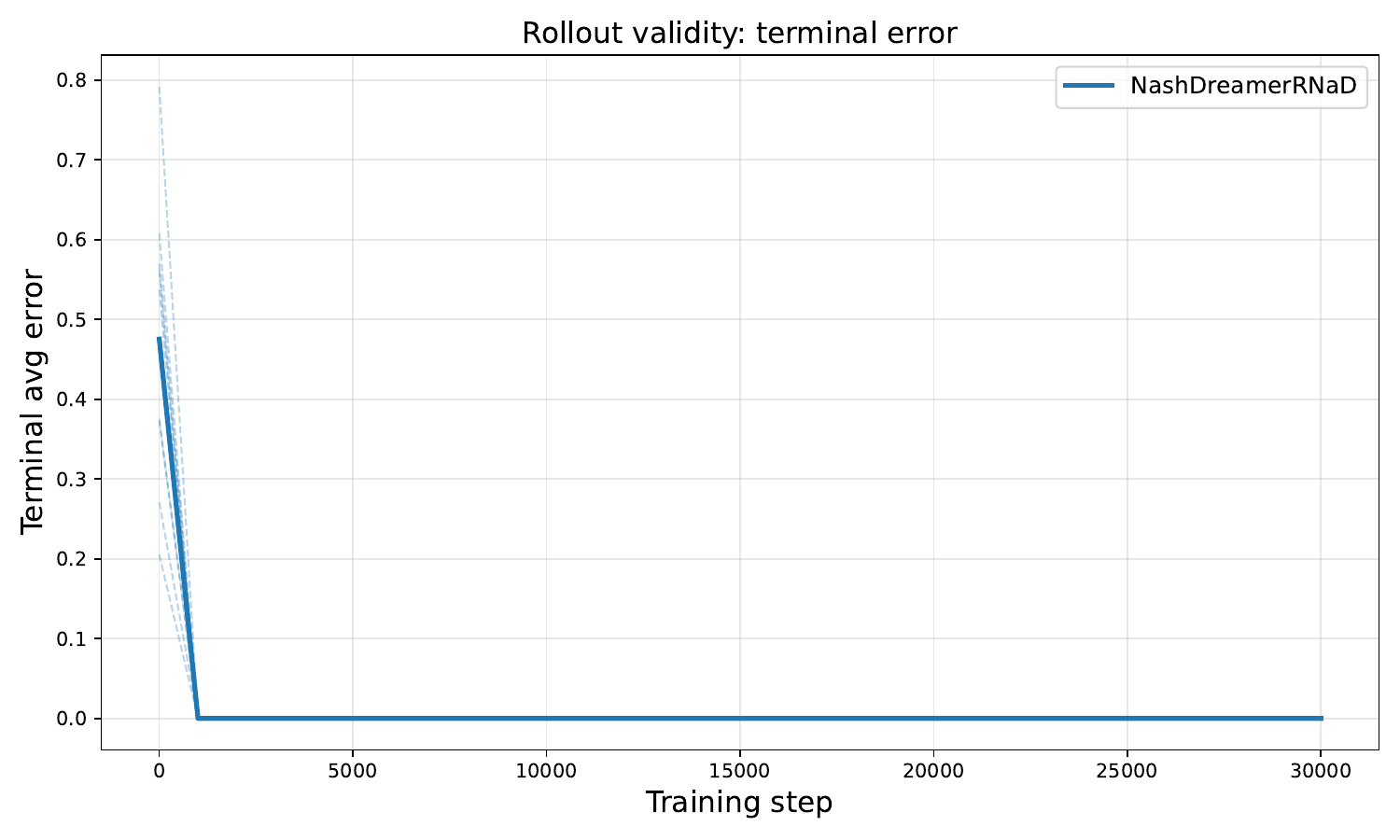}
        \caption{Rollout terminal-flag error in Goofspiel-5, at zero within $1{,}000$ gradient steps.}
        \label{fig:goof_roll_terminal}
    \end{minipage}\hfill
    \begin{minipage}{0.48\textwidth}
        \centering
        \includegraphics[width=\linewidth]{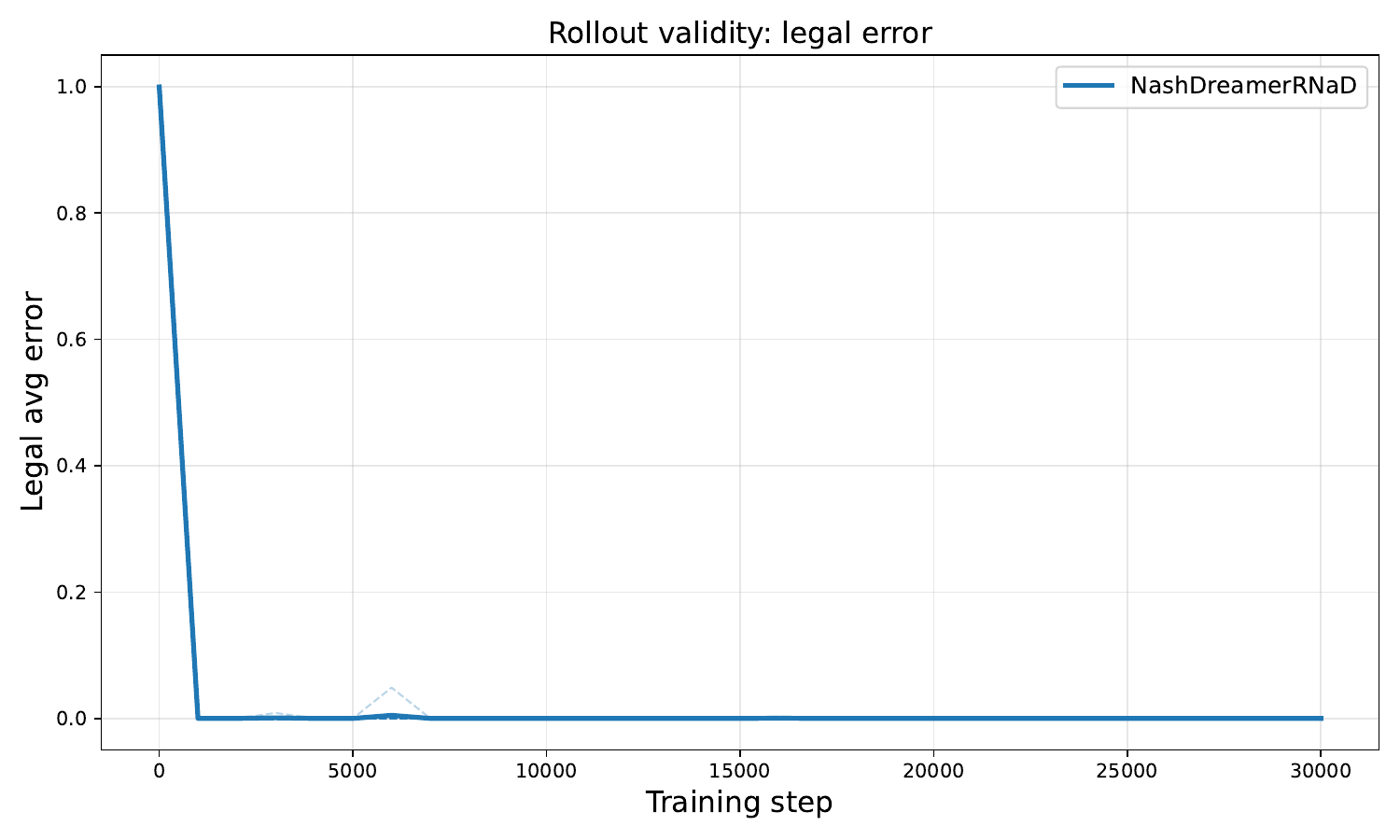}
        \caption{Rollout legal-action-mask error in Goofspiel-5, likewise at zero within $1{,}000$ gradient steps.}
        \label{fig:goof_roll_legal}
    \end{minipage}
\end{figure}

\section{Centralized versus Decentralized World Models}
\label{app:decentralized}

Section~\ref{sec:ctde} argues that a decentralized world model faces identifiability barriers rather than merely an accuracy penalty. This appendix tests that claim directly by ablating the one property under dispute: we train a \emph{decentralized} MARSSM in which each player's world model conditions only on that player's own actions and observations, leaving the architecture, the objective and the optimizer untouched.

\textbf{Setup.} The domain is perturbed Rock-Paper-Scissors (Appendix~\ref{app:environments}) a matrix game with non-uniform equilibrium. Both variants use RNaD as the actor-critic and run for $1{,}000$ gradient steps without switching the regularization policy, over $10$ seeds. At each evaluated checkpoint we collect $4$ batches of $2{,}058$ rollouts and score them by the protocol of Appendix~\ref{app:rollout}.

\textbf{The decentralized model learns everything except the reward.} Figures~\ref{fig:rps_terminal}--\ref{fig:rps_reward} report the four error statistics. On the three quantities that do not depend on the opponent's action, the two variants are essentially the same model: terminal flags and legal-action masks reach zero error within $100$ gradient steps for both, and the joint observation error falls to zero for both, the decentralized variant simply taking longer to get there, around $600$ steps against roughly $200$ for the centralized one.

The reward separates them completely. The centralized model error in predicting the reward actually increases with training, due to it being unidentifiable for the ego-centric model.

\begin{figure}[h]
    \centering
    \begin{minipage}{0.48\textwidth}
        \centering
        \includegraphics[width=\linewidth]{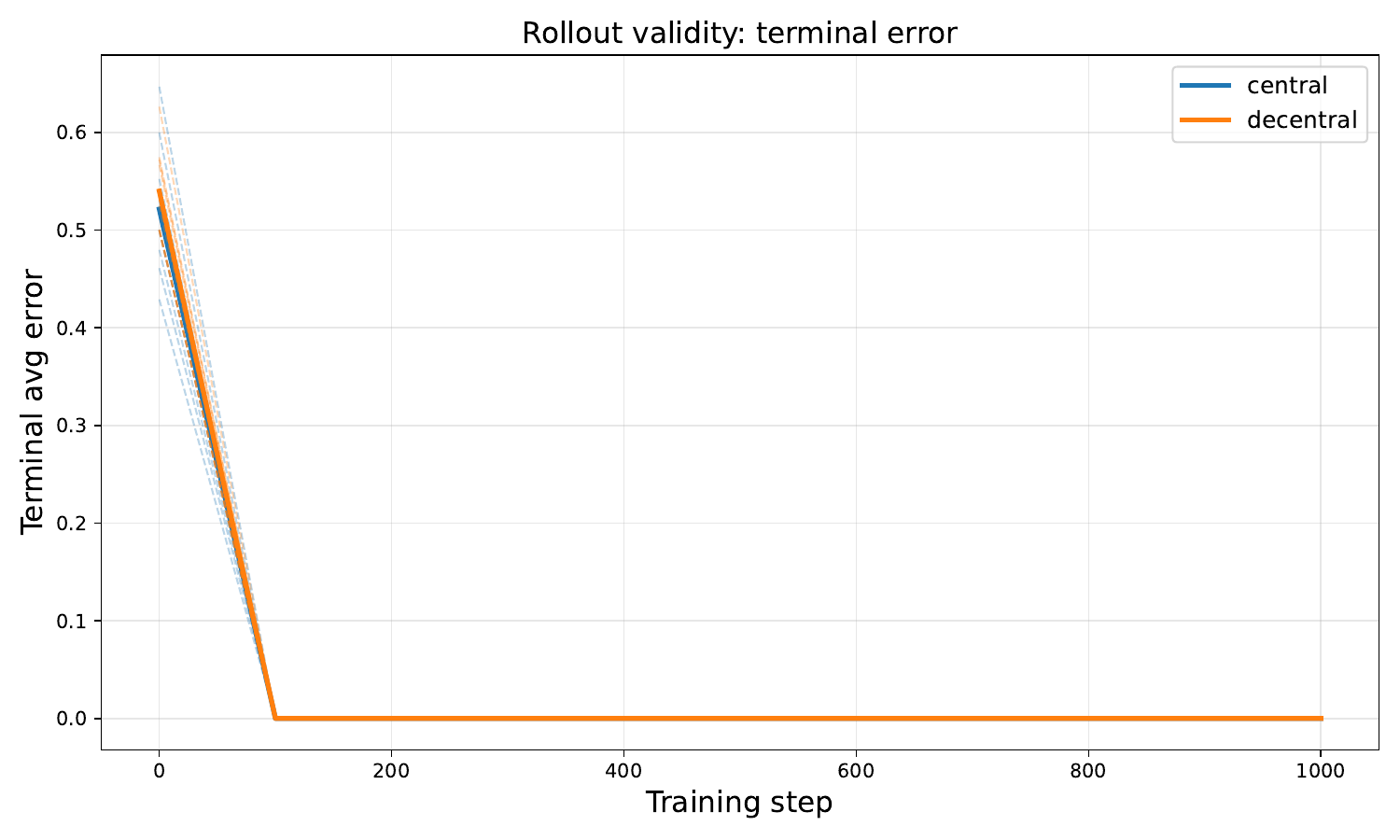}
        \caption{Terminal-flag error over rollouts in perturbed Rock-Paper-Scissors. Bold curves are means over the $10$ seeds, faint curves the individual seeds. The two variants coincide.}
        \label{fig:rps_terminal}
    \end{minipage}\hfill
    \begin{minipage}{0.48\textwidth}
        \centering
        \includegraphics[width=\linewidth]{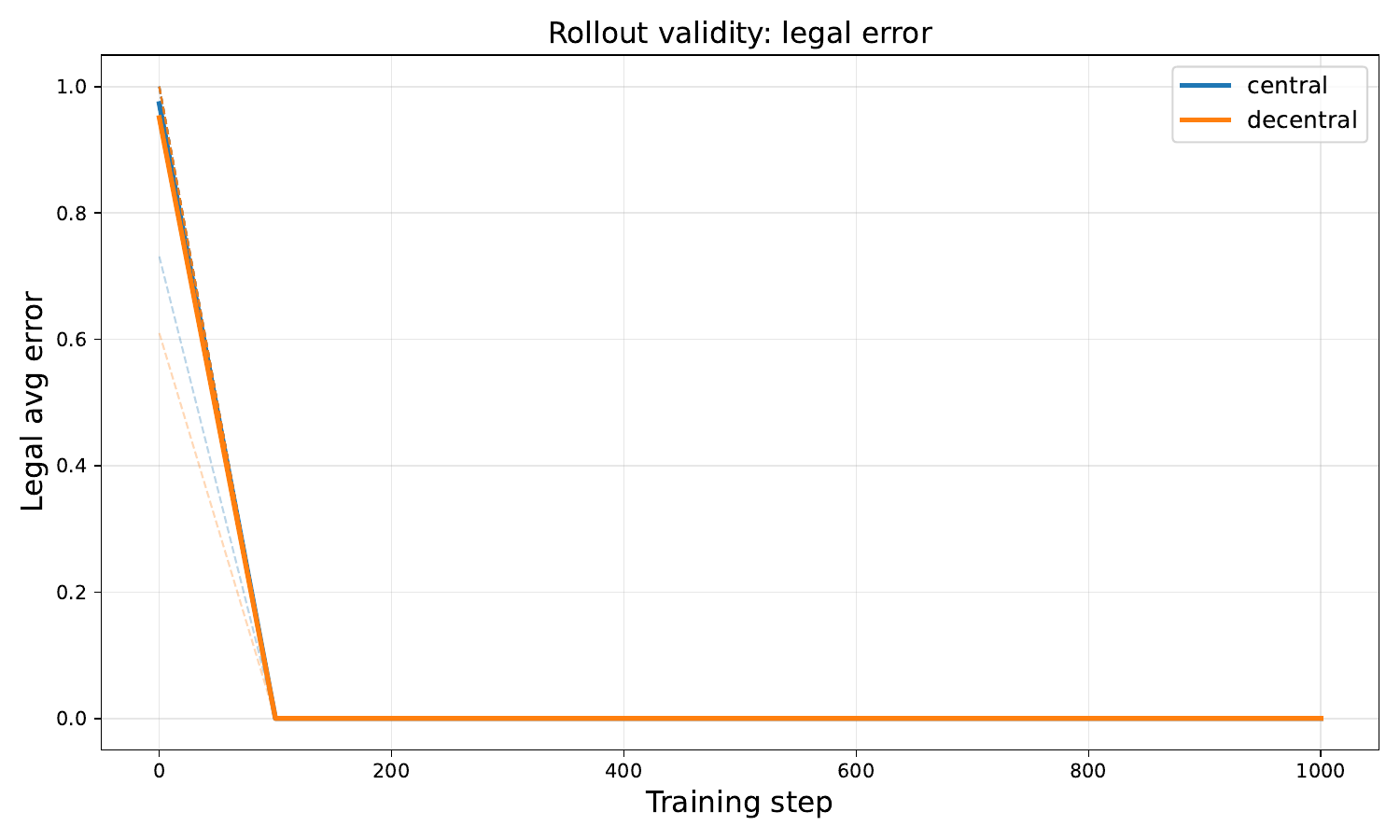}
        \caption{Legal-action-mask error in perturbed Rock-Paper-Scissors, on the same convention. The two variants again coincide, both reaching zero within $100$ gradient steps.}
        \label{fig:rps_legal}
    \end{minipage}
\end{figure}

\begin{figure}[h]
    \centering
    \begin{minipage}{0.48\textwidth}
        \centering
        \includegraphics[width=\linewidth]{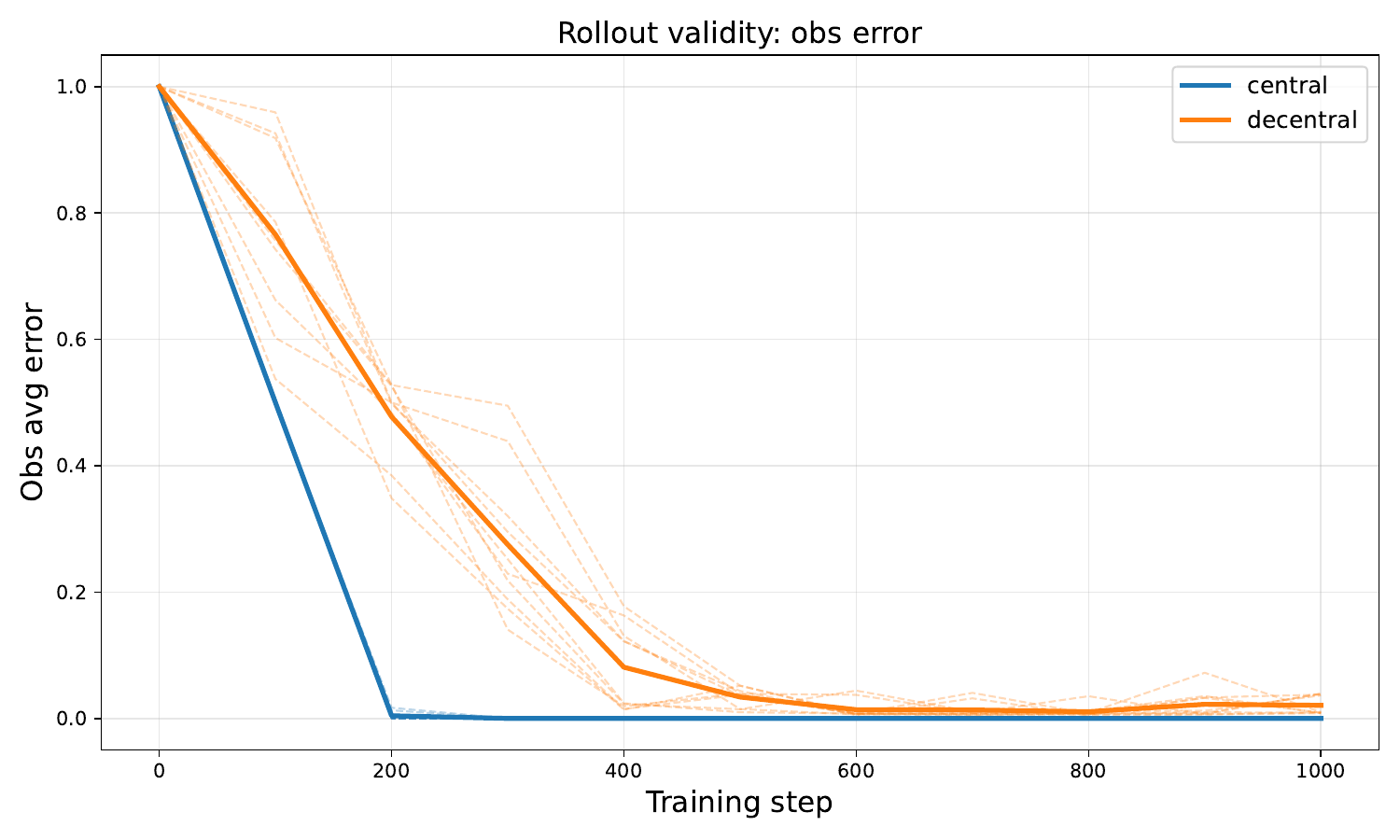}
        \caption{Joint-observation error in perturbed Rock-Paper-Scissors. Both variants converge to zero; the decentralized model is slower but not obstructed.}
        \label{fig:rps_obs}
    \end{minipage}\hfill
    \begin{minipage}{0.48\textwidth}
        \centering
        \includegraphics[width=\linewidth]{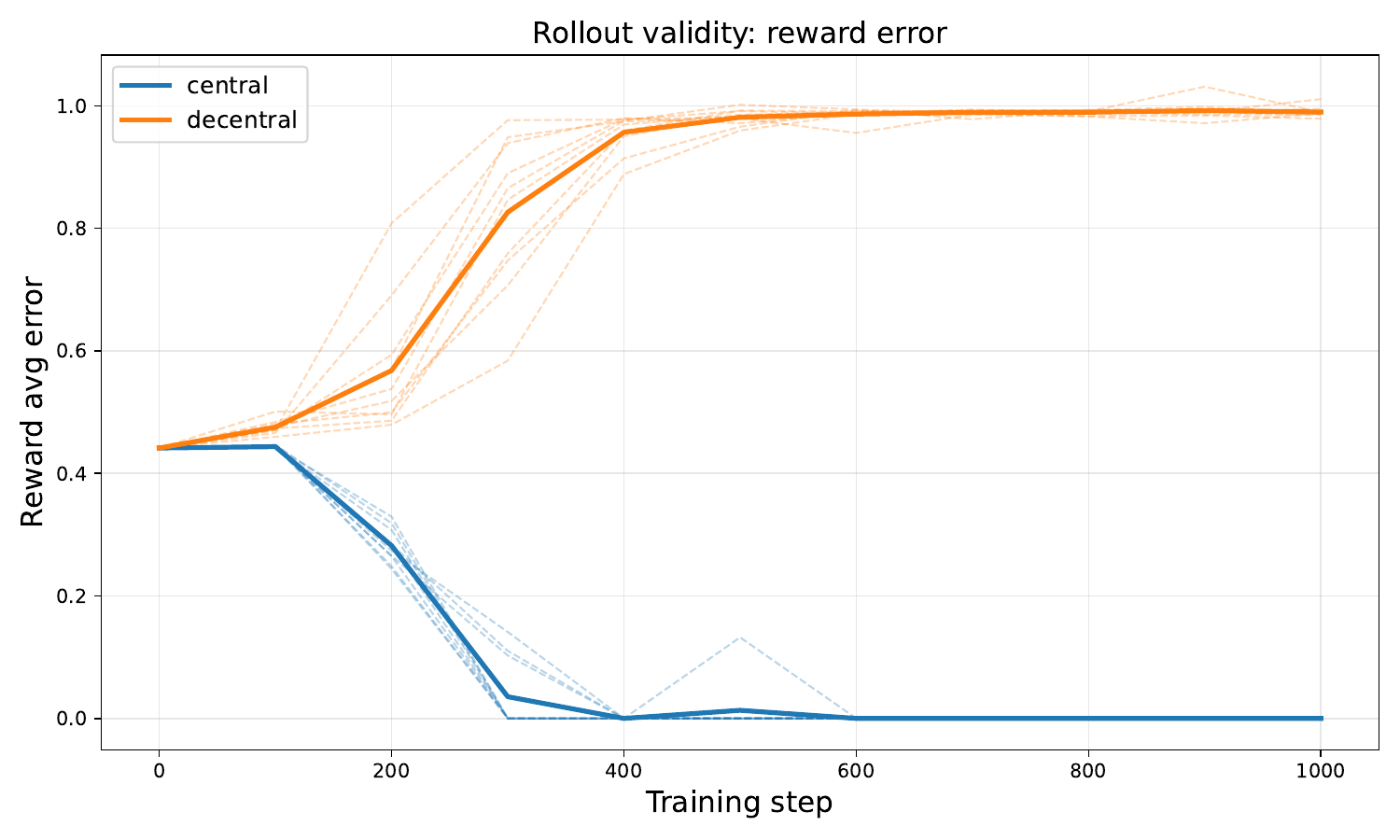}
        \caption{Reward error in perturbed Rock-Paper-Scissors. The centralized model converges to zero, whereas the decentralized model diverges to approximately $1.0$: without observing $a_2$ the reward is unidentifiable from its inputs.}
        \label{fig:rps_reward}
    \end{minipage}
\end{figure}

\section{Single-Agent Validation}
\label{app:pcm_results}

We validated NashDreamer against the reference DreamerV3 implementation on PCM with 3 cards. To ensure a fair comparison, we adapted the reference baseline to accommodate the specific constraints of the environment. Standard DreamerV3 utilizes a burn-in period to approximate the recurrent state from sampled trajectory chunks. However, in a strictly sequential game like PCM, a short burn-in period discards critical historical context, causing structure learning to fail. To resolve this, we configured the reference implementation to sample full trajectories starting from the root node, exactly mirroring the NashDreamer setup. Furthermore, rather than relying on standard reward penalties and episode truncation for invalid moves, we explicitly masked illegal actions during reference sampling.

Under these matched conditions, both models successfully learned the optimal policy within 1000 gradient steps with a minibatch size of 32. However, deeper analysis via \emph{Latent Error Trees} (LE-Trees; see Appendix~\ref{app:letrees} for the visualization methodology and full figures) revealed a NashDreamer structural advantage. 

NashDreamer's LE-Tree was an exact match with the true reachable dynamics (Figure~\ref{fig:letree_pcm_nd}), while DreamerV3 retained artificial stochasticity in its transitions (Figure~\ref{fig:letree_pcm_ref}) (exactly the aforementioned branch duplication). We attribute this structural precision to two factors: (1) the additional legal action prediction signal, and (2) the centralized unifier loss, which penalizes unpredictable variance in the stochastic component. Crucially, as established in our theoretical analysis, without the unifier loss acting as a structural ``tiebreaker,'' the perfectly isomorphic model and a model exhibiting benign branch duplication would incur the exact same dynamics loss.

\begin{figure}[h]
    \centering
    \begin{minipage}{0.48\textwidth}
        \centering
        \includegraphics[width=0.55\linewidth]{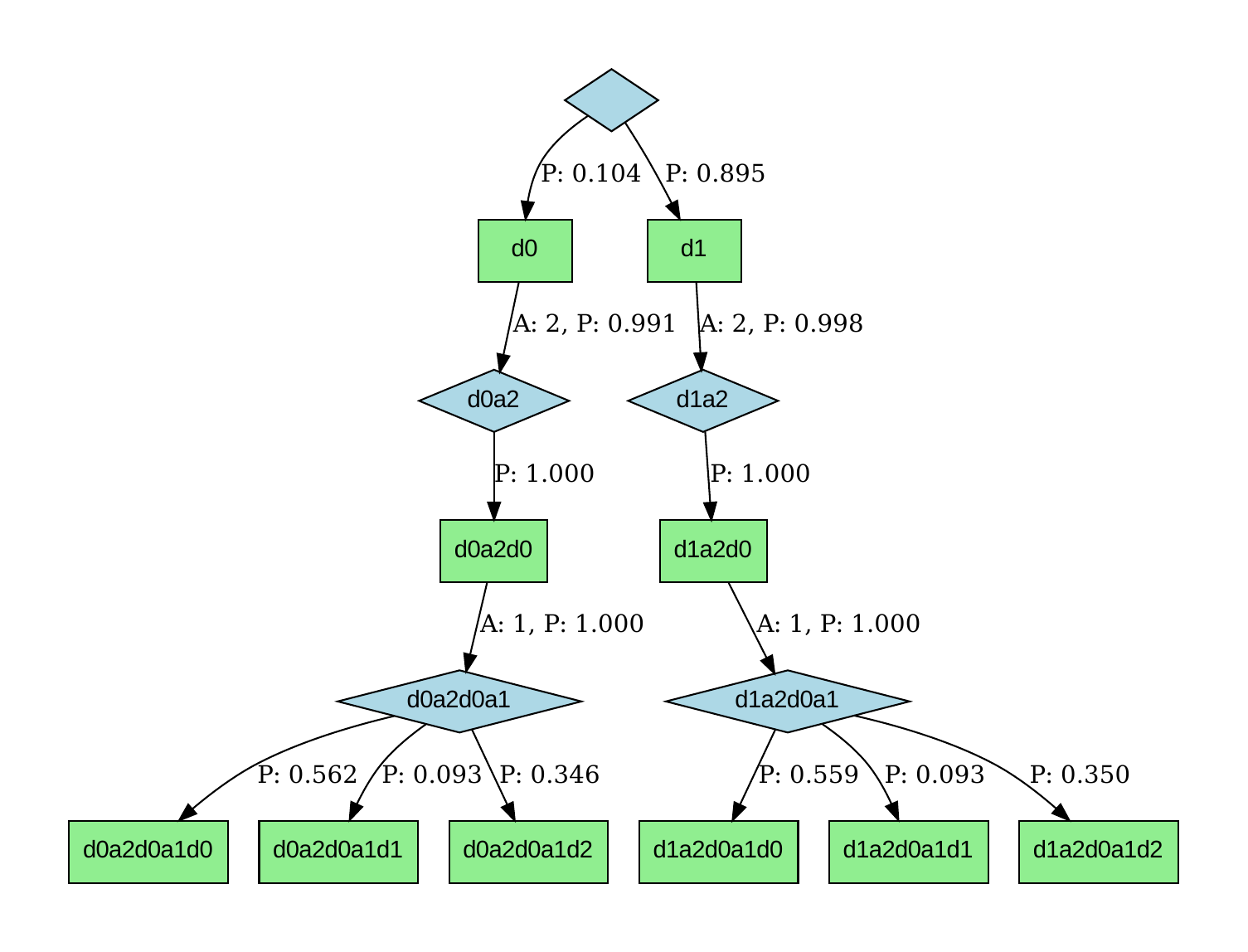}
        \caption{DreamerV3 LE-Tree on PCM~3 (1000 gradient steps). Predictions are accurate along the optimal path, but the model fails to learn that the environment is deterministic.}
        \label{fig:letree_pcm_ref}
    \end{minipage}\hfill
    \begin{minipage}{0.48\textwidth}
        \centering
        \includegraphics[width=0.22\linewidth]{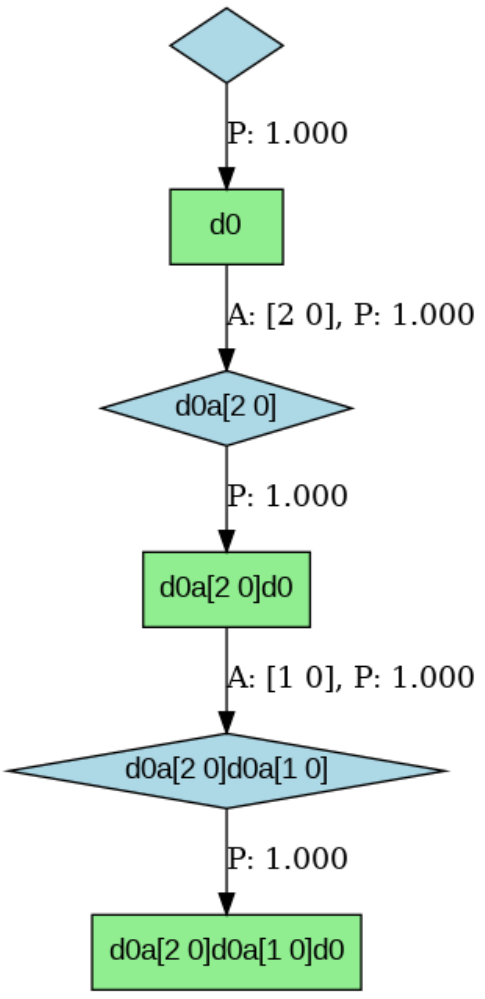}
        \caption{NashDreamer LE-Tree on PCM~3 (1000 gradient steps). The learned model is an exact match of the true reachable dynamics within the 0.2 accuracy threshold.}
        \label{fig:letree_pcm_nd}
    \end{minipage}
\end{figure}

\section{REINFORCE vs. RNaD}
\label{app:reinforce_vs_rnad}
Since REINFORCE has no convergence guarantees in two-player zero-sum IIGs and statically tuning it is not sound in the context of changing world model, we opted not to use it as our policy optimization algorithm. Nevertheless, we test in in Rock-Paper-Scissors (RPS) as a standalone algorithm, and in Goofspiel 3 descending wrapped in NashDreamer to validate our choice.

\subsection{REINFORCE in RPS}
Since every decision node in a POSG \ref{sec:posg} is effectively a matrix game, we tested model-free REINFORCE without neural network function approximation in the canonical matrix game Rock-paper-scissors (RPS). We evaluated a low entropy bonus of $3e-4$, characteristic of single-agent environments and a higher entropy bonus of $0.2$, typical of multi-agent environments.

We tested two distinct initializations: a uniform initialization, which initializes the algorithm in the equilibrium of the game, and a pure (Rock, Rock) initialization. The results are presented for 10 distinct seeds, with a 1-$\sigma$ bar interval.

The algorithm always diverged away from equilibrium, even with the higher entropy bonus, which only slowed the divergence (Figure \ref{fig:rpsuniform}). Under the deterministic strategy initialization, low entropy bonus causes the algorithm to remain trapped in that determinism (Figure \ref{fig:rpspure}).

\begin{figure}[t]
    \centering
    \begin{minipage}{0.48\textwidth}
        \centering
        \includegraphics[width=\linewidth]{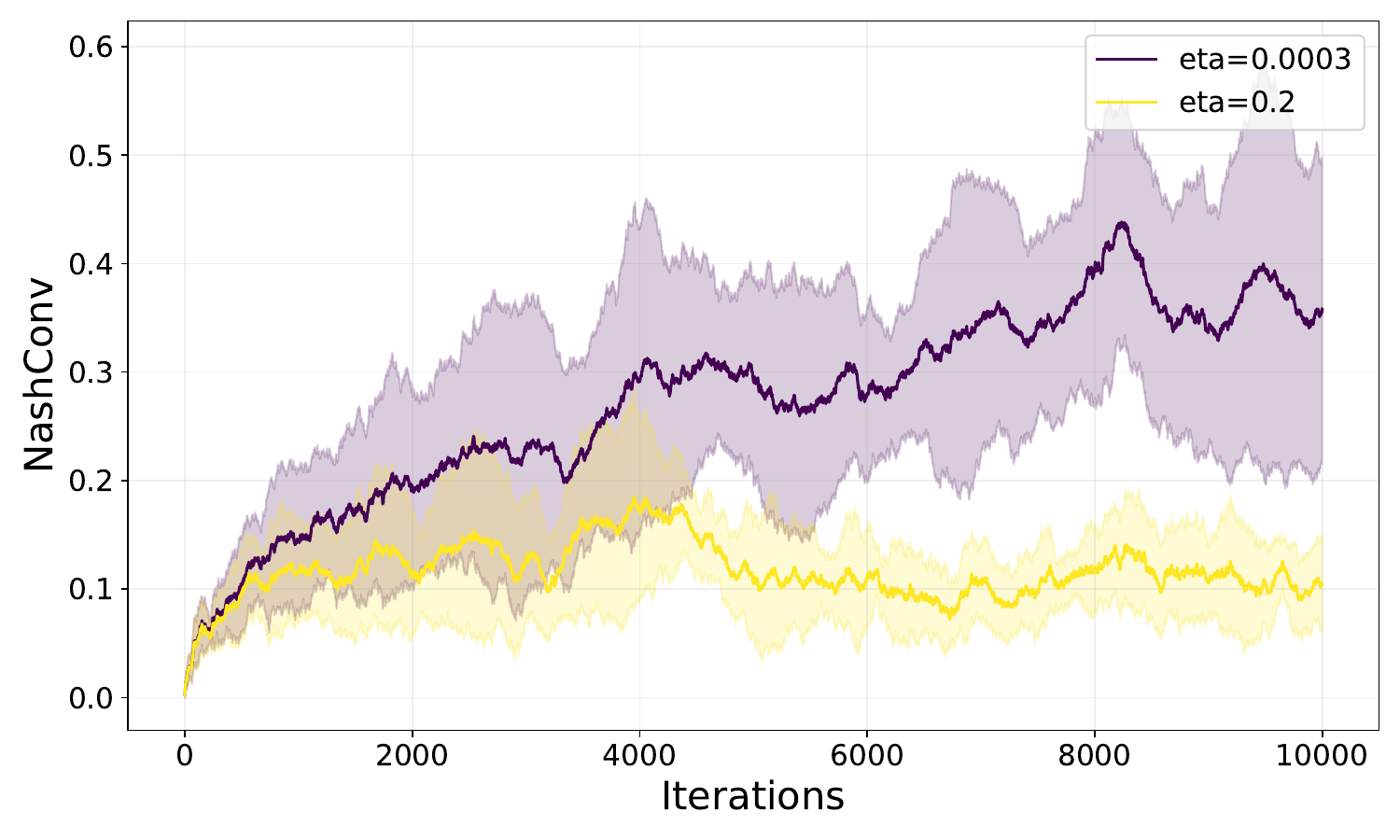}
        \caption{REINFORCE in RPS with uniform initialization. Even at equilibrium, low entropy bonus causes divergence; high bonus slows but does not prevent it.}
        \label{fig:rpsuniform}
    \end{minipage}\hfill
    \begin{minipage}{0.48\textwidth}
        \centering
        \includegraphics[width=\linewidth]{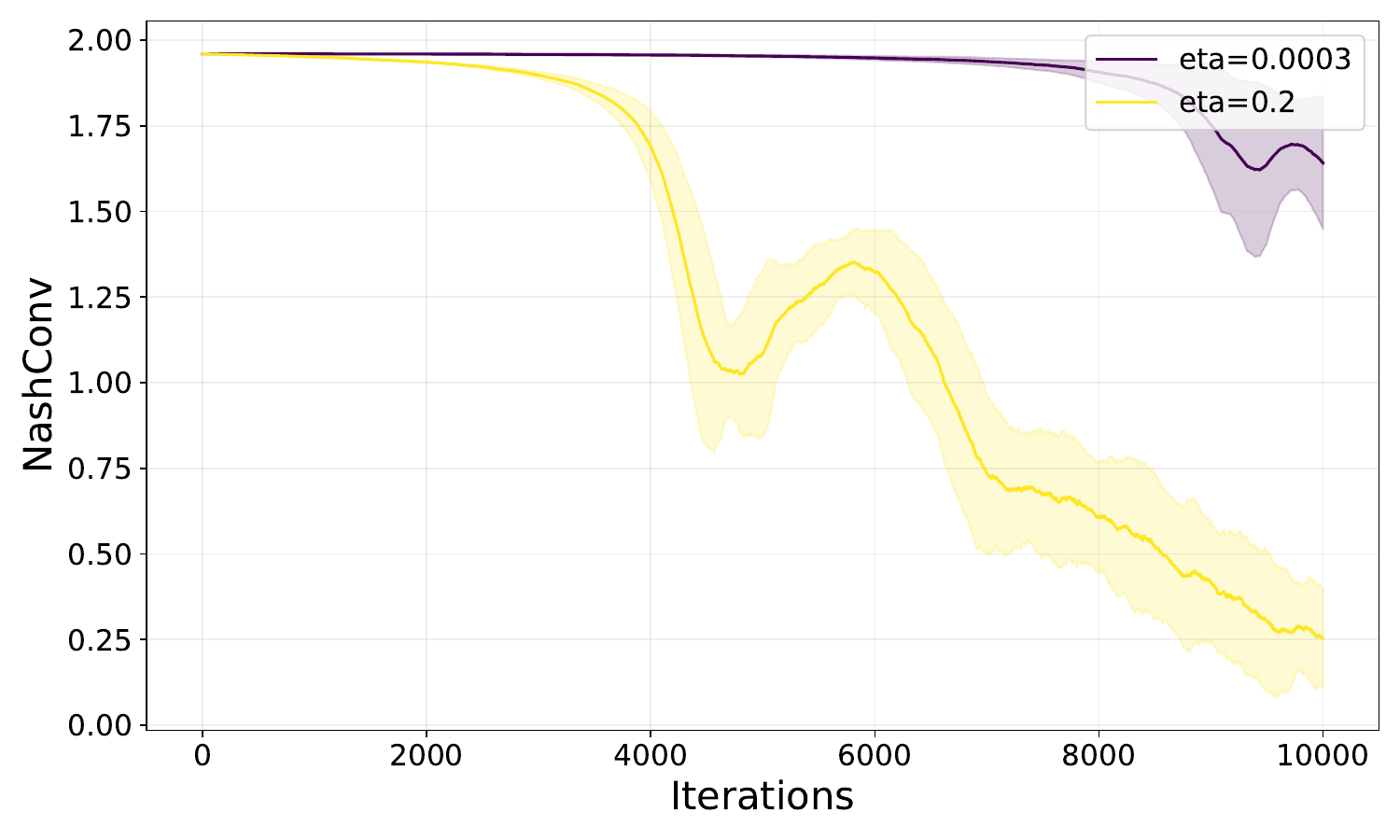}
        \caption{REINFORCE in RPS with pure initialization. Low entropy bonus is trapped in determinism.}
        \label{fig:rpspure}
        
    \end{minipage}
\end{figure}

\begin{figure}
    \begin{minipage}{0.48\textwidth}
        \centering
        \includegraphics[width=\linewidth]{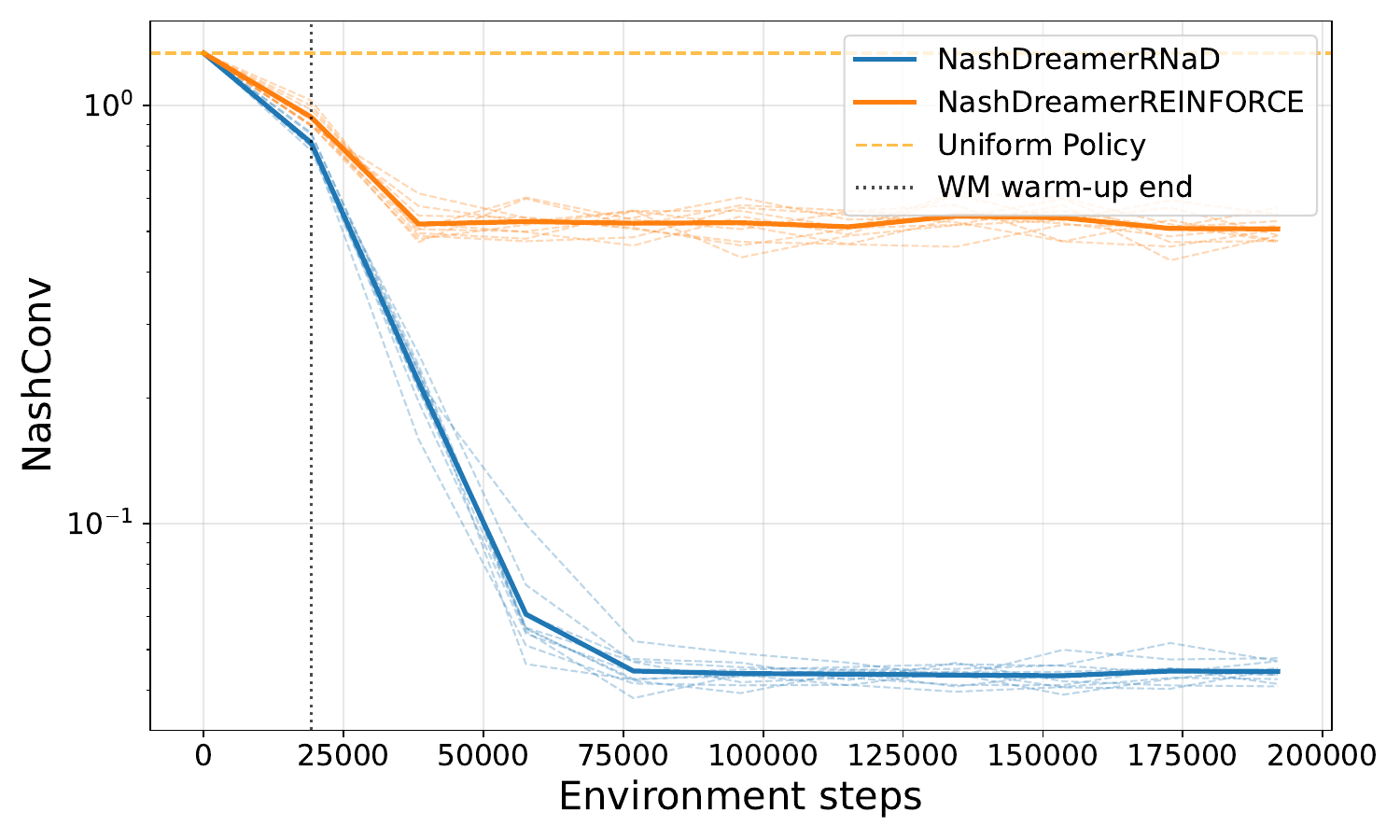}
        \caption{NashConv on Goofspiel 3. RNaD converges to the pure NE; REINFORCE with $\eta=0.2$ entropy bonus cannot.}
        \label{fig:goofspiel3}
    \end{minipage}\hfill
\end{figure}

\subsection{Goofspiel 3}

To evaluate REINFORCE as a component of the full NashDreamer architecture, we perform experiments in Goofspiel 3 (which has a pure equilibrium).

We test a reimplementation of the DreamerV3 version of REINFORCE \citep{dreamerv3}, with identical hyperparameters, except the entropy bonus, which we set to $0.2$ to match the RNaD regularization strength.

RNaD successfully converged to the pure NE, while REINFORCE, burdened by the entropy exploration bonus, was unable to collapse into the required pure strategies (Figure~\ref{fig:goofspiel3}). Analysis of the LE-Trees (Figure~\ref{fig:letree_goof_rnad} for RNaD; REINFORCE is contained in the supplementary material due to size) confirmed that REINFORCE spread probability mass across all actions rather than concentrating on the equilibrium.

\begin{figure}[h]
            \centering
        \includegraphics[width=0.22\linewidth]{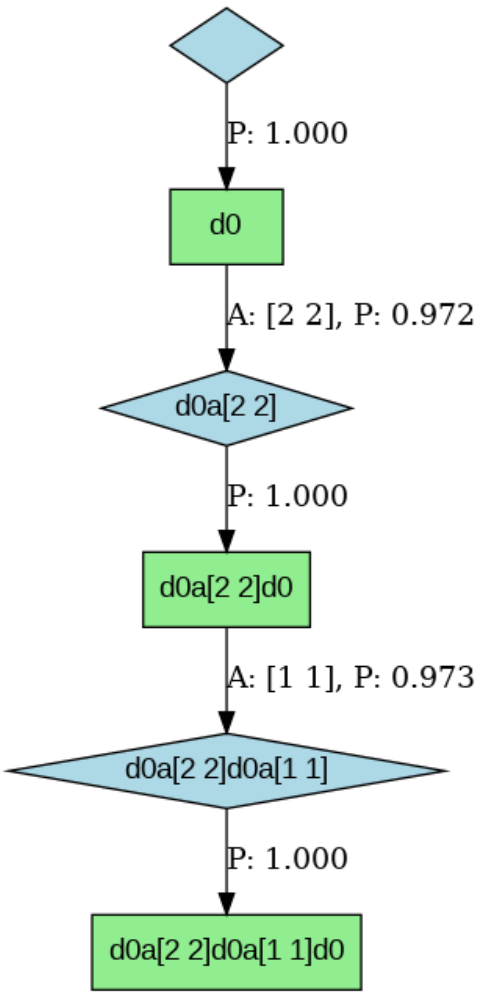}
        \caption{NashDreamer with RNaD LE-Tree on Goofspiel~3 (1000 gradient steps). The model learns both the pure equilibrium and accurate predictions along the equilibrium path.}
        \label{fig:letree_goof_rnad}

\end{figure}

\section{Learning from partial observations}
\label{app:observations}

In our primary experiments (Section \ref{sec:experiments}), we provide both NashDreamer and the standalone RNaD baseline with pre-aggregated infoset representation vectors. This design choice was made to isolate the primary experimental variable: the sample efficiency of the centralized world model versus an off-policy replay buffer and prevent confounding the difference by differing input representations.

However, the MARSSM architecture is designed to handle aggregating sequences of raw, partial observations via the decentralized recurrent infoset model. To empirically validate this capability, we conducted an ablation on the Imperfect Information Goofspiel-5 environment.

In this experiment, we tested NashDreamer both with pre-aggregated infoset input and with only the observation in the current state (e.g., the current point card and win/loss/tie information). As demonstrated in Figure \ref{fig:goofspiel_5_obs_vs_iset}, the difference in performance is marginal, validating the capability of NashDreamer to aggregate per-player information and model the environment dynamics simultaneously. 

\begin{figure}[h]
        \centering
        \includegraphics[width=0.5\linewidth]{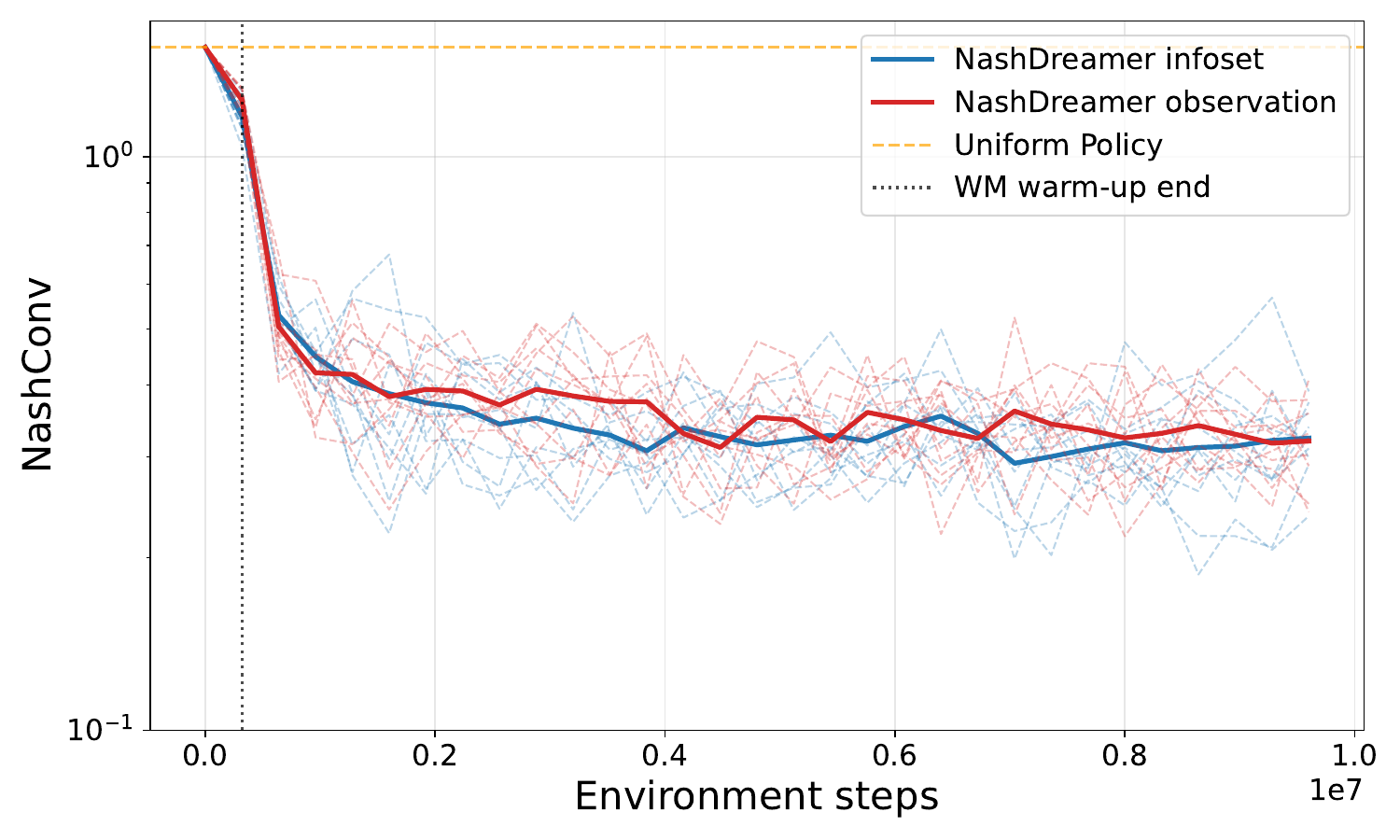}
        \caption{Comparison of NashConv in Imperfect Information Goofspiel 5 when using pre-aggregated infoset representation versus latent infoset modelling from partial observations. The model demonstrates the capability to model accurate latent infosets.}
        \label{fig:goofspiel_5_obs_vs_iset}
\end{figure}

\section{NashConv Against Total Actor-Critic Steps}
\label{app:seensteps}

The NashConv curves of Section~\ref{sec:sample_efficiency} are plotted against \emph{real} environment interaction, because that is the quantity our sample-efficiency claim concerns. For completeness, we report here the same runs plotted against the total number of transitions consumed by the actor-critic, counting real, imagined and replayed transitions identically. Figure~\ref{fig:seensteps} gives this view for Imperfect Information Goofspiel-5, Leduc Poker and Phantom Tic-Tac-Toe. In every panel, the horizontal axis is logarithmic, evaluated checkpoints are marked on the curves, and the plotted range begins at the end of the world-model warm-up, before which no imagined data exists. The Leduc runs use the configuration that performed best empirically: the factored latent stochastic state of Appendix~\ref{app:leducerror} with $\beta^{\text{dyn}} = 0.1$ and $\beta^{\text{enc}} = 0.01$.

Two observations are worth drawing from these plots. First, the model-based runs consume roughly an order of magnitude more total transitions than model-free RNaD to reach a comparable NashConv. This is expected, and it is precisely why the main text reports real interaction: imagined transitions are cheap in the regime we target, so counting them equally with real ones measures a different quantity than the one we claim to improve. Second, the comparison that is meaningful on this axis is against RNaD with replay, which was deliberately configured to consume more total transitions than NashDreamer generates by imagination (Appendix~\ref{app:headtohead}). In Goofspiel-5 and in Phantom Tic-Tac-Toe alike, NashDreamer[RNaD] reaches a lower NashConv than the replay baseline at comparable total step counts, so the improvement is not attributable to step volume alone. In Leduc the NashDreamer[RNaD] curve additionally exhibits the late-training instability reported in Section~\ref{sec:sample_efficiency} and localized in Appendix~\ref{app:diagnostics}.

\textbf{How to read these plots.} Every run is evaluated at $30$ checkpoints spaced $1{,}000$ gradient steps apart, and each evaluated checkpoint is marked on its curve. Because the horizontal axis counts transitions consumed rather than gradient steps, an equal number of gradient steps maps to a different horizontal position for each method, so the curves neither begin nor end together. This is most visible for the replay-augmented baseline, whose first checkpoint lies well to the right of the others because it already consumes replayed transitions during the warm-up period; the model-based runs separate from their model-free counterparts in the same manner once imagination begins. At the first checkpoint, by contrast, NashDreamer[RNaD] and model-free RNaD coincide, as they must: up to the end of the warm-up the two consume exactly the same real transitions, and the same holds for the MMD pair.

\begin{figure}[h]
    \centering
    \includegraphics[width=\linewidth]{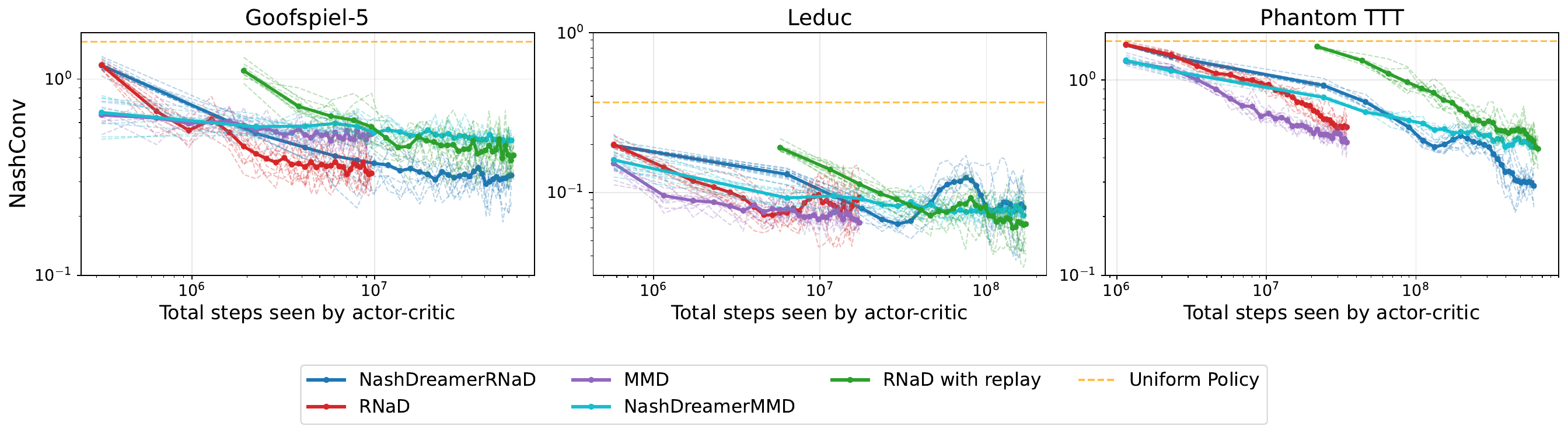}
    \caption{NashConv against the total number of transitions consumed by the actor-critic, counting real, imagined and replayed transitions identically, in Imperfect Information Goofspiel-5 (left), Leduc Poker (middle) and Phantom Tic-Tac-Toe (right). Markers indicate evaluated checkpoints. Bold curves are means over the 10 training seeds, faint curves the individual seeds. The Leduc runs use the factored latent stochastic state with $\beta^{\text{dyn}} = 0.1$ and $\beta^{\text{enc}} = 0.01$.}
    \label{fig:seensteps}
\end{figure}

\section{Head-to-Head Evaluation}
\label{app:headtohead}

We also report statistics of head-to-head play on larger games, using the same checkpoints as reported in Section~\ref{sec:approx_expl}, which were obtained by playing 1M games per seed pairing, in each seat (so 20M games per ego-opponent pair in general). The results are averaged across the seats.

\textbf{Battleship $5\times5$.} NashDreamer[RNaD] wins against every other method at all three checkpoints (Table~\ref{tab:hth_battleship}), but its margin shrinks monotonically as interactions accumulate: against model-free RNaD it falls from $66.7\%$ at 10K to $62.6\%$ at 20K and $56.7\%$ at 30K, and against RNaD with replay from $71.9\%$ to $66.6\%$ to $61.5\%$. NashDreamer[MMD] shows the same pattern against model-free MMD, $61.5\%$ falling to $56.5\%$. This is the early-to-mid-training scope we claim throughout: the world model buys an advantage that the model-free baseline gradually closes given enough environment interaction. Between the two RNaD baselines, the one with replay is the weaker at every checkpoint, losing to plain RNaD in all three matrices, which is consistent with the variance argument of Appendix~\ref{app:replay}.

\begin{table}[h]
\centering
\small
\setlength{\tabcolsep}{4pt}
\begin{tabular}{lccccc}
\toprule
 & \multicolumn{5}{c}{Win rate (\%) vs.} \\
\cmidrule(lr){2-6}
Method & MMD & ND[MMD] & ND[RNaD] & RNaD & RNaD w/ replay \\
\midrule
\multicolumn{6}{c}{\textit{10K gradient steps}} \\
\midrule
MMD & -- & $38.5 \pm 2.6$ & $25.6 \pm 2.5$ & $41.4 \pm 2.7$ & $48.3 \pm 2.3$ \\
ND[MMD] & $61.5 \pm 2.6$ & -- & $35.7 \pm 2.2$ & $52.7 \pm 2.2$ & $58.8 \pm 1.1$ \\
ND[RNaD] & $74.4 \pm 2.5$ & $64.3 \pm 2.2$ & -- & $66.7 \pm 2.1$ & $71.9 \pm 1.6$ \\
RNaD & $58.6 \pm 2.7$ & $47.3 \pm 2.2$ & $33.3 \pm 2.1$ & -- & $56.9 \pm 1.7$ \\
RNaD w/ replay & $51.7 \pm 2.3$ & $41.2 \pm 1.1$ & $28.1 \pm 1.6$ & $43.1 \pm 1.7$ & -- \\
\midrule
\multicolumn{6}{c}{\textit{20K gradient steps}} \\
\midrule
MMD & -- & $40.7 \pm 1.8$ & $29.0 \pm 1.6$ & $42.0 \pm 2.0$ & $45.1 \pm 1.8$ \\
ND[MMD] & $59.3 \pm 1.8$ & -- & $37.3 \pm 1.5$ & $51.0 \pm 2.2$ & $53.4 \pm 1.4$ \\
ND[RNaD] & $71.0 \pm 1.6$ & $62.7 \pm 1.5$ & -- & $62.6 \pm 1.2$ & $66.6 \pm 0.9$ \\
RNaD & $58.0 \pm 2.0$ & $49.0 \pm 2.2$ & $37.4 \pm 1.2$ & -- & $55.0 \pm 1.2$ \\
RNaD w/ replay & $54.9 \pm 1.8$ & $46.6 \pm 1.4$ & $33.4 \pm 0.9$ & $45.0 \pm 1.2$ & -- \\
\midrule
\multicolumn{6}{c}{\textit{30K gradient steps}} \\
\midrule
MMD & -- & $43.5 \pm 2.0$ & $32.2 \pm 1.7$ & $39.3 \pm 1.9$ & $43.2 \pm 1.7$ \\
ND[MMD] & $56.5 \pm 2.0$ & -- & $38.5 \pm 1.8$ & $46.3 \pm 2.5$ & $49.9 \pm 1.9$ \\
ND[RNaD] & $67.8 \pm 1.7$ & $61.5 \pm 1.8$ & -- & $56.7 \pm 2.3$ & $61.5 \pm 0.8$ \\
RNaD & $60.7 \pm 1.9$ & $53.7 \pm 2.5$ & $43.3 \pm 2.3$ & -- & $54.9 \pm 1.9$ \\
RNaD w/ replay & $56.8 \pm 1.7$ & $50.1 \pm 1.9$ & $38.5 \pm 0.8$ & $45.1 \pm 1.9$ & -- \\
\bottomrule
\end{tabular}
\caption{Head-to-head play of 1M games for each seed pairing and seat in Battleship $5\times5$: win rate (\%) of the row method against the column method, mean $\pm 2\sigma$ over training seeds. ND denotes NashDreamer with the given actor-critic inside imagination.}
\label{tab:hth_battleship}
\end{table}

\textbf{Stochastic Goofspiel (13 cards).} Here head-to-head play does not separate a NashDreamer variant from its model-free counterpart at all (Table~\ref{tab:hth_goofspiel}). NashDreamer[RNaD] and RNaD are at parity across all three checkpoints, the model-based side winning $50.3\%$, $49.9\%$ and $49.7\%$ of matches, and NashDreamer[MMD] and MMD likewise ($49.0\%$, $48.6\%$, $48.9\%$). RNaD consistently beats MMD across the tested checkpoints, and RNaD with replay is sligthly weaker than without it.

\begin{table}[h]
\centering
\small
\setlength{\tabcolsep}{4pt}
\begin{tabular}{lccccc}
\toprule
 & \multicolumn{5}{c}{Win rate (\%) vs.} \\
\cmidrule(lr){2-6}
Method & MMD & ND[MMD] & ND[RNaD] & RNaD & RNaD w/ replay \\
\midrule
\multicolumn{6}{c}{\textit{10K gradient steps}} \\
\midrule
MMD & -- & $51.0 \pm 1.8$ & $37.4 \pm 1.8$ & $36.6 \pm 1.9$ & $42.5 \pm 1.6$ \\
ND[MMD] & $49.0 \pm 1.8$ & -- & $36.5 \pm 2.0$ & $35.2 \pm 1.5$ & $41.3 \pm 1.7$ \\
ND[RNaD] & $62.6 \pm 1.8$ & $63.5 \pm 2.0$ & -- & $50.3 \pm 1.6$ & $56.4 \pm 1.0$ \\
RNaD & $63.4 \pm 1.9$ & $64.8 \pm 1.5$ & $49.7 \pm 1.6$ & -- & $56.4 \pm 2.2$ \\
RNaD w/ replay & $57.5 \pm 1.6$ & $58.7 \pm 1.7$ & $43.6 \pm 1.0$ & $43.6 \pm 2.2$ & -- \\
\midrule
\multicolumn{6}{c}{\textit{20K gradient steps}} \\
\midrule
MMD & -- & $51.4 \pm 3.9$ & $38.4 \pm 1.7$ & $37.2 \pm 2.4$ & $41.6 \pm 1.6$ \\
ND[MMD] & $48.6 \pm 3.9$ & -- & $37.4 \pm 2.5$ & $36.0 \pm 1.9$ & $40.4 \pm 2.4$ \\
ND[RNaD] & $61.6 \pm 1.7$ & $62.6 \pm 2.5$ & -- & $49.9 \pm 1.4$ & $54.9 \pm 1.3$ \\
RNaD & $62.8 \pm 2.4$ & $64.0 \pm 1.9$ & $50.1 \pm 1.4$ & -- & $55.7 \pm 2.4$ \\
RNaD w/ replay & $58.4 \pm 1.6$ & $59.6 \pm 2.4$ & $45.1 \pm 1.3$ & $44.3 \pm 2.4$ & -- \\
\midrule
\multicolumn{6}{c}{\textit{30K gradient steps}} \\
\midrule
MMD & -- & $51.1 \pm 1.4$ & $37.9 \pm 2.3$ & $37.1 \pm 1.6$ & $41.5 \pm 1.5$ \\
ND[MMD] & $48.9 \pm 1.4$ & -- & $37.1 \pm 1.8$ & $36.1 \pm 1.9$ & $40.6 \pm 1.7$ \\
ND[RNaD] & $62.1 \pm 2.3$ & $62.9 \pm 1.8$ & -- & $49.7 \pm 1.3$ & $54.8 \pm 1.4$ \\
RNaD & $62.9 \pm 1.6$ & $63.9 \pm 1.9$ & $50.3 \pm 1.3$ & -- & $55.8 \pm 1.4$ \\
RNaD w/ replay & $58.5 \pm 1.5$ & $59.4 \pm 1.7$ & $45.2 \pm 1.4$ & $44.2 \pm 1.4$ & -- \\
\bottomrule
\end{tabular}
\caption{Head-to-head play of 1M games for each seed pairing and seat in stochastic 13-card Imperfect Information Goofspiel: win rate (\%) of the row method against the column method, mean $\pm 2\sigma$ over training seeds. ND denotes NashDreamer with the given actor-critic inside imagination.}
\label{tab:hth_goofspiel}
\end{table}

In Goofspiel-13, head-to-head play attributes no benefit at all to the world model: NashDreamer[RNaD] and RNaD are indistinguishable in direct play, and each fares the same against the rest of the pool. Under best-response search the two are far apart, $0.168$ against $0.356$ at 30K.

In Battleship the disagreement runs the other way. NashDreamer[RNaD] wins the cross-play table outright, yet at 30K it is the second most exploitable of the five methods ($1.169$), whereas MMD, which loses to every other method in direct play, is harder to exploit ($1.022$), and NashDreamer[MMD] is the hardest of all ($0.817$). 

Neither observation contradicts the results of Section~\ref{sec:approx_expl}, since head-to-head measures performance across a particular opponent pool, rather than how hard the policy is to exploit overall. We therefore report exact NashConv wherever it is tractable (Section~\ref{sec:sample_efficiency}), budgeted approximate exploitability where it is not (Section~\ref{sec:approx_expl}), and treat the tables above as evidence about pool performance alone.

\clearpage
\section{World-Model Warm-Up Ablation}
\label{app:warmup}

The warm-up is a stabilizing mechanism rather than a theoretical requirement (Appendix~\ref{app:algorithm}). It supplements the DreamerV3 practice of zero-initializing the reward and critic heads (Appendix~\ref{app:hyperparameters}), compensating for the additional non-stationarity that the opponent introduces in the multi-agent setting. We sweep its length over $\{0, 250, 500, 1000, 2000\}$ gradient steps in Leduc Poker, $1000$ being the value used in every other Leduc experiment, running both actor-critics inside imagination at each setting with model-free RNaD and MMD for reference (Figure~\ref{fig:warmup}). Runs are $10{,}000$ gradient steps, since the parameter acts only early in training; the remaining configuration is unchanged.

NashDreamer[RNaD] is invariant to the setting: no value, zero included, separates from the others by more than the spread across training seeds. Nor does a longer warm-up improve the asymptote, as expected given that the Leduc model remains badly learned at every setting (Appendix~\ref{app:diagnostics}). The warm-up is therefore a stability parameter, not a lever on final policy quality.

For MMD, warm-ups of $0$ and $250$ steps accelerate early convergence relative both to the longer settings and to model-free MMD, while $500$ and above are indistinguishable from the model-free baseline. This supports the explanation in Section~\ref{sec:sample_efficiency} for NashDreamer[MMD]'s lack of gain on the small games: MMD converges inside the warm-up window, leaving imagination nothing to accelerate.

\begin{figure}[h]
    \centering
    \includegraphics[width=0.9\linewidth]{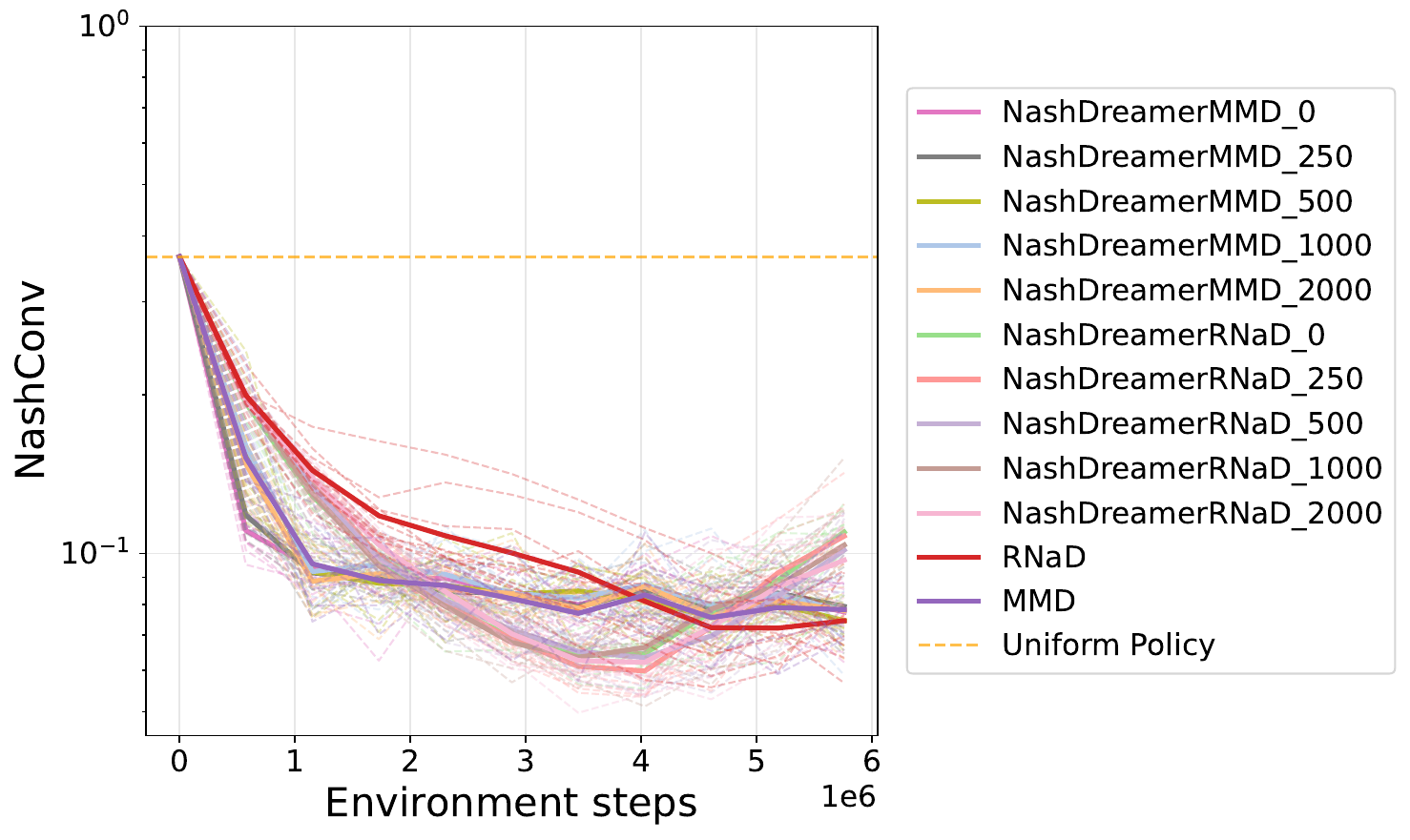}
    \caption{World-model warm-up sweep in Leduc Poker over $\{0, 250, 500, 1000, 2000\}$ gradient steps, applied to both actor-critics inside imagination, with model-free RNaD and MMD for reference. Legend entries are labelled by the warm-up length in gradient steps. Bold curves are means over training seeds, faint curves the individual seeds. NashDreamer[RNaD] is largely unaffected by the setting, whereas NashDreamer[MMD] separates from model-free MMD in the early phase only at the two shortest warm-ups.}
    \label{fig:warmup}
\end{figure}